\newcommand{\PreserveBackslash}[1]{\let\temp=\\#1\let\\=\temp}
\newcolumntype{C}[1]{>{\PreserveBackslash\centering}p{#1}}
\newcolumntype{R}[1]{>{\PreserveBackslash\raggedleft}p{#1}}
\newcolumntype{L}[1]{>{\PreserveBackslash\raggedright}p{#1}}
\newtheorem{thm}{Theorem}
\newtheorem{lemma}{Lemma}
\newtheorem*{obs}{Observation}
\newtheorem{theorem}{Theorem}
\theoremstyle{remark}
\newcommand{\R}{\mathbb{R}}
\newcommand{\e}{\begin{equation}}
\newcommand{\ee}{\end{equation}}
\newcommand{\en}{\begin{equation*}}
\newcommand{\een}{\end{equation*}}
\newcommand{\eqn}{\begin{eqnarray}}
\newcommand{\eeqn}{\end{eqnarray}}
\newcommand{\bmat}{\begin{bmatrix}}
\newcommand{\emat}{\end{bmatrix}}
\DeclareMathAlphabet\mathbfcal{OMS}{cmsy}{b}{n}
\newcommand{\dif}{\operatorname{d}}
\newcommand{\mb}{\mathbf}
\newcommand{\mc}{\mathcal}
\newcommand{\bb}{\mathbb}
\newcommand{\vct}[1]{\mathbf{#1}}
\newcommand{\mtx}[1]{\mathbf{#1}}
\newcommand{\T}{\mathrm{T}}
\newcommand{\trace}{\operatorname{trace}}
\newcommand{\rank}{\operatorname{rank}}
\newcommand{\sign}{\operatorname{sign}}
\def \st {\operatorname*{s.t.\ }}
\newcommand{\wh}{\widehat}
\newcommand{\wt}{\widetilde}
\newcommand{\norm}[2]{\left\| #1 \right\|_{#2}}
\newcommand{\abs}[1]{\left| #1 \right|}
\newcommand{\parans}[1]{\left(#1\right)}
\newcommand{\innerprod}[2]{\left\langle #1,  #2 \right\rangle}
\newcommand{\calA}{\mathcal{A}}
\newcommand{\calL}{\mathcal{L}}
\newcommand{\vb}{\vct{b}}
\newcommand{\vg}{\vct{g}}
\newcommand{\vh}{\vct{h}}
\newcommand{\vo}{\vct{o}}
\newcommand{\vr}{\vct{r}}
\newcommand{\vs}{\vct{s}}
\newcommand{\vy}{\vct{y}}
\newcommand{\vnu}{\vct{\nu}}
\newcommand{\vxi}{\vct{\xi}}
\newcommand{\vzero}{\vct{0}}
\newcommand{\vone}{\vct{1}}
\newcommand{\mA}{\mtx{A}}
\newcommand{\mG}{\mtx{G}}
\newcommand{\mH}{\mtx{H}}
\newcommand{\mO}{\mtx{O}}
\newcommand{\mS}{\mtx{S}}
\newcommand{\mU}{\mtx{U}}
\newcommand{\mX}{\mtx{X}}
\newcommand{\mY}{\mtx{Y}}
\newcommand{\mZ}{\mtx{Z}}
\newcommand{\mGamma}{\mtx{\Gamma}}
\newcommand{\mLambda}{\mtx{\Lambda}}
\newcommand{\mId}{{\bf I}}
\newcommand{\mzero}{{\bf 0}}
\newlength{\imgwidth}
\newcommand{\twoCol}[2]{\ifthenelse{\boolean{twoColVersion}} {#1} {#2} }
\newcommand{ \Brac }[1]{\left\lbrace #1 \right\rbrace}
\newcommand{ \brac }[1]{\left[ #1 \right]}
\newcommand{ \paren }[1]{ \left( #1 \right) }
\newcommand{\myparagraph}[1]{\smallskip\noindent\textbf{#1.}}
\newcommand\blfootnote[1]{%
	\begingroup
	\renewcommand\thefootnote{}\footnote{#1}%
	\addtocounter{footnote}{-1}%
	\endgroup
}
\title{Robust Recovery via Implicit Bias of Discrepant Learning Rates for Double Over-parameterization}
\author{%
Chong You$^\dagger$ \;\;\;\;\;\;\;\;\;
Zhihui Zhu$^\ddagger$ \;\;\;\;\;\;\;
Qing Qu$^\sharp$ \;\;\;\;\;\;\;
Yi Ma$^\dagger$
\\
\vspace*{-0.05in} 
\\
$^\dagger$Department of EECS, University of California, Berkeley\\
$^\ddagger$Department of Electrical and Computer Engineering, University of Denver\\
$^\sharp$Center for Data Science, New York University\\
}
\begin{document}

\maketitle
\blfootnote{\hspace*{-1.4mm}$^*$The first two authors contributed equally to this work.}

\begin{abstract}
Recent advances have shown that implicit bias of gradient descent on over-parameterized models enables the recovery of low-rank matrices from linear measurements, even with no prior knowledge on the intrinsic rank. In contrast, for \emph{robust} low-rank matrix recovery from \emph{grossly corrupted} measurements, over-parameterization leads to overfitting without prior knowledge on both the intrinsic rank and sparsity of corruption. This paper shows that with a \emph{double over-parameterization} for both the low-rank matrix and sparse corruption, gradient descent with {\em discrepant learning rates} provably recovers the underlying matrix even without prior knowledge on neither rank of the matrix nor sparsity of the corruption. We further extend our approach for the robust recovery of natural images by over-parameterizing images with deep convolutional networks. Experiments show that our method handles different test images and varying corruption levels with a single learning pipeline where the network width and termination conditions do not need to be adjusted on a case-by-case basis. Underlying the success is again the implicit bias with discrepant learning rates on different over-parameterized parameters, which may bear on broader applications.

\end{abstract}


\section{Introduction}\label{sec:intro}

Learning \emph{over-parameterized models}, which have more parameters than the problem's intrinsic dimension, is becoming a crucial topic in machine learning \cite{soudry2018implicit,vaskevicius2019implicit,zhao2019implicit,wu2020hadamard,oymak2019overparameterized,gunasekar2017implicit,li2018algorithmic,arora2019implicit,belabbas2020implicit,gunasekar2018implicit,gidel2019implicit}. While classical learning theories suggest that over-parameterized models tend to \emph{overfit} \cite{alpaydin2020introduction}, recent advances showed that an optimization algorithm may produce an \emph{implicit bias} that regularizes the solution with desired properties. This type of results has led to new insights and better understandings on gradient descent for solving several fundamental problems, including logistic regression on linearly separated data \cite{soudry2018implicit}, compressive sensing \cite{vaskevicius2019implicit,zhao2019implicit}, sparse phase retrieval \cite{wu2020hadamard}, nonlinear least-squares \cite{oymak2019overparameterized}, low-rank (deep) matrix factorization \cite{gunasekar2017implicit,li2018algorithmic,arora2019implicit,belabbas2020implicit}, and deep linear neural networks \cite{gunasekar2018implicit,gidel2019implicit}, etc.

Inspired by these recent advances \cite{soudry2018implicit,vaskevicius2019implicit,zhao2019implicit,wu2020hadamard,oymak2019overparameterized,gunasekar2017implicit,li2018algorithmic,arora2019implicit,belabbas2020implicit,gunasekar2018implicit,gidel2019implicit}, in this work we present a new type of practical methods for \emph{robust} recovery of \emph{structured} signals via model over-parameterization. In particular, we aim to learn an unknown signal $\mb X_\star \in \bb R^{n \times n}$ from its \emph{grossly corrupted} linear measurements
\begin{align}\label{eqn:problem}
    \mb y \;=\; \mc A(\mb X_\star) \;+\; \mb s_\star,
\end{align}
where the linear operator $\calA (\cdot):\R^{n\times n}\mapsto \R^m$, and $\mb s_\star \in \bb R^m $ is a (sparse) corruption vector. Variants of the problem appear ubiquitously in signal processing and machine learning \cite{candes2011robust,weller2015undersampled,li2016low,duchi2019solving,chi2019nonconvex,li2020nonconvex}.

\myparagraph{Robust recovery of low-rank matrices}
Low-rank matrix recovery has broad applications in face recognition \cite{candes2011robust} (where self-shadowing, specularity, or saturation in brightness can be modeled as outliers), video surveillance \cite{candes2011robust} (where the foreground objects are usually modeled as outliers) and beyond. A classical method for low-rank matrix recovery is via 
\emph{nuclear norm}\footnote{Nuclear norm 
(the tightest convex envelope to matrix rank) is defined as the sum of singular values.} minimization.
Such a method is provably correct under certain incoherent conditions \cite{candes2011robust,chandrasekaran2011rank}. However, minimizing nuclear norm involves expensive computations of singular value decomposition (SVD) of large matrices \cite{lin2010augmented} (when $n$ is large), which prohibits its application to problem size of practical interest. 


The computational challenge has been addressed by recent development of matrix factorization (MF) methods \cite{jain2017non,chi2019nonconvex}. 
Such methods are based on parameterizing the signal $\mb X \in \bb R^{n \times n}$ via the factorization $\mb X= \mb U \mb U^\top$ \cite{burer2003nonlinear}, and solving the associated nonconvex optimization problem with respect to (\emph{w.r.t.}) $\mb U\in \bb R^{n\times r}$, where $r$ is the rank of $\mb X_\star$. Since with $r \ll n$ the problem is of much smaller size, it leads to more scalable methods while still enjoys guarantees for correctness \cite{bhojanapalli2016global,ge2016matrix,zhu2018global,chi2019nonconvex}. However, the effectiveness of such \emph{exact-parameterization} based method hinges on the exact information of the rank of $\mb X_\star$ and the percentage of corrupted entries in $\mb s_\star$ (see \Cref{fig:RPCA-overview}), which are usually \emph{not} available \emph{a priori} in practice.



\begin{figure}
	\begin{subfigure}{0.47\linewidth}
		\centerline{
			\includegraphics[width=\linewidth]{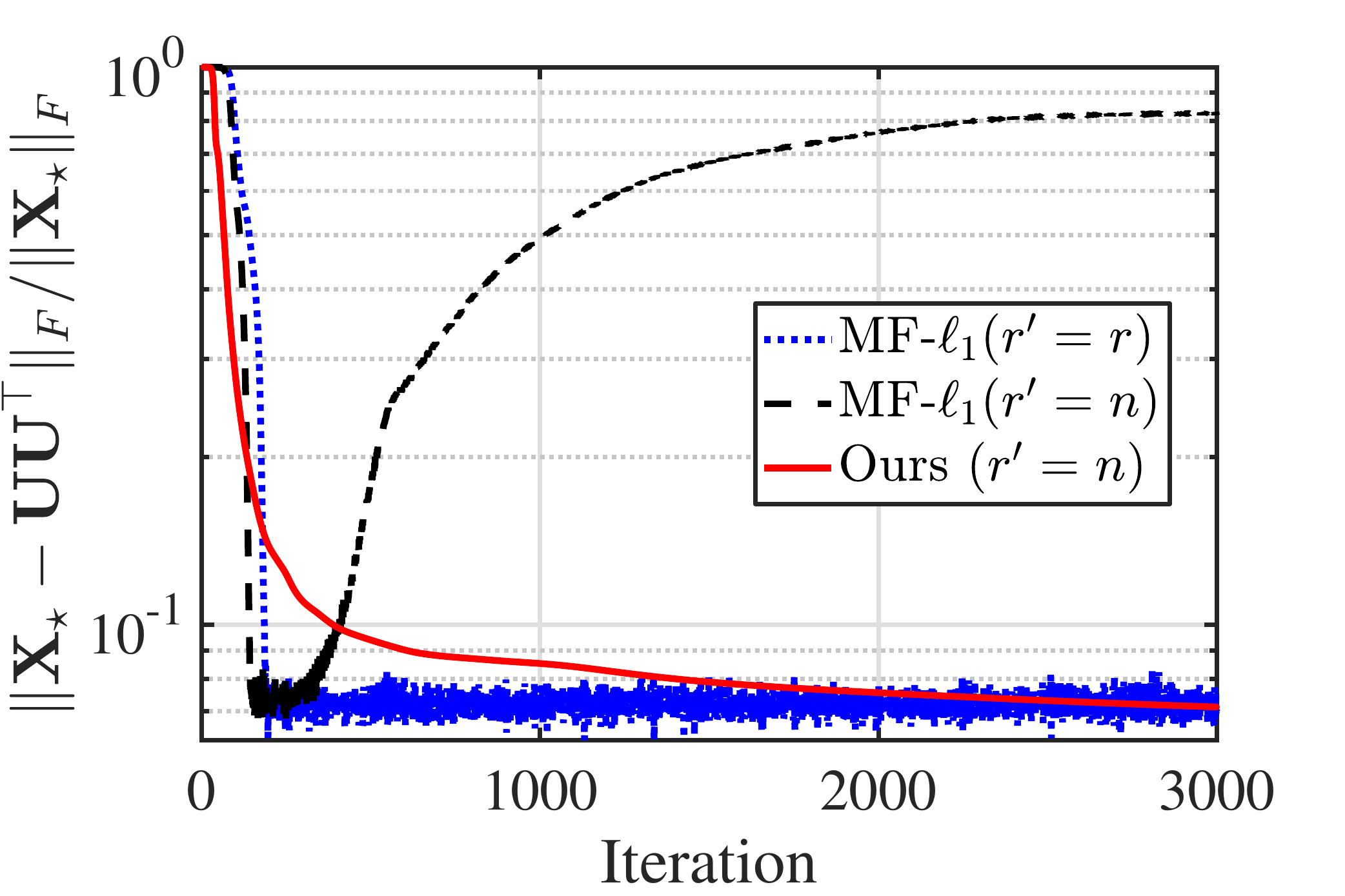}}
		\caption{Robust Matrix Recovery}\label{fig:RPCA-overview}
\end{subfigure}
\hfill
	\begin{subfigure}{0.47\linewidth}
		\centerline{\includegraphics[width=\linewidth]{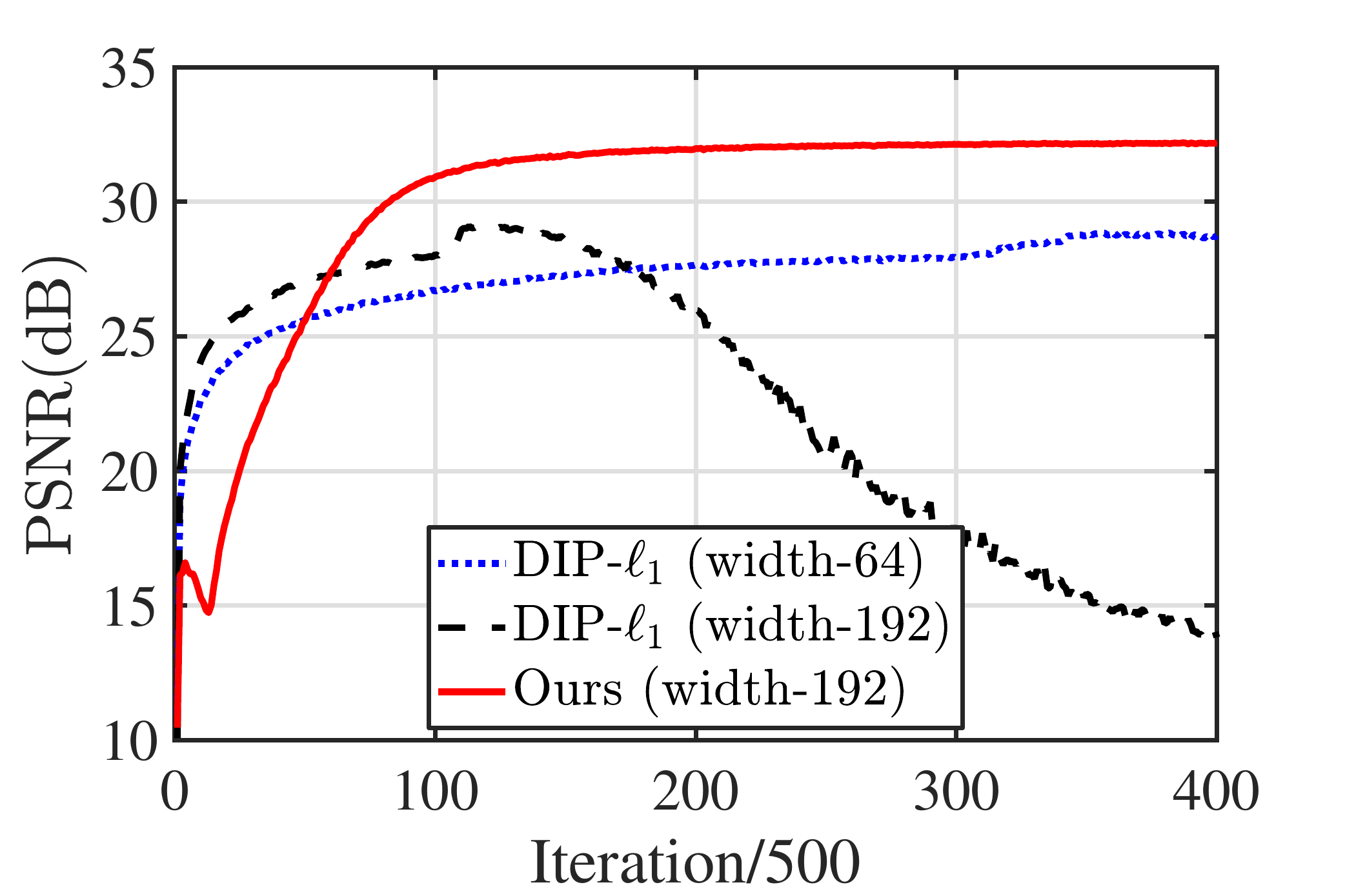}}
\caption{Robust Image Recovery}\label{fig:DIP-overview}
	\end{subfigure}\vspace{-5pt}
	\caption{\footnotesize \textbf{Learning curves for robust recovery of low-rank matrices (a) and natural images (b).} (a) Classical matrix factorization (MF) method with $\ell_1$ penalty requires exact parameterization (left blue), otherwise over-parameterization leads to overfitting without early termination (left black). 
	(b) Previous DIP method with $\ell_1$ penalty requires tuning network width (with width\;=\;$64$, right blue) or early termination (with width\;=\;$192$, right black). For both problems, our method  with double over-parameterization achieves superior performance and requires neither early termination nor precise parameterization (red curves). 
	}\label{fig:result-overview}
\vspace{-14pt}
\end{figure}


\myparagraph{Robust recovery of natural images} 
Robust recovery of natural images is often considered as a challenging task due to the lack of \emph{universal} mathematical modeling for natural images. While sparse and low-rank based methods have been demonstrated to be effective for years \cite{mairal2009non,dabov2007image,dabov2009bm3d,gu2014weighted,xu2018trilateral,lecouat2020fully}, the state-of-the-art performance is obtained by learned priors with deep convolutional neural networks. 
Such methods operate by end-to-end training of neural networks from pairs of corrupted/clean images \cite{zhang2017beyond,yue2019variational,zamir2020cycleisp}, and often \emph{cannot} effectively handle test cases with corruption type and noise level that are different from those of the training data. 

Recently, this challenge has been partially addressed by the so-called \emph{deep image prior} (DIP) \cite{ulyanov2018deep}, which has shown impressive results on many image processing tasks. The method is based on parameterizing an image $\mb X$ by a deep network $\mb X = \phi(\bm \theta)$, which turns out to be a flexible prior for modeling the underlying distributions of a variety of natural images. 
The network $\phi(\bm \theta)$ has a U-shaped architecture and may be viewed as a multi-layer, nonlinear extension of the low-rank matrix factorization $\mb U \mb U^\top$. 
Hence, it may not come as a surprise that DIP also inherits the drawbacks of the exact-parameterization approach for low-rank matrix recovery. Namely, it requires either a meticulous choice of network width \cite{heckel2018deep} or early stopping of the learning process \cite{heckel2019denoising} (see \Cref{fig:DIP-overview}).

\myparagraph{Overview of our methods and contributions} 
In this work, we show that the challenges associated with the exact-parameterization methods can be simply and effectively dealt with via \emph{over-parameterization} and {\em discrepant learning rates}. Our work is inspired by the recent results \cite{gunasekar2017implicit,li2018algorithmic} for low-rank matrix recovery with $\vs_\star = \vzero$, which state that the gradient descent on
\begin{equation}\label{eq:over-low-rank}
    \min_{\mU \in \bb R^{n \times r'}} \frac{1}{2} \norm{\calA(\mU \mU^\top) - \vy}{2}^2
\end{equation}
converges to a low-rank regularized solution with $\rank(\mb U) = \rank(\mX_\star)$ even when $\mb U\in \bb R^{n \times r'} $ over-parameterizes $\mX_\star$ with $r' = n \gg r$. 
In the presence of sparse corruption $\vs_\star$, it is tempting to replace the $\ell_2$-norm in \eqref{eq:over-low-rank} with an $\ell_1$-norm\footnote{This is a commonly used approach for robust optimization problems, such as robust regression~\cite{rousseeuw2005robust}, robust subspace learning \cite{zhu2018dual,qu2020finding,jiang2018nonconvex}, robust phase retrieval~\cite{duchi2019solving,davis2017nonsmooth}, robust matrix recovery \cite{li2016low,li2020nonconvex}, and many more.} and solve
\begin{equation}\label{eq:single-over-param}
    \min_{\mU \in \bb R^{n \times r'}}\; \norm{\calA(\mU \mU^\top) - \vy}{1}.
\end{equation}
Unfortunately, such a naive approach easily \emph{overfits} the corruptions with $r' = n$ (see \Cref{fig:RPCA-overview}). 

In this work, we introduce a \emph{double over-parameterization} method for robust recovery. 
Our method is based on over-parameterizing both the low-rank matrix $\mX_\star$ and the outliers $\vs_\star$, and leveraging implicit algorithmic bias to find the correct solution $(\mb X_\star, \mb s_\star)$.
The benefit of such an approach \emph{w.r.t.} the state of the art is summarized as follows (see also Table~\ref{table:overview}):
 

\begin{itemize}[leftmargin=*,topsep=-0.5pt,itemsep=-0.5pt]
    \item \emph{More scalable.} Our method is based on gradient descent only and does not require computing an SVD per iteration as in convex approaches \cite{candes2011robust}. Hence it is significantly more scalable.
    \item \emph{Prior knowledge free.} Our method requires \emph{no} prior knowledge on neither the rank of $\mX_\star$ nor the sparsity of $\vs_\star$, and is free of parameter tuning. This is \emph{in contrast} to existing nonconvex approaches \cite{li2016low,ge2016matrix,li2020nonconvex} for which the true rank and/or sparsity are required \emph{a priori}.
    \item \emph{Provably correct.} 
    Under similar conditions of the convex approach, our method converges to the ground truth $(\mX_\star, \vs_\star)$ asymptotically. 
\end{itemize}

\begin{wraptable}{r}{0.59\linewidth}
\vspace{-9pt}
    \begin{footnotesize}
        \caption{\footnotesize \label{table:overview} Comparison of different approaches to matrix recovery.}
        \begin{tabular}{c|@{}C{1.5cm}@{}C{1.9cm}@{}C{1.5cm}@{}}
        \toprule
        Methods & 
        \begin{tabular}{c} Convex \\ (Eq.~\eqref{eq:rpca-cvx-weight}) 
        \end{tabular}
        & \begin{tabular}{c} Nonconvex\\  (Eq.~\eqref{eq:single-over-param}) 
        \end{tabular}
        & \begin{tabular}{c} Ours\\  (Eq.~\eqref{eq:rms-over-1}) 
        \end{tabular}\\
        \midrule
        Provably correct? & $\checkmark$ & $\checkmark$ & $\checkmark$ \\
        \midrule
        Prior knowledge free? & $\checkmark$ &   & $\checkmark$   \\
        \midrule
        Scalable? &  & $\checkmark$ & $\checkmark$  \\
        \bottomrule
        \end{tabular}
    \end{footnotesize}
\vspace{-8pt}
\end{wraptable} 
Underlying the success of our method is the notion of \emph{implicit bias of discrepant learning rates}. The idea is that the algorithmic low-rank and sparse regularizations need to be balanced for the purpose of identifying the underlying rank and sparsity. 
With a lack of means of tuning a regularization parameter, we show in \Cref{sec:implicit-bias-learn-rate} that the desired balance can be obtained by using \emph{different} learning rates for different optimization parameters.
Such a property may be of separate interest and bear on a broader range of problems.


Finally, we demonstrate the broad applicability of our ideas on the task of the robust recovery of natural images. 
We only need to replace the over-parameterization $\mU \mU^\top$ 
for low-rank matrices by a U-shaped network $\phi(\bm \theta)$ for natural images (as adopted in DIP \cite{ulyanov2018deep}) and solve the optimization problem by gradient descent. 
This produces a powerful and easy-to-use method with favorable properties when compared to the original DIP (see \Cref{fig:DIP-overview}). To be precise, our method handles different test images and varying corruption levels with a single learning pipeline, in which network width and termination conditions do not need to be adjusted on a case-by-case basis.

\section{Main Results and Algorithms}\label{sec:main}



\subsection{A Double Over-Parameterization Formulation}\label{subsec:double-formulation}
\vspace{-0.3em}
As precluded in \eqref{eqn:problem}, we first consider the problem of recovering a rank-$r$ ($r \ll n$) positive semidefinite  matrix\footnote{By a lifting trick such as \cite{tu2016low,park2017non}, our method can be extended to handling arbitrary rectangular matrices.} $\mb X_\star \in \bb R^{n \times n}$ from its corrupted linear measurements $\mb y = \mc A(\mb X_\star) + \mb s_\star$, where $\mb s_\star \in \bb R^m$ is a vector of sparse corruptions. Recent advances on algorithmic implicit bias for optimizing over-parameterized models \cite{gunasekar2017implicit,li2018algorithmic,arora2019implicit,belabbas2020implicit,vaskevicius2019implicit,zhao2019implicit} motivate us to consider a nonlinear least squares for robust matrix recovery, with double over-parameterization $\mb X= \mb U\mb U^\top$ and $\mb s = \mb g\circ \mb g - \mb h \circ \mb h$:
\begin{align}\label{eq:rms-over-1}
    \min_{\mb U \in \bb R^{n \times r'} , \{\mb g, \mb h\} \subseteq \bb R^m }\; f(\mb U, \mb g,\mb h) \;:=\; \frac{1}{4}  \norm{  \mc A\paren{\mb U \mb U^\top} + \left(\mb g \circ \mb g - \mb h \circ \mb h\right) - \mb y }{2}^2,
\end{align}
where the dimensional parameter $r'\geq r$. In practice, the choice of $r'$ depends on how much prior information we have for $\mb X_\star$: $r'$ can be either taken as an estimated upper bound for $r$, or taken as $r' = n$ with no prior knowledge. In the following, we provide more backgrounds for the choice of our formulation \eqref{eq:rms-over-1}.

\begin{itemize}[leftmargin=*,topsep=-2pt]
\item \emph{Implicit low-rank prior via matrix factorization.} For the \emph{vanilla} low rank matrix recovery problem with no outlier (i.e., $\mb s_\star=\mb 0$), the low-rank matrix $\mb X_\star$ can be recovered via over-parameterization $ \mb X = \mb U\mb U^\top$ \cite{gunasekar2017implicit,li2018algorithmic,arora2019implicit,belabbas2020implicit}. In particular, the work \cite{gunasekar2017implicit} showed that  with small initialization and infinitesimal learning rate, gradient descent on \eqref{eq:over-low-rank} converges to the minimum nuclear norm solution under certain commute conditions for $\calA(\cdot)$. 
\item \emph{Implicit sparse prior via Hadamard multiplication.} For the classical sparse recovery problem \cite{candes2006robust,candes2008introduction} which aims to recover a sparse $\vs_\star \in \mathbb R^m$ from its linear measurement $\vb = \mA \vs_\star$, recent work \cite{vaskevicius2019implicit,zhao2019implicit} showed that it can also be dealt with via over-parameterization $\vs = \vg \circ \vg - \vh \circ \vh$. In particular, the work \cite{vaskevicius2019implicit} showed that with small initialization and infinitesimal learning rate, gradient descent on 
\begin{equation}
    \min_{\{\vg, \vh\} \subseteq \bb R^{m}} \; \norm{\mA (\vg \circ \vg - \vh \circ \vh) - \vb}{2}^2
\end{equation}
correctly recovers the sparse $\vs_\star$ when $\mA$ satisfies certain restricted isometric properties \cite{candes2008restricted}.
\end{itemize}

The benefit of the double over-parameterization formulation in \eqref{eq:rms-over-1}, as we shall see, is that it allows specific algorithms to \emph{automatically} identify both the intrinsic rank of $\mX_\star$ and the sparsity level of $\vs_\star$ without any prior knowledge.

\subsection{Algorithmic Regularizations via Gradient Descent}\label{sec:gd}
\vspace{-0.3em}
Obviously, over-parameterization leads to \emph{under-determined} problems which can have infinite number of solutions, so that not all solutions of \eqref{eq:rms-over-1} correspond to the desired $(\mX_\star,\vs_\star)$. For example, for any given $\mb U$, one can always construct a pair $(\mb g,\mb h)$ for \eqref{eq:rms-over-1} such that $(\mb U,\mb g, \mb h)$ achieves the global minimum value $f=0$. Nonetheless, as we see in this work, the gradient descent iteration on \eqref{eq:rms-over-1},
\begin{align}\label{eqn:grad-nls}
\begin{split}
        \mb U_{k+1} \;&=\; \mb U_{k} \;-\; \tau \cdot \nabla_{\mb U}  f(\mb U_{k},\mb g_{k},\mb h_{k}) \;=\; \mb U_k \;-\; \tau \cdot \mc A^* \paren{ \mb r_k  } \mb U_k, \\
   \begin{bmatrix}
   \mb g_{k+1} \\ \mb h_{k+1}
   \end{bmatrix} \;&=\; \begin{bmatrix} \mb g_{k} \\ \mb h_{k} \end{bmatrix}
    \;-\;  \alpha \cdot \tau \cdot \begin{bmatrix}  \nabla_{\mb g}  f(\mb U_{k},\mb g_{k},\mb h_{k}) \\ \nabla_{\mb h}  f(\mb U_{k},\mb g_{k},\mb h_{k})  \end{bmatrix} \;=\; \begin{bmatrix}  \mb g_{k} \\ \mb h_{k} \end{bmatrix} \; -\; \alpha \cdot \tau \cdot \begin{bmatrix}
      \mb r_{k} \circ \mb g_{k} \\
     -  \mb r_{k} \circ \mb h_{k} 
    \end{bmatrix},
\end{split} 
\end{align}
with properly chosen learning rates $(\tau,\; \alpha \cdot \tau)$ enforces implicit bias on the solution path, that it automatically identifies the desired, regularized solution $(\mX_\star,\vs_\star)$. Here, in \eqref{eqn:grad-nls} we have $\mc A^*(\cdot): \bb R^m \mapsto \bb R^{n \times n}$ being the conjugate operator of $\mc A(\cdot)$ and 
$\mb r_{k} = \mc A( \mb U_{k} \mb U_{k}^\top ) + \mb g_{k} \circ \mb g_{k} - \mb h_{k} \circ \mb h_{k} - \mb y$.

It should be noted that the scalar $\alpha > 0$, which controls the ratio of the learning rates for $\mU$ and $(\vg,\vh)$, plays a crucial role on the quality of the solution that the iterate in \eqref{eqn:grad-nls} converges to (see \Cref{fig:RPCA-vary-stepsize}). We will discuss this in more details in the next subsection (i.e., \Cref{sec:implicit-bias-learn-rate}).

\myparagraph{Convergence to low-rank \& sparse solutions}
Based on our discussion in \Cref{subsec:double-formulation}, it is expected that the gradient descent \eqref{eqn:grad-nls} converges to a solution $(\mU, \vg, \vh)$ such that
\begin{align}\label{eqn:nls-solution}
  \mb X \;=\; \mb U \mb U^\top\qquad\text{and}\qquad \mb s \;=\; \mb g \circ  \mb g \;-\; \mb h \circ \mb h
\end{align}
have the minimum nuclear norm and $\ell_1$ norm, respectively. More specifically, we expect the solution $(\mX, \vs)$ in \eqref{eqn:nls-solution} of the nonlinear least squares \eqref{eq:rms-over-1} also serves as a \emph{global} solution to a convex problem
\begin{align}\label{eq:rpca-cvx-weight}
\min_{\mX\in\R^{n\times n}, \; \mb s\in\R^{m}}\; \norm{\mX}{*} \;+\; \lambda \cdot  \norm{ \vs }{1}, \quad \st \;\; \calA(\mX) + \vs \;=\; \mb y,\;\; \mb X \succeq \mb 0,
\end{align}
for which we state more rigorously in \Cref{sec:dynamical} under a particular setting. However, it should be noted that obtaining the global solution of \eqref{eq:rpca-cvx-weight} does not necessarily mean that we find the desired solution $(\mb X_\star, \mb s_\star)$ --- the penalty $\lambda > 0$ in \eqref{eq:rpca-cvx-weight}, which balances the low-rank and the sparse regularizations, plays a crucial role on the quality of the solution to \eqref{eq:rpca-cvx-weight}. For instance, when $\calA(\cdot)$ is the identity operator, under proper conditions of $(\mb X_\star,\mb s_\star)$, the work \cite{candes2011robust} showed that the global solution of \eqref{eq:rpca-cvx-weight} is exactly the target solution $\paren{ \mb X_\star, \mb s_\star }$ \emph{only} when $\lambda = 1/\sqrt{n}$. Therefore, choosing the right $\lambda$ is critical for a  successful recovery of $(\mb X_\star,\mb s_\star)$. 


\subsection{Implicit Bias with Discrepant Learning Rates}\label{sec:implicit-bias-learn-rate}
\vspace{-0.3em}
As noted above, a remaining challenge is to control the implicit regularization of the gradient descent \eqref{eqn:grad-nls} so that the algorithm converges to the solution to \eqref{eq:rpca-cvx-weight} with the desired value $\lambda = 1 / \sqrt{n}$. 
Without explicit regularization terms in our new objective \eqref{eq:rms-over-1}, at first glance this might seem impossible. Nonetheless, we show that this can simply be achieved by adapting the ratio of learning rates $\alpha$ between $\mU$ and $(\vg, \vh)$ in our gradient step \eqref{eqn:grad-nls}, which is one of our key contributions in this work that could also bear broader interest. More specifically, this phenomenon can be captured by the following observation.




\begin{obs} 
With small enough learning rate $\tau$ and initialization $(\mU_{0}, \vg_{0}, \vh_{0})$ close enough to the origin, gradient descent \eqref{eqn:grad-nls} converges to a solution of \eqref{eq:rpca-cvx-weight} with $\lambda = 1/\alpha$.	
\end{obs}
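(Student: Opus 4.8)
The plan is to pass to the gradient-flow limit, extract the implicit regularization from conserved quantities of the flow, and match the resulting stationarity relations to the KKT system of the convex program \eqref{eq:rpca-cvx-weight}. As $\tau \to 0$, the iteration \eqref{eqn:grad-nls} is tracked on compact time intervals (by a standard Gr\"onwall estimate, as in the cited works) by the gradient flow $\dot{\mU} = -\calA^*(\vr)\,\mU$, $\dot{\vg} = -\alpha\,\vr \circ \vg$, $\dot{\vh} = \alpha\,\vr \circ \vh$, with $\vr = \calA(\mU\mU^\top) + \vg\circ\vg - \vh\circ\vh - \vy$. I would introduce the running dual variable $\vnu(t) := -\int_0^t \vr(\sigma)\,\dif\sigma$ and assume, as is made precise for the restricted setting of \Cref{sec:dynamical}, that the flow converges with $\vnu(t) \to \vnu_\infty$ finite, so that the limit point is feasible: $\calA(\mX_\infty) + \vs_\infty = \vy$, where $\mX_\infty = \mU_\infty \mU_\infty^\top$ and $\vs_\infty = \vg_\infty \circ \vg_\infty - \vh_\infty \circ \vh_\infty$.

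Next I would read off the implicit bias of the two blocks separately. For the sparse block, $\frac{\dif}{\dif t}\log g_i = -\alpha r_i$ and $\frac{\dif}{\dif t}\log h_i = \alpha r_i$, so $g_i h_i$ is conserved; with the symmetric small initialization $g_i(0) = h_i(0) = \epsilon$ this gives the closed form $s_i(t) = \epsilon^2\big(e^{2\alpha\nu_i(t)} - e^{-2\alpha\nu_i(t)}\big) = 2\epsilon^2\sinh\!\big(2\alpha\,\nu_i(t)\big)$, i.e.\ $2\alpha\,\nu_{\infty,i} = \operatorname{arcsinh}\!\big(s_{\infty,i}/(2\epsilon^2)\big)$ --- the first-order relation characterizing the implicit $\ell_1$-type bias (hyperbolic-entropy potential) of the Hadamard parametrization analyzed in \cite{vaskevicius2019implicit,zhao2019implicit}, up to the extra factor $\alpha$. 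For the matrix block, $\mX = \mU\mU^\top$ obeys $\dot{\mX} = -\calA^*(\vr)\mX - \mX\calA^*(\vr)$; under the commutativity hypothesis of \cite{gunasekar2017implicit} along the trajectory this integrates to $\mX(t) = \exp\!\big(\calA^*(\vnu(t))\big)\,\mX(0)\,\exp\!\big(\calA^*(\vnu(t))\big)$, which for $\mX(0) = \epsilon^2\mId$ (take $r'=n$; smaller $r'\ge r$ is similar) is $\mX(t) = \epsilon^2\exp\!\big(2\,\calA^*(\vnu(t))\big)$ --- the hallmark of the minimum-nuclear-norm bias of \cite{gunasekar2017implicit,li2018algorithmic}.

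The key observation is that the \emph{same} vector $\vnu_\infty$ drives both blocks, but enters the sparse block with an extra factor $\alpha$. Taking $\epsilon \to 0$ and rescaling, set $\bar\vnu := \lim_{\epsilon\to 0}\vnu_\infty/\log(1/\epsilon)$. From $\mX_\infty = \epsilon^2 \exp(2\calA^*(\vnu_\infty))$, the requirement that $\mX_\infty$ be bounded and nonzero forces $\lambda_{\max}\!\big(\calA^*(\vnu_\infty)\big) = \log(1/\epsilon) + O(1)$ with the top eigenspace containing $\Range(\mX_\infty)$, hence $\calA^*(\bar\vnu) \preceq \mId$ with equality on $\Range(\mX_\infty)$; that is, $\calA^*(\bar\vnu) \in \partial\|\mX_\infty\|_*$ (after absorbing the PSD multiplier for $\mX_\infty\succeq\mzero$). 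From $s_i^{(\epsilon)} = 2\epsilon^2\sinh(2\alpha\nu_{\infty,i})$: on $\operatorname{supp}(\vs_\infty)$ one has $2\alpha\nu_{\infty,i} = \log(1/\epsilon^2) + O(1)$, so $\bar\nu_i = \sign(s_{\infty,i})/\alpha$, while off the support $s_i^{(\epsilon)}\to 0$ forces $|\bar\nu_i| \le 1/\alpha$; hence $\alpha\,\bar\vnu \in \partial\|\vs_\infty\|_1$. Feasibility together with $\calA^*(\bar\vnu) \in \partial\|\mX_\infty\|_*$ and $\bar\vnu \in \tfrac{1}{\alpha}\partial\|\vs_\infty\|_1$ are precisely the KKT conditions for \eqref{eq:rpca-cvx-weight} with dual variable $\bar\vnu$ and penalty $\lambda = 1/\alpha$, so $(\mX_\infty,\vs_\infty)$ solves \eqref{eq:rpca-cvx-weight} with $\lambda = 1/\alpha$.

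The main obstacle I expect is establishing that the flow --- and the integral $\int_0^\infty\vr$ --- actually converges, together with the commutativity requirement that makes the matrix block integrate into the clean exponential form; these are exactly the conditions that confine a fully rigorous statement to the special setting (commuting measurement operators, balanced small initialization $\mU_0 = \epsilon\mId$, $\vg_0 = \vh_0 = \epsilon\vone$) treated in \Cref{sec:dynamical}. Secondary technical points are the off-support bound $|\bar\nu_i| \le 1/\alpha$ and the matching of the $\mU$- and $(\vg,\vh)$-initialization scales, which is what pins the constant to exactly $1/\alpha$ rather than a scheme-dependent multiple of it.
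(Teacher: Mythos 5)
Your proposal follows essentially the same route as the paper's Theorem~\ref{thm:dynamic-rms} and its proof in Appendix~\ref{sec:proof}: pass to the gradient flow, integrate the $\mU$ and $(\vg,\vh)$ dynamics into the exponential closed forms via the commutativity assumption, define the rescaled dual candidate $\bar\vnu = \lim_{\gamma\to 0}\vxi_\infty(\gamma)/\log(1/\gamma)$, and verify that it satisfies the KKT conditions of \eqref{eq:rpca-cvx-weight} with $\lambda = 1/\alpha$ by the same case analysis on the eigenvalues of $\wh{\mX}$ and the support of $\wh{\vs}$. The observations about the conserved product $g_i h_i$ and the hyperbolic-sine/arcsinh potential are nice reformulations but are mathematically equivalent to the paper's direct integration of the ODEs.
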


In comparison to the classical optimization theory \cite{nocedal2006numerical} where learning rates \emph{only} affect algorithm convergence rate but not the quality of the solution, our observation here is surprisingly different --- using discrepant learning rates on different over-parameterized variables actually affects the specific solution that the algorithm converges to\footnote{Our result also differs from \cite{li2019towards,you2019does,nakkiran2020learning} which analyze the effect of initial large learning rates.}. In the following, let us provide some intuitions of why this happens and discuss its implications. We leave a more rigorous mathematical treatment to \Cref{sec:dynamical}.


\myparagraph{Intuitions}
The relation $\lambda = 1/\alpha$ implies that a large learning rate for one particular optimization variable in \eqref{eqn:grad-nls} leads to a small penalty on its implicit regularization counterpart in \eqref{eq:rpca-cvx-weight}. From a high-level perspective, this happens because a larger learning rate allows the optimization variable to move faster away from its initial point (which is close to the origin), resulting in a weaker regularization effect (which penalizes the distance of the variable to the origin) on its solution path.


\myparagraph{Implications}
The implicit bias of discrepant learning rates provides a new and powerful way for controlling the regularization effects in over-parametrized models. 
For robust matrix recovery in particular, it reveals an equivalency between our method in \eqref{eq:rms-over-1} and the convex method in \eqref{eq:rpca-cvx-weight} with one-to-one correspondence between learning rate ratio $\alpha$ and the regularization parameter $\lambda$. 
By picking $\alpha = \sqrt{n}$, we may directly quote results from \cite{candes2011robust} and conclude that our method correctly recovers $\mX_\star$ with information-theoretically optimal sampling complexity and sparsity levels (see \Cref{fig:RPCA-phase}). 
Note that this is achieved with no prior knowledge on the rank of $\mX_\star$ and sparsity of $\vs_\star$. 
Next, we show that such benefits have implications beyond robust low-rank matrix recovery.

\subsection{Extension to Robust Recovery of Natural Images}
\vspace{-0.3em}
Finally, we address the problem of robust recovery of a natural image\footnote{Here, $H, W$ are height and width of the image, respectively. $C$ denotes the number of channels of the image, where $C=1$ for a greyscale image and $C=3$ for a color image with RGB channels.} $\mX_\star \in \mathbb R^{C \times H \times W}$ from its corrupted observation $\mb y = \mb X_\star + \mb s_\star$, for which the structure of $\mX_\star$ cannot be fully captured by a low-rank model. 
Inspired by the work \cite{ulyanov2018deep} on showing the effectiveness of an image prior from a deep convolutional network $\mb X = \phi(\bm \theta)$ of particular architectures, where $\bm \theta \in \mathbb R^c$ denotes network parameters, we use the following formulation for image recovery:
\begin{align}\label{eq:dip-double}
    \min_{\bm \theta  \in \mathbb R^c,\; \{\mb g, \mb h\} \subseteq \bb R^{C\times H \times W} }\; f(\bm \theta, \vg, \vh) \;=\; \frac{1}{4}  \norm{  \phi(\bm \theta) + \paren{\mb g \circ \mb g - \mb h \circ \mb h} - \mb y }{F}^2.
\end{align}
As we will empirically demonstrate in \Cref{sec:experiment-image}, gradient descent on \eqref{eq:dip-double}
\begin{align}\begin{split}
        \bm \theta_{k+1} \;&=\; \bm \theta_{k} \;-\; \tau \cdot \nabla_{\bm \theta}  f(\bm \theta_{k},\mb g_{k},\mb h_{k}), \\
   \begin{bmatrix}
   \mb g_{k+1} \\ \mb h_{k+1}
   \end{bmatrix} \;&=\; \begin{bmatrix} \mb g_{k} \\ \mb h_{k} \end{bmatrix}
    \;-\;  \alpha \cdot \tau \cdot \begin{bmatrix}  \nabla_{\mb g}  f(\bm \theta_{k},\mb g_{k},\mb h_{k}) \\ \nabla_{\mb h}  f(\bm \theta_{k},\mb g_{k},\mb h_{k})  \end{bmatrix},
\end{split} 
\end{align}
with a balanced learning rate ratio $\alpha$ also enforces implicit regularizations on the solution path to the desired solution. It should be noted that this happens even that the over-parameterization $\mb X = \phi(\bm \theta)$ is a highly \emph{nonlinear} network (in comparison with shallow linear network $\mb X = \mb U\mb U^\top$ \cite{gunasekar2017implicit} or deep linear network \cite{arora2019implicit}), which raises several intriguing theoretical questions to be further investigated.

\section{Convergence Analysis of Gradient Flow Dynamics}\label{sec:dynamical}

We provide a dynamical analysis certifying our observation in \Cref{sec:implicit-bias-learn-rate}. Similar to \cite{gunasekar2017implicit,arora2019implicit}, we consider a special case where the measurement matrices $\Brac{ \mb A_i }_{i=1}^m$ associated with $\calA$ commute\footnote{Any $\calA: \mathbb R^{n \times n} \to \mathbb R^m$ can be written as $\calA(\mZ) = [\langle\mA_1, \mZ\rangle, \ldots, \langle\mA_m, \mZ\rangle]$ for some $\{\mA_i \in \mathbb R^{n\times n}\}_{i=1}^m$.}, and investigate the trajectories of the discrete gradient iterate of $\mb U_k,\;\mb g_k$, and $\mb h_k$ in \eqref{eqn:grad-nls} with \emph{infinitesimal} learning rate $\tau$ (i.e., $\tau \rightarrow 0$). In other words, we study their \emph{continuous} dynamics counterparts $\mb U_t(\gamma)$, $\mb g_t(\gamma)$, and $\mb h_t(\gamma)$, which are parameterized by $t\in [0,+\infty) $ and initialized at $t=0$ with
\begin{align}\label{eqn:flow-initialization}
    \mb U_0(\gamma) \;=\; \gamma \mb I, \quad \mb g_0(\gamma) \;=\; \gamma \mb 1,\quad h_0(\gamma) \;=\; \gamma \mb 1,
\end{align}
where $\gamma>0$. Thus, analogous to \eqref{eqn:grad-nls}, the behaviors of the continuous gradient flows can be captured via the following differential equations 
\begin{align}
    \dot{ \mb U }_t(\gamma) \;&=\; \lim_{ \tau \rightarrow 0 } 
    \big(\mb U_{t +\tau }(\gamma) - \mb U_{t }(\gamma)\big) /{\tau } \;=\; - \mc A^* \paren{ \mb r_t(\gamma)  } \mb U_t(\gamma), \label{eqn:U-t} \\
     \begin{bmatrix} \dot{\mb g}_t(\gamma) \\ \dot{\mb h}_t(\gamma) \end{bmatrix} \;&=\; \lim_{ \tau \rightarrow 0 } \paren{ \begin{bmatrix} \mb g_{t+\tau}(\gamma) \\ \mb h_{t+\tau}(\gamma) \end{bmatrix} - \begin{bmatrix} \mb g_t(\gamma) \\ \mb h_t(\gamma) \end{bmatrix} } \;/\; {\tau } \;=\;  -\; \alpha \cdot \begin{bmatrix}
     \mb r_t(\gamma) \circ \mb g_t(\gamma) \\
     - \mb r_t(\gamma) \circ \mb h_t(\gamma),
    \end{bmatrix}, \label{eqn:g-h-t}
\end{align}
with $\mb r_t(\gamma) \;=\; \mc A( \mb U_t(\gamma) \mb U_t^\top(\gamma) ) + \mb g_t(\gamma) \circ \mb g_t(\gamma) - \mb h_t(\gamma) \circ \mb h_t(\gamma) - \mb y$. Let $ \mb X_t (\gamma) = \mb U_t(\gamma) \mb U_t^\top (\gamma)$, then we can derive the gradient flow for $\mb X_t(\gamma)$ via chain rule
\begin{align}\label{eqn:X-dot}
    \dot{\mb X}_t(\gamma) \;=\; \dot{\mb U}_t(\gamma) \mb U_t^\top(\gamma) \;+\; \mb U_t(\gamma) \dot{\mb U}_t^\top(\gamma) \;=\; - \mc A^* (\mb r_t(\gamma)) \mb X_t(\gamma) - \mb X_t(\gamma) \mc A^*(\mb r_t(\gamma)).
\end{align}
For any $\gamma>0$, suppose the limits of $ \mb X_t (\gamma)$, $\mb g_t(\gamma)$, and $\mb h_t(\gamma)$ as $t \rightarrow +\infty $ exist and denote by
\begin{align}\label{eqn:X-h-g-infty}
    \mb X_\infty\paren{ \gamma } \;:=\; \lim_{ t \rightarrow +\infty } \mb X_t(\gamma),\quad \mb g_\infty(\gamma) \;:=\; \lim_{ t \rightarrow +\infty } \mb g_t(\gamma),\quad \mb h_\infty(\gamma) \;:=\; \lim_{ t \rightarrow +\infty } \mb h_t(\gamma).
\end{align}
Then when the initialization is infinitesimally small with $\gamma \rightarrow 0 $, we show that the following holds.

\begin{theorem}\label{thm:dynamic-rms}
Assume that the measurement matrices $\Brac{\mb A_i}_{i=1}^m$ commute with $ \mb A_i \mb A_j \; =\; \mb A_j \mb A_i$ for $\forall \; 1 \leq i \not= j \leq m$, and the gradient flows of $\mb U_t(\gamma)$, $\mb g_t(\gamma)$, and $\mb h_t(\gamma)$ satisfy \eqref{eqn:U-t} and \eqref{eqn:g-h-t} and are initialized by \eqref{eqn:flow-initialization}.
Let $ \mb X_\infty\paren{ \gamma }$, $\mb g_\infty(\gamma)$, and $\mb h_\infty(\gamma)$ be the limit points as defined in \eqref{eqn:X-h-g-infty}. Suppose that our initialization is infinitesimally small such that
\begin{align*}
    \wh{\mb X} \;:=\; \lim_{\gamma \rightarrow 0 } \mb X_{\infty}(\gamma),\quad \wh{\mb g} \;:=\; \lim_{\gamma \rightarrow 0 } \mb g_{\infty}(\gamma),\quad \wh{\mb h} \;:=\; \lim_{\gamma \rightarrow 0 } \mb h_{\infty}(\gamma)
\end{align*}
exist and $( \wh{\mb X}, \wh{\mb g}, \wh{\mb h} )$ is a global optimal solution to \eqref{eq:rms-over-1} with
\begin{align*}
    \mc A(\wh{\mb X}) + \wh{\mb s} \;=\; \mb y \quad \text{and}\quad \wh{\mb s}\;=\; \wh{\mb g} \circ \wh{\mb g} - \wh{\mb h} \circ \wh{\mb h}.
\end{align*}
Then we have $\wh{\mb g} \circ \wh{\mb h} = \mb 0$, and $(\wh{\mb X},\wh{\mb s} )$ is also a global optimal solution to \eqref{eq:rpca-cvx-weight}, with $\lambda = \alpha^{-1}$.
\end{theorem}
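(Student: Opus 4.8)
The plan is to exploit the commuting assumption to diagonalize the entire flow, read off closed forms for $\mb U_t,\mb g_t,\mb h_t$, and then construct a dual certificate for the convex program \eqref{eq:rpca-cvx-weight} out of a logarithmically rescaled version of the (otherwise divergent) integrated residual. I would first do the eigenbasis reduction, following the commuting-case argument of \cite{gunasekar2017implicit}: since $\{\mb A_i\}$ are symmetric and pairwise commute they share an orthonormal eigenbasis $\mb V$, and writing $\mb A_i = \mb V\diag(\mb a_i)\mb V^\top$ and stacking the rows $\mb a_i^\top$ into $\mb b\in\R^{m\times n}$, the operators act in eigencoordinates as $\mc A(\cdot)\leftrightarrow(\mb x\mapsto\mb b\mb x)$ and $\mc A^*(\cdot)\leftrightarrow(\mb r\mapsto\diag(\mb b^\top\mb r))$. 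Because $\mb U_0(\gamma)=\gamma\mb I$ is diagonal in this basis, uniqueness of solutions to \eqref{eqn:U-t} forces $\mb U_t(\gamma)$ to remain diagonal in it for all $t$, so with $\mb q_t(\gamma):=\int_0^t\mb r_\tau(\gamma)\,d\tau$ and $\mb x_t(\gamma)$ the eigenvalues of $\mb X_t(\gamma)$, integrating the now-decoupled ODEs \eqref{eqn:U-t}--\eqref{eqn:g-h-t} gives, entrywise, $\mb x_t(\gamma)=\gamma^2 e^{-2\mb b^\top\mb q_t(\gamma)}$, $\mb g_t(\gamma)=\gamma e^{-\alpha\mb q_t(\gamma)}$ and $\mb h_t(\gamma)=\gamma e^{\alpha\mb q_t(\gamma)}$. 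In particular $\mb g_t(\gamma)\circ\mb h_t(\gamma)=\gamma^2\mb 1$ for every $t$ and every $\gamma$, so $\wh{\mb g}\circ\wh{\mb h}=\lim_{\gamma\to 0}\gamma^2\mb 1=\mb 0$, which is the first assertion.

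Next I would build the certificate. From the closed forms, $\mb q_\infty(\gamma)=\tfrac1\alpha\log(\mb h_\infty(\gamma)/\gamma)$ and $\mb b^\top\mb q_\infty(\gamma)=(\log\gamma)\mb 1-\tfrac12\log\mb x_\infty(\gamma)$, where the limits $\mb x_\infty,\mb g_\infty,\mb h_\infty$ exist by hypothesis (and $\mb g_\infty,\mb h_\infty$ are strictly positive, since $\mb g_\infty\circ\mb h_\infty=\gamma^2\mb 1$ would otherwise force the other factor to blow up, contradicting existence). Using $\mb g_\infty(\gamma)\circ\mb h_\infty(\gamma)=\gamma^2\mb 1$ together with the boundedness of $\mb g_\infty(\gamma),\mb h_\infty(\gamma)$ near $\gamma=0$, one gets $\tfrac{\log h_{\infty,i}(\gamma)}{\log\gamma}\in[0,2]$ in the limit, so the rescaled vector $\mb\nu(\gamma):=\mb q_\infty(\gamma)/\log\gamma$ lies in a fixed compact set; I then pass to a subsequence $\gamma_n\to 0$ along which $\mb\nu(\gamma_n)\to\wh{\mb\nu}$ (exhibiting one valid certificate suffices, so the subsequence costs nothing). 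On the feasible set $\mb X\succeq\mb 0$ we have $\norm{\mb X}{*}=\trace(\mb X)$, so \eqref{eq:rpca-cvx-weight} is an affine- and PSD-cone-constrained program; using the Slater point $\mb X=\epsilon\mb I$ for strong duality, the KKT conditions for a feasible pair $(\wh{\mb X},\wh{\mb s})$ become: $\mc A^*(\wh{\mb\nu})\preceq\mb I$, $\langle\mb I-\mc A^*(\wh{\mb\nu}),\wh{\mb X}\rangle=0$, and $\wh{\mb\nu}\in\lambda\,\partial\norm{\wh{\mb s}}{1}$.

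It then remains to check these. Feasibility of $(\wh{\mb X},\wh{\mb s})$ is given, and $\wh{\mb X}\succeq\mb 0$ as a double limit of the PSD matrices $\mb U_t\mb U_t^\top$. For the dual conditions, in eigencoordinates $(\mb b^\top\mb\nu(\gamma))_k=1-\tfrac{\log x_{\infty,k}(\gamma)}{2\log\gamma}$, which tends to $1$ when $\wh x_k>0$ and stays strictly below $1$ for small $\gamma$ when $\wh x_k=0$; passing to the limit gives $\mc A^*(\wh{\mb\nu})\preceq\mb I$ and $(1-(\mb b^\top\wh{\mb\nu})_k)\,\wh x_k=0$ for all $k$, i.e. $\langle\mb I-\mc A^*(\wh{\mb\nu}),\wh{\mb X}\rangle=0$. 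Likewise $\nu_i(\gamma)=\tfrac1\alpha\bigl(\tfrac{\log h_{\infty,i}(\gamma)}{\log\gamma}-1\bigr)$: when $\wh s_i\neq 0$ exactly one of $\wh g_i,\wh h_i$ is positive (by $\wh{\mb g}\circ\wh{\mb h}=\mb 0$), which via $h_{\infty,i}(\gamma)=\gamma^2/g_{\infty,i}(\gamma)$ pins $\nu_i(\gamma)\to\tfrac1\alpha\sign(\wh s_i)$, and when $\wh s_i=0$ the bound $\tfrac{\log h_{\infty,i}(\gamma)}{\log\gamma}\in[0,2]$ yields $|\wh\nu_i|\le\tfrac1\alpha$; hence $\wh{\mb\nu}\in\lambda\,\partial\norm{\wh{\mb s}}{1}$ with $\lambda=\alpha^{-1}$. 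All KKT conditions hold, so $(\wh{\mb X},\wh{\mb s})$ is a global minimizer of \eqref{eq:rpca-cvx-weight}.

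The routine part is the first paragraph's eigenbasis reduction, which essentially repeats \cite{gunasekar2017implicit} with the extra Hadamard variables $(\mb g,\mb h)$ appended. The main obstacle is the limiting argument: one must recognize that $\mb q_\infty(\gamma)$ diverges as $\gamma\to 0$, that the correct normalization is exactly $\log\gamma$, and that this single normalization simultaneously produces the PSD dual-feasibility and complementary-slackness conditions (through $\mb b^\top\mb q_\infty(\gamma)$ and $\mb x_\infty(\gamma)$) and the $\ell_1$ subgradient condition (through $\mb q_\infty(\gamma)$ and $\mb g_\infty(\gamma),\mb h_\infty(\gamma)$), with the ratio $\lambda=1/\alpha$ falling out of the factor-$2$ mismatch between the $\mb U\mb U^\top$ and $\mb g\circ\mb g-\mb h\circ\mb h$ parameterizations. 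The delicate cases are the coordinates with $\wh x_k=0$ or $\wh s_i=0$, where exact values are unavailable and one must instead squeeze $\log x_{\infty,k}(\gamma)/\log\gamma$ and $\log h_{\infty,i}(\gamma)/\log\gamma$ into the required ranges using only the assumed existence of $\wh{\mb x},\wh{\mb g},\wh{\mb h}$ and the invariant $\mb g_\infty(\gamma)\circ\mb h_\infty(\gamma)=\gamma^2\mb 1$.
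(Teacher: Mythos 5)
Your proof is correct and follows essentially the same route as the paper's: integrate the (entrywise decoupled, via commutativity) gradient-flow ODEs to get closed forms for $\mb X_t,\mb g_t,\mb h_t$ driven by the integrated residual $\mb q_t$ (the paper's $-\bm\xi_t$), observe the invariant $\mb g_t\circ\mb h_t=\gamma^2\mb 1$ to get $\wh{\mb g}\circ\wh{\mb h}=\mb 0$, construct the dual certificate by logarithmically normalizing $\mb q_\infty(\gamma)/\log\gamma$, and verify the three KKT conditions coordinatewise, squeezing the degenerate coordinates ($\wh x_k=0$, $\wh s_i=0$). Your algebra (signs, the factor $1/\alpha$, the two-sided bounds $\log h_{\infty,i}/\log\gamma\in[0,2]$) all checks out.

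Two small points where you actually tighten up the paper's argument. First, you pass to a convergent subsequence of $\bm\nu(\gamma_n)$ before taking limits; the paper's Lemmas~\ref{lem:dual-1}--\ref{lem:dual-2} tacitly write $\lim_{\gamma\to0}\bm\nu(\gamma)$ as if it is known to exist, which the arguments there only establish along subsequences (they prove $\liminf$/$\limsup$ bounds, not convergence). Since only one valid certificate is needed, the subsequence costs nothing and is the cleaner statement. Second, you reduce fully to eigencoordinates once at the start (writing $\mc A,\mc A^*$ as a single matrix $\mb b$), whereas the paper keeps matrix exponentials through Lemma~\ref{lem:X-g-h-t} and diagonalizes only inside Lemma~\ref{lem:dual-1}; same content, but your version makes the decoupling explicit. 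One cosmetic remark: the $\lambda=1/\alpha$ ratio does not arise from a ``factor-$2$ mismatch'' between the $\mb U\mb U^\top$ and Hadamard parameterizations --- both sides carry a factor $2$ that cancels symmetrically; $\alpha$ enters purely because it multiplies the $(\mb g,\mb h)$ flow in \eqref{eqn:g-h-t} but not the $\mb U$ flow in \eqref{eqn:U-t}. This doesn't affect the proof, only the intuition you offer at the end.
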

A detailed proof of Theorem~\ref{thm:dynamic-rms} is deferred to Appendix~\ref{sec:proof}. Our result shows that among the infinite many global solutions to \eqref{eq:rms-over-1}, gradient descent biases towards the one with the minimum nuclear norm (for $\mX$) and $\ell_1$ norm (for $\vs$) with relative weight controlled by $\alpha$. In the following, we discuss the rationality of the assumptions for Theorem~\ref{thm:dynamic-rms}.
\begin{itemize}[leftmargin=*]
    \item Since gradient descent almost surely converges to a local minimum \cite{lee2016gradient} and any local minimum is likely to be a global minimum in low-rank matrix optimization with over-parameterization (i.e., $\mU\in\R^{n\times n}$) \cite{journee2010low}, the assumption that $( \wh{\mb X}, \wh{\mb g}, \wh{\mb h} )$ is a global optimal solution is generally satisfied. 
    \item The commutative assumption is commonly adopted in the analysis of over-parameterized low-rank matrix recovery and is empirically observed to be non-essential  \cite{gunasekar2017implicit,arora2019implicit}. A recent work \cite{li2018algorithmic} provides a more sophisticated analysis of the discrete dynamic under the restricted isometric assumption where the commutative assumption is not needed. We believe such analysis can be extended to our double over-parameterization setting as well, but leave it as future work. 
\end{itemize}

\section{Experiments}

In this section, we provide experimental evidence for the implicit bias of the learning rate discussed in \Cref{sec:implicit-bias-learn-rate}. Through experiments for low-rank matrix recovery, \Cref{sec:experiment-low-rank} shows that the learning rate ratio $\alpha$ affects the solution that gradient descent converges to, and that an optimal $\alpha$ does not depend on the rank of matrix and sparsity of corruption. Furthermore, \Cref{sec:experiment-image} shows the effectiveness of implicit bias of learning rate in alleviating overfitting for robust image recovery, and demonstrates that our method produces better recovery quality when compared to DIP for varying test images and corruption levels, all with a single model and set of learning parameters.


\begin{figure}
	\begin{subfigure}{0.32\linewidth}
		\centerline{
    \includegraphics[width=1.8in]{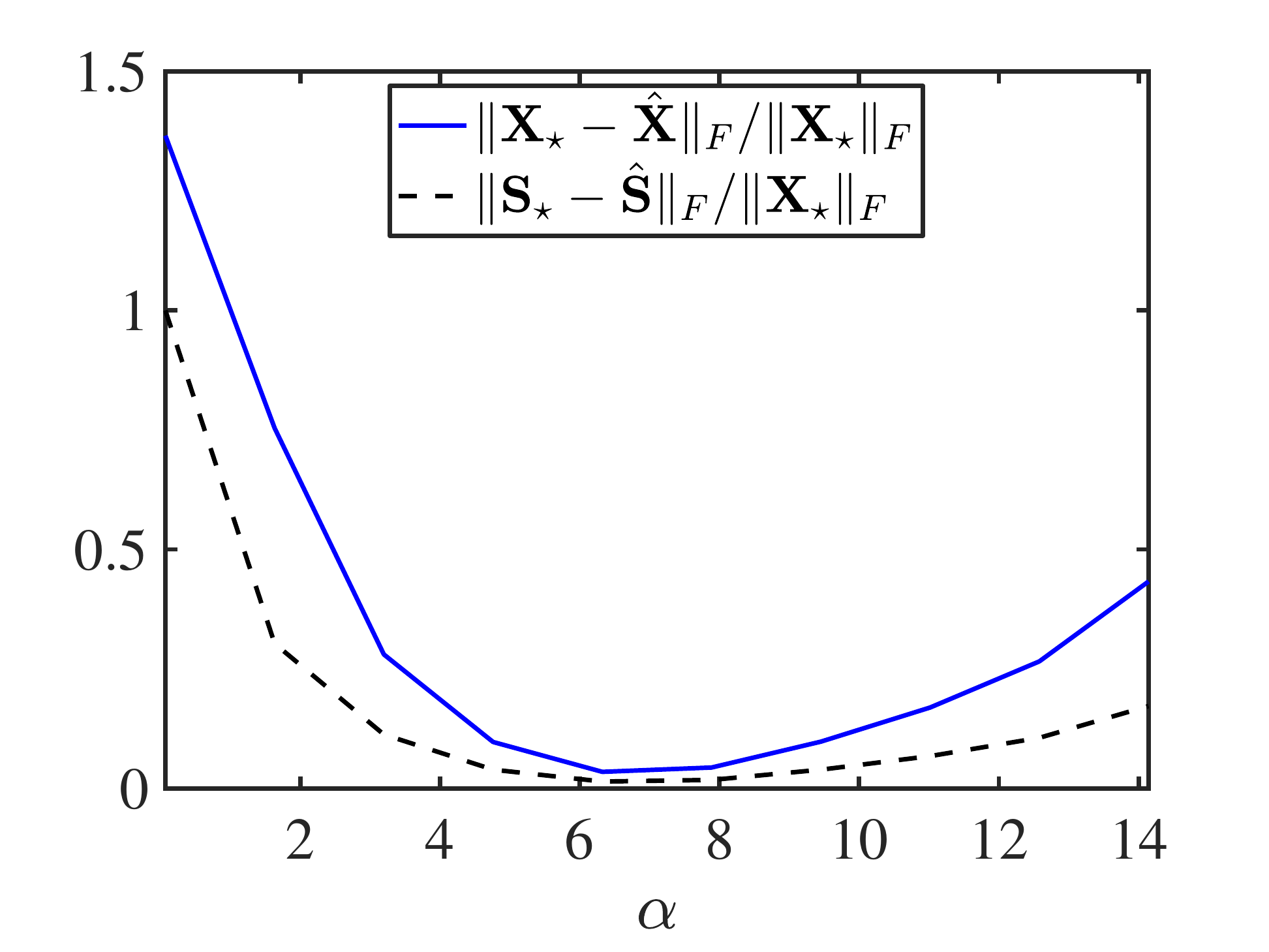}}
  \caption{Effect of learning rate ratio $\alpha$}
  \label{fig:RPCA-vary-stepsize}
  \end{subfigure}
  \hfill
	\begin{subfigure}{0.32\linewidth}
		\centerline{
			\includegraphics[width=1.8in]{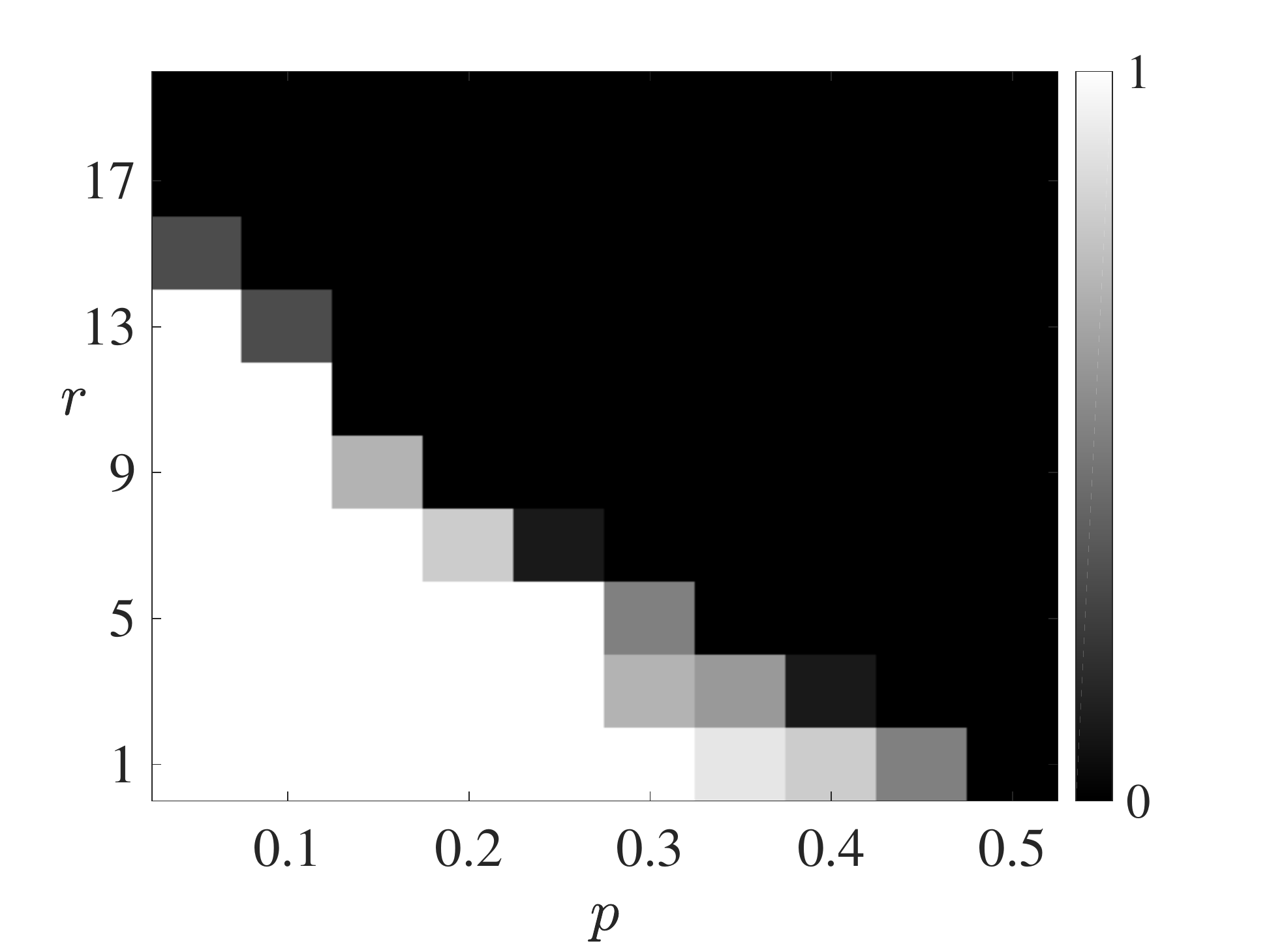}}
			\caption{Convex method \cite{lin2009fast}}
			  \label{fig:RPCA-convex}
	\end{subfigure}
	\hfill
	\begin{subfigure}{0.32\linewidth}
		\centerline{
			\includegraphics[width=1.8in]{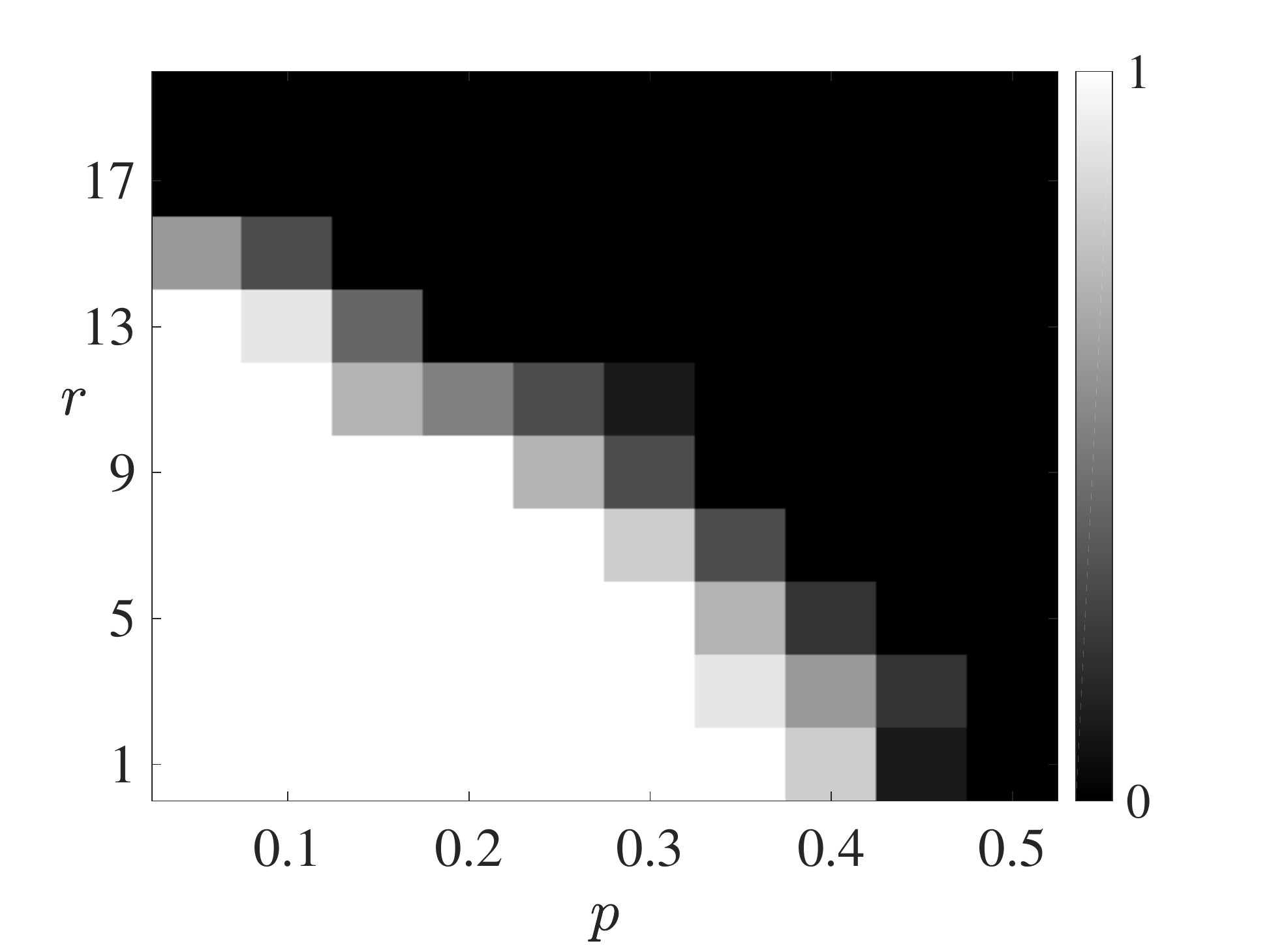}}
			\caption{Our method}
			\label{fig:RPCA-ours}
	\end{subfigure}
	\caption{\small \textbf{Numerical results for robust PCA} (with $n = 50$). (a) Relative reconstruction error for the output ($\widehat\mX, \widehat\mS$) of our method with varying step size ratio $\alpha$. Here, we set $r=5$ and $p=30\%$. (b, c) Probability of successful recovery over $10$ trials with varying $r$ (y-axis) and $p$ (x-axis). Here, we fixed $\alpha=\sqrt{n}$. White indicates always successful recovery, while black means total failure. 
	}\label{fig:RPCA-phase}
\vspace{-14pt}
\end{figure}

\subsection{Robust Recovery of Low-rank Matrices}
\label{sec:experiment-low-rank}
\vspace{-0.3em}
We conduct experiments for a particular case of low-rank matrix recovery problem, namely the robust PCA problem, in which the operator $\mathcal{A}$ is the identity operator. More specifically, the goal is to recovery a low-rank matrix $\mX_\star$ and a sparse matrix $\mS_\star$ from the mixture $\mY = \mX_\star + \mS_\star$, possibly without prior knowledge on the rank $r$ of $\mX_\star$ and the percentage of nonzero entries $p$ of $\mS_\star$. For any given $r$, we generate $\mX_\star \in \mathbb R^{n \times n}$ by setting $\mX_\star = \mU_\star\mU_\star^\top$, where $\mU_\star$ is an $n\times r$ matrix with i.i.d. entries drawn from standard Gaussian distribution. For any given $p$, we generate $\mS_\star \in \mathbb R^{n \times n}$ by sampling uniformly at random $n^2 \cdot p$ locations from the matrix and setting those entries by sampling i.i.d. from a zero-mean Gaussian distribution with standard deviation $10$. We use $n = 50$.

We apply our double over-parameterization method in \eqref{eq:rms-over-1} for the robust PCA problem, 
Specifically, we initialize $\mU$ and $\vg$ by drawing i.i.d. entries from zero mean Gaussian distribution with standard deviation $10^{-4}$, and initialize $\vh$ to be the same as $\vg$. The learning rate for $\mU$ as well as for $\{\vg, \vh\}$ are set to $\tau$ and $\alpha \cdot \tau$, respectively, where $\tau = 10^{-4}$ for all experiments.



\myparagraph{Effect of learning rate ratio $\alpha$}
We set $r = 5$ and $p = 30\%$, perform $2\times 10^4$ gradient descent steps and compute reconstruction errors for the output of our algorithm $(\wh\mX, \wh\mS)$ relative to the ground truth $(\mX_\star, \mS_\star)$. \Cref{fig:RPCA-vary-stepsize} illustrates the performance with varying values of $\alpha$. We observe that $\alpha$ affects the solution that the algorithm converges to, which verifies that the learning rate ratio has an implicit regularization effect. Moreover, the best performance is given by $\alpha = \sqrt{n}$, which is in accordance with our theoretical result in \Cref{thm:dynamic-rms}.

\vspace{-0.05in}
\myparagraph{Effect of rank $r$ and outlier ratio $p$ and phase transition} We now fix $\alpha = \sqrt{n}$ and study the ability of our method for recovering matrices of varying rank $r$ with varying percentage of corruption $p$. For each pair $(r, p)$, we apply our method and declare the recovery to be successful if the relative reconstruction error $\frac{\norm{\wh\mX - \mX_\star}{F}}{\norm{\mX_\star}{F}}$ is less than $0.1$. \Cref{fig:RPCA-ours} displays the fraction of successful recovery in $10$ Monte Carlo trials. It shows that a single value of the parameter $\alpha$ leads to correct recovery for a wide range of $r$ and $p$. \Cref{fig:RPCA-convex} shows the fraction of successful recovery for the convex method in \eqref{eq:rpca-cvx-weight} (with the Accelerated Proximal Gradient \cite{lin2009fast} solver \footnote{\scriptsize\url{http://people.eecs.berkeley.edu/~yima/matrix-rank/sample_code.html}}).


\begin{figure}
\captionsetup[subfigure]{labelformat=empty,font=footnotesize,skip=-0.9em}
	\begin{subfigure}{0.195\linewidth}
			\includegraphics[trim=13cm 20.2cm 2.5cm 20.5cm, clip,width=\linewidth]{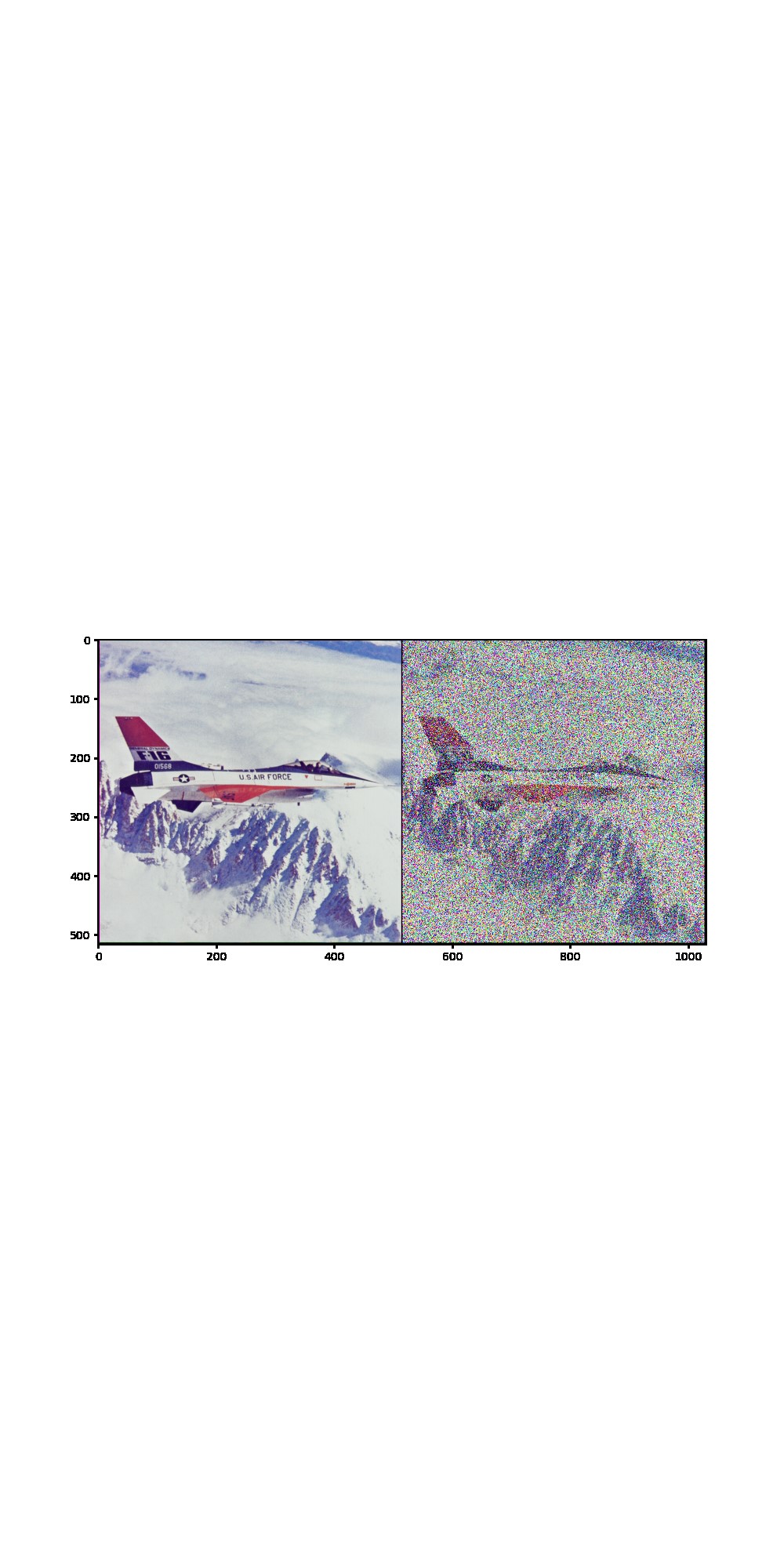}
	\end{subfigure}
    \begin{subfigure}{0.195\linewidth}
			\includegraphics[trim=3.2cm 20.2cm 12.4cm 20.5cm, clip,width=\linewidth]{figs/Ours_n05_true.jpg}
	\end{subfigure}
	\begin{subfigure}{0.195\linewidth}
	        \includegraphics[trim=2.8cm 16.2cm 2.2cm 16.3cm, clip,width=\linewidth]{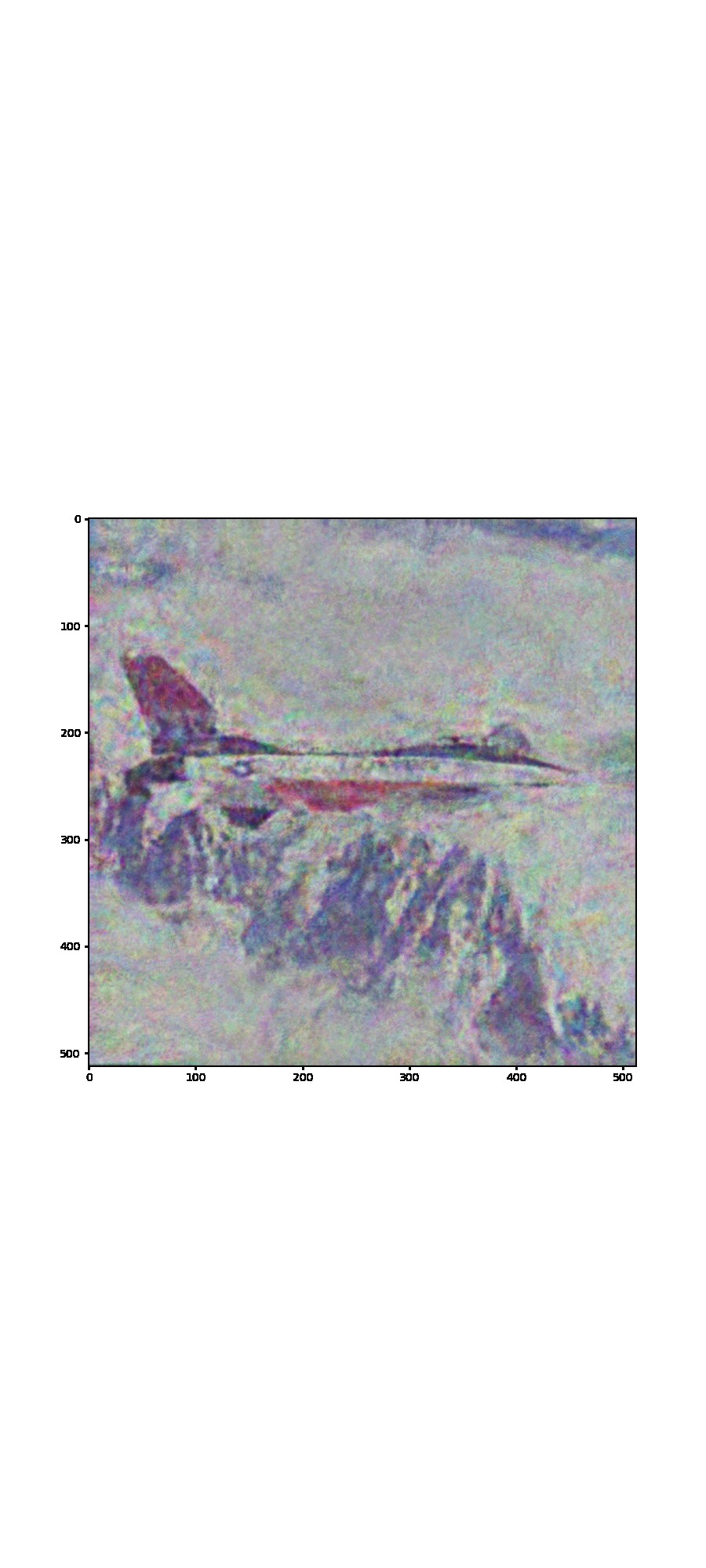}
	        \caption{PSNR $=16.46$}
	\end{subfigure}
	\begin{subfigure}{0.195\linewidth}
	        \includegraphics[trim=2.8cm 16.2cm 2.2cm 16.3cm, clip,width=\linewidth]{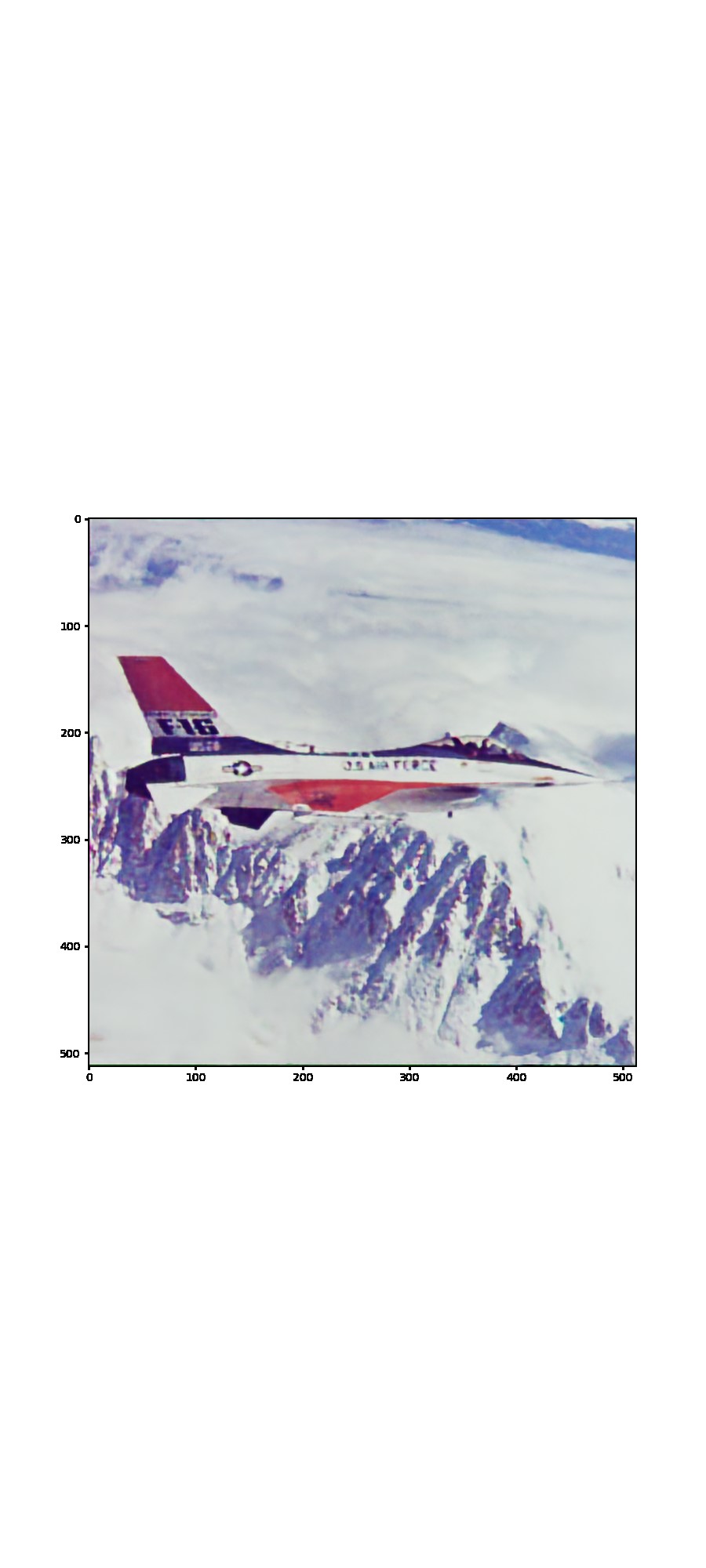}
	        \caption{PSNR $=29.07$}
	\end{subfigure}
	\begin{subfigure}{0.195\linewidth}
	        \includegraphics[trim=2.8cm 16.2cm 2.2cm 16.3cm, clip,width=\linewidth]{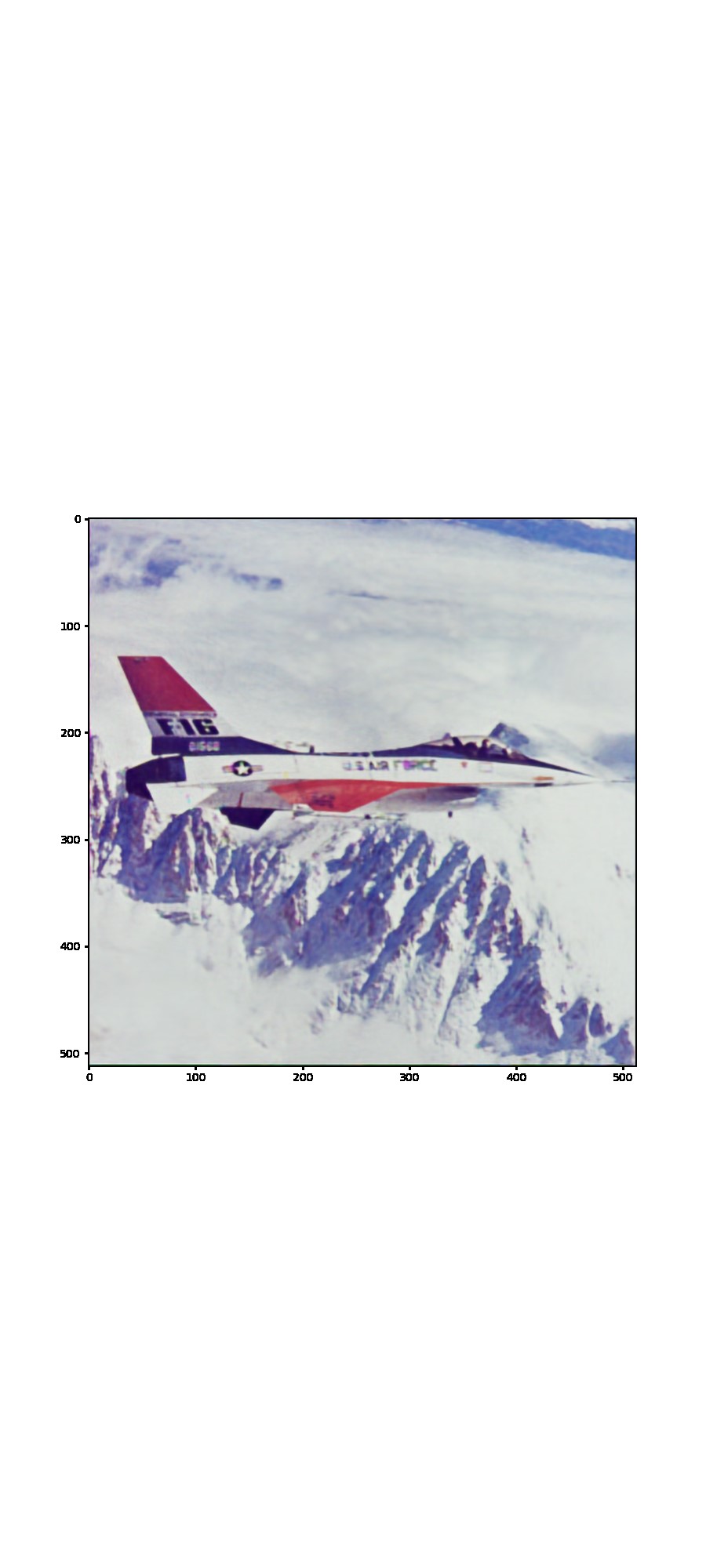}
	        \caption{PSNR $=31.60$}
	\end{subfigure}
	\\[-0.2em]
	\begin{subfigure}{0.195\linewidth}
			\includegraphics[trim=13cm 20.2cm 2.5cm 20.5cm, clip,width=\linewidth]{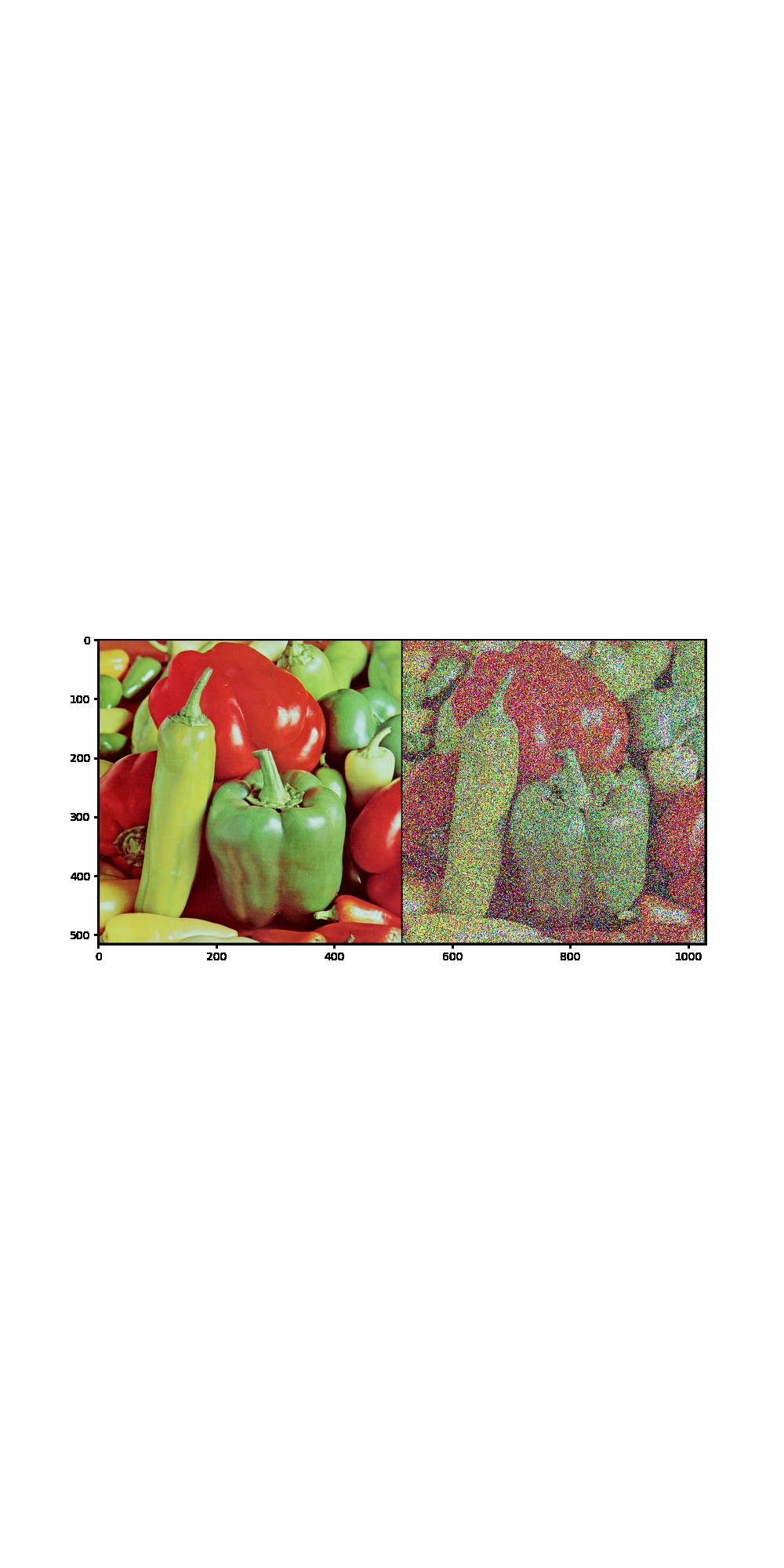}
	\end{subfigure}
	\begin{subfigure}{0.195\linewidth}
			\includegraphics[trim=3.2cm 20.2cm 12.4cm 20.5cm, clip,width=\linewidth]{figs/Ours_peppers_true.jpg}
	\end{subfigure}
	\begin{subfigure}{0.195\linewidth}
			\includegraphics[trim=2.8cm 16.2cm 2.2cm 16.3cm, clip,width=\linewidth]{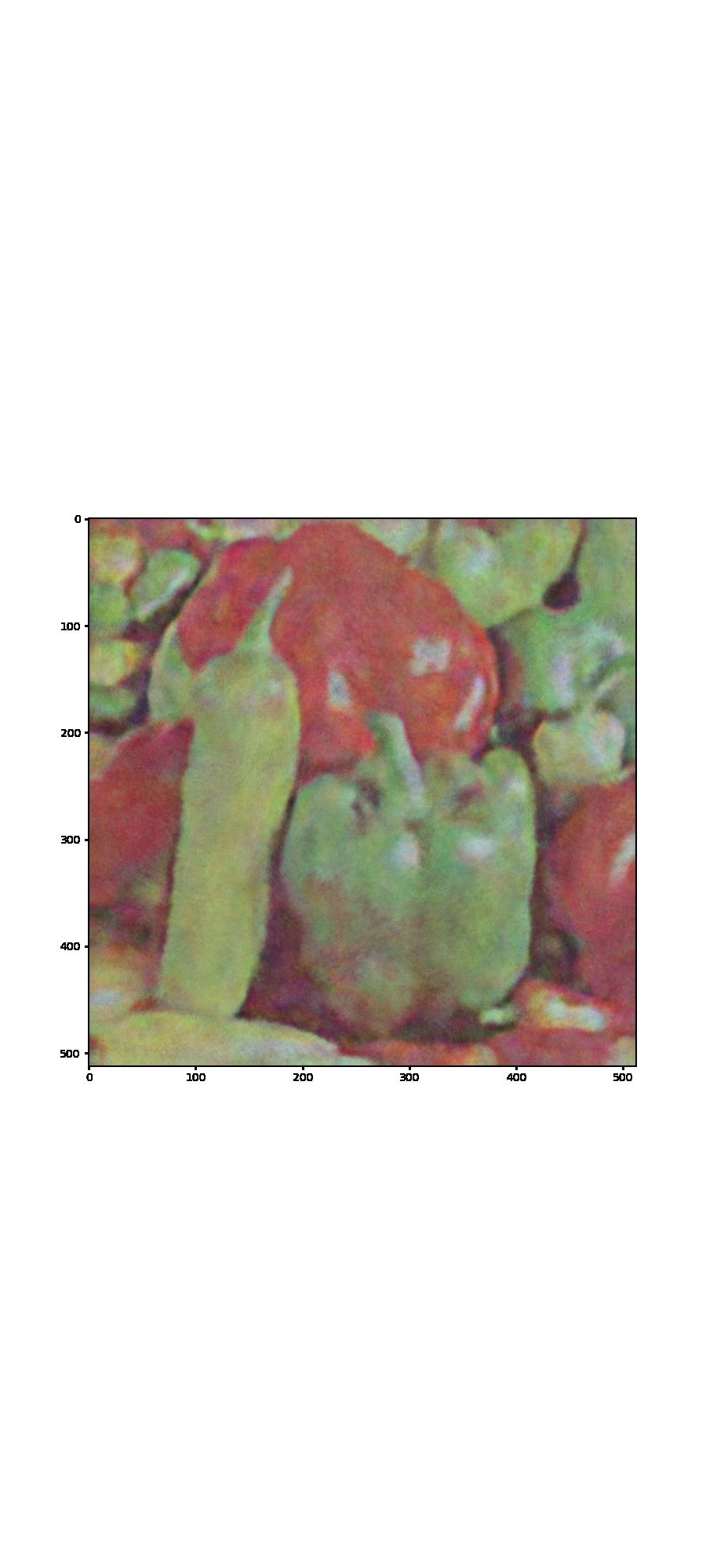}
		    \caption{PSNR $=16.63$}
	\end{subfigure}
	\begin{subfigure}{0.195\linewidth}
			\includegraphics[trim=2.8cm 16.2cm 2.2cm 16.3cm, clip,width=\linewidth]{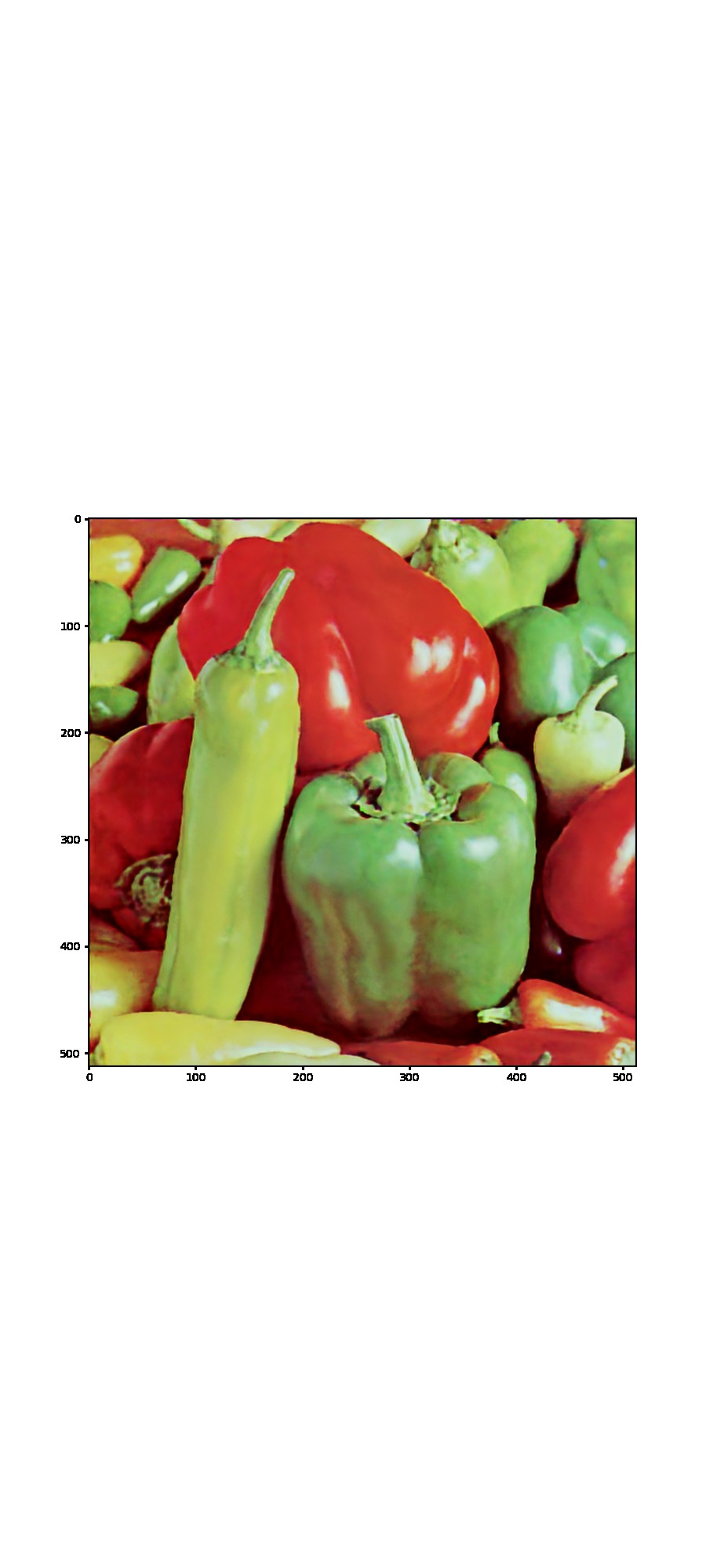}
			\caption{PSNR $=29.13$}
	\end{subfigure}
	\begin{subfigure}{0.195\linewidth}
			\includegraphics[trim=2.8cm 16.2cm 2.2cm 16.3cm, clip,width=\linewidth]{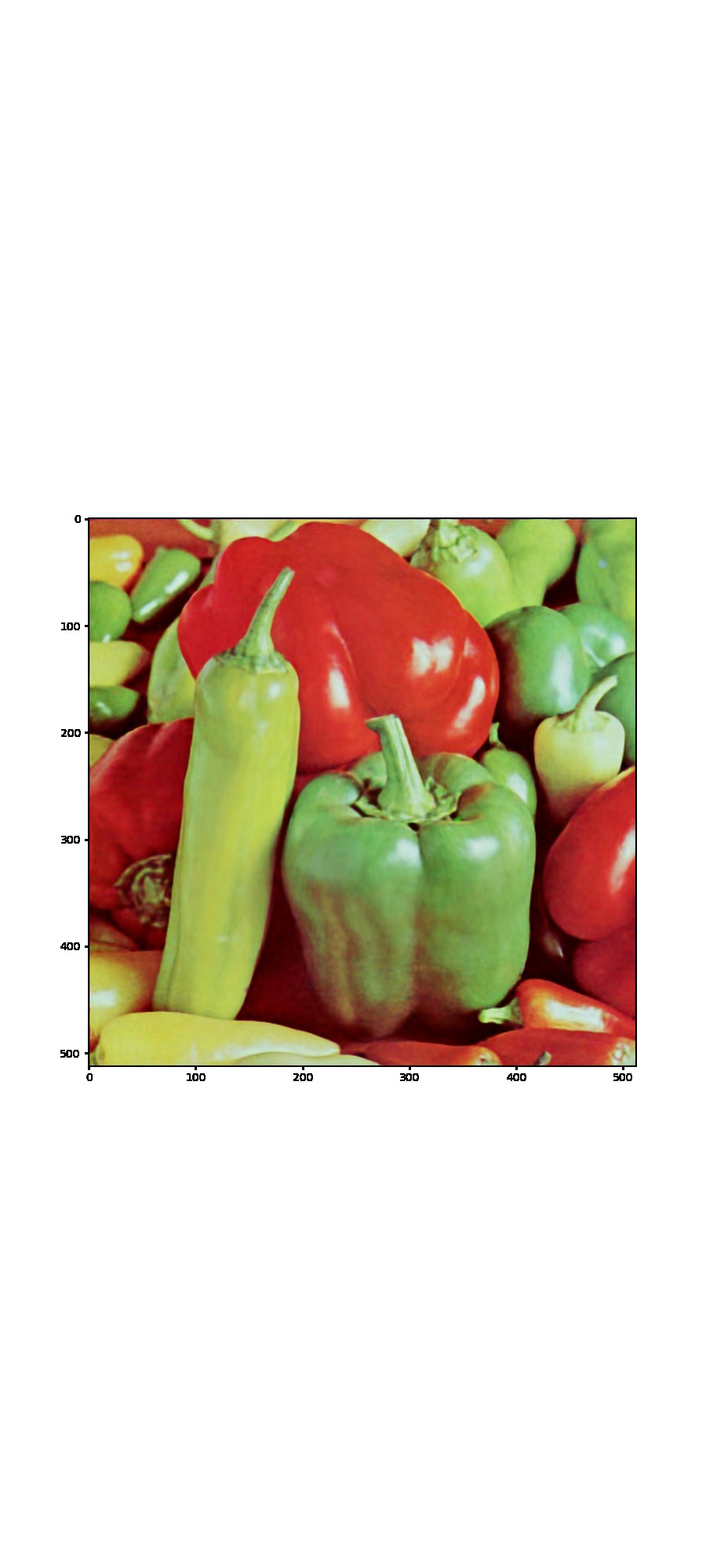}
			\caption{PSNR $=30.76$}
	\end{subfigure}
	\\[-0.2em]
	\begin{subfigure}{0.195\linewidth}
			\includegraphics[trim=13cm 20.2cm 2.5cm 20.5cm, clip,width=\linewidth]{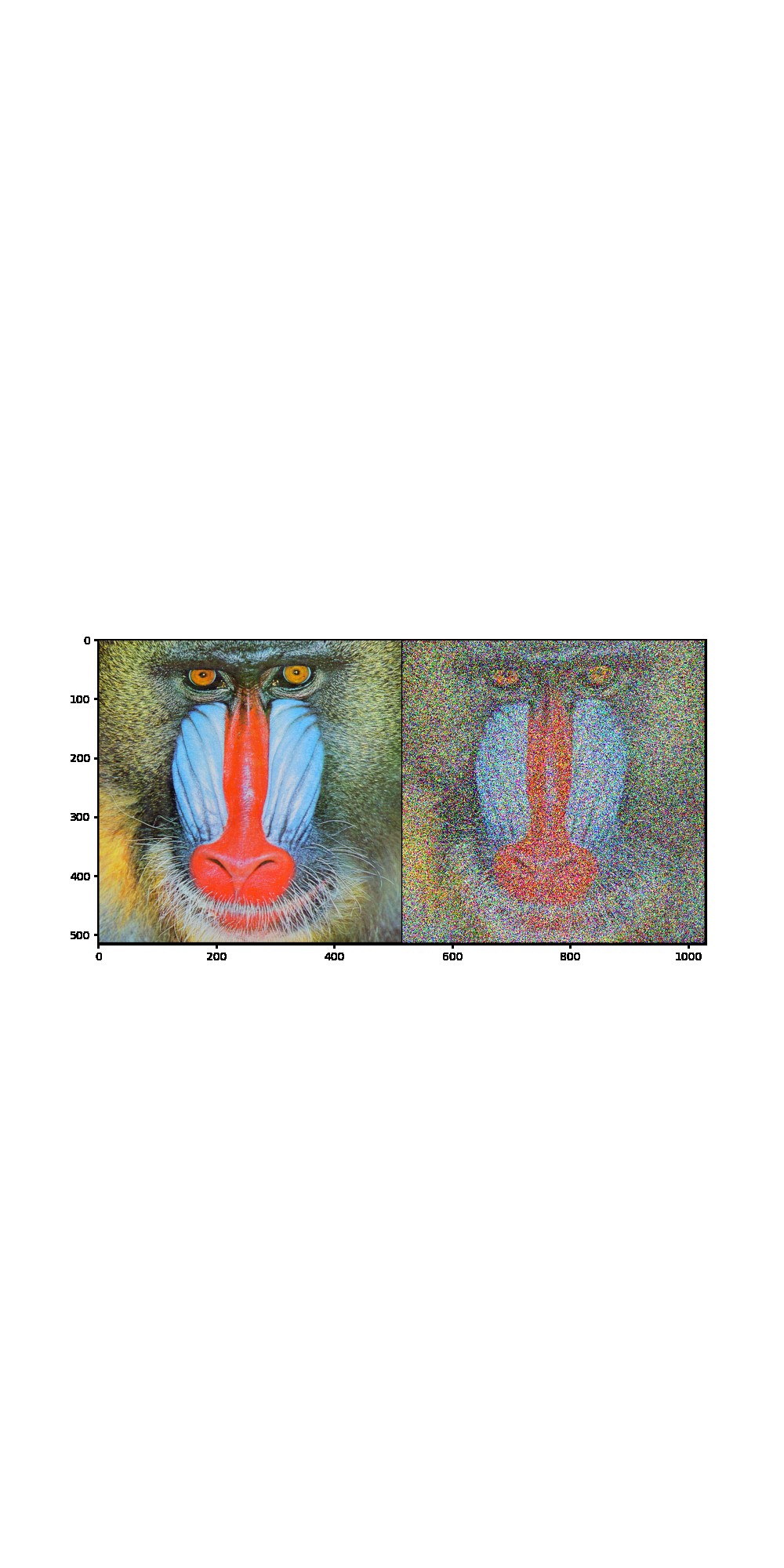}
	\end{subfigure}
	\begin{subfigure}{0.195\linewidth}
			\includegraphics[trim=3.2cm 20.2cm 12.4cm 20.5cm, clip,width=\linewidth]{figs/Ours_baboon_true.jpg}
	\end{subfigure}
	\begin{subfigure}{0.195\linewidth}
			\includegraphics[trim=2.8cm 16.2cm 2.2cm 16.3cm, clip,width=\linewidth]{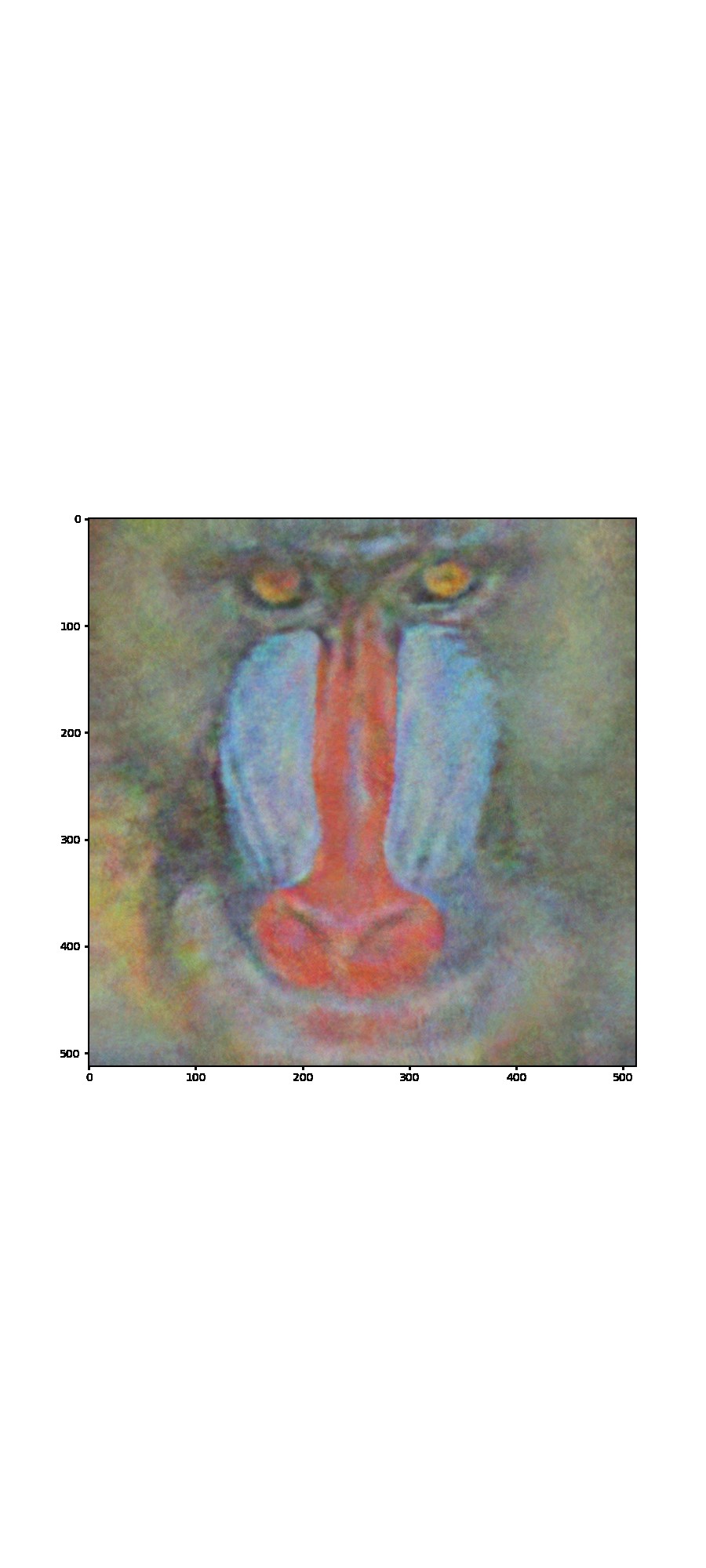}
			\caption{PSNR $=18.56$}
	\end{subfigure}
	\begin{subfigure}{0.195\linewidth}
			\includegraphics[trim=2.8cm 16.2cm 2.2cm 16.3cm, clip,width=\linewidth]{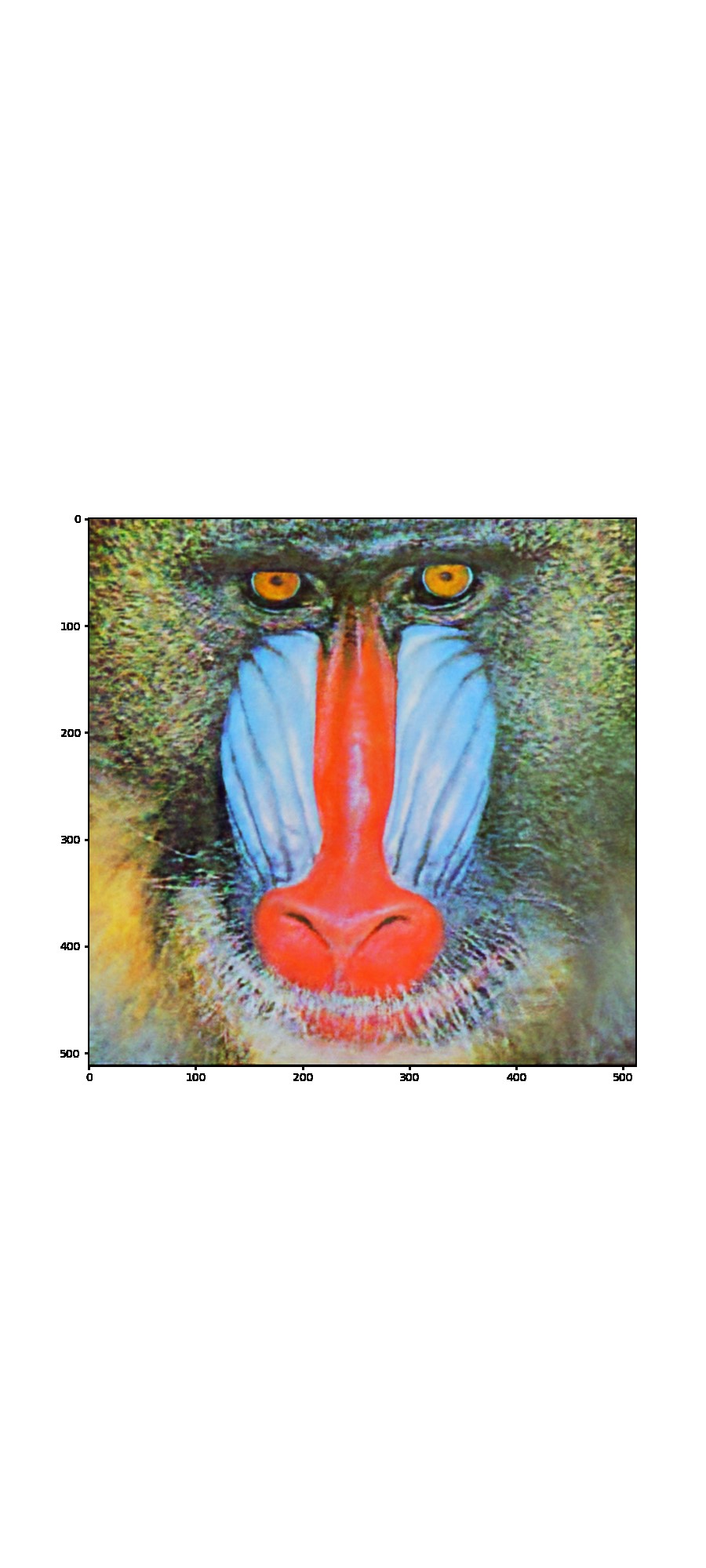}
			\caption{PSNR $=20.37$}
	\end{subfigure}
	\begin{subfigure}{0.195\linewidth}
			\includegraphics[trim=2.8cm 16.2cm 2.2cm 16.3cm, clip,width=\linewidth]{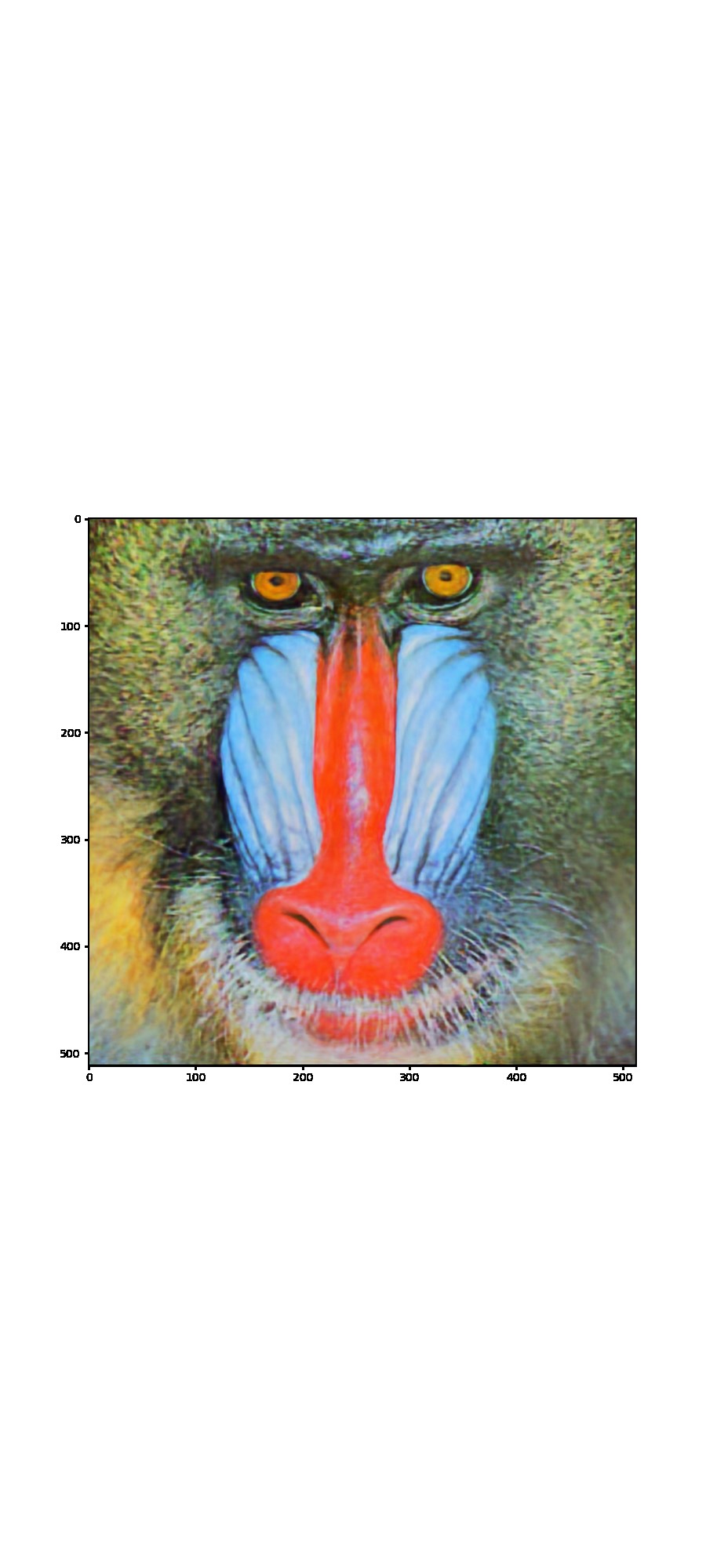}
			\caption{PSNR $=21.38$}
	\end{subfigure}
	\\[-0.2em]
	\begin{subfigure}{0.195\linewidth}
			\includegraphics[trim=14.7cm 22.1cm 4.2cm 22.2cm, clip,width=\linewidth]{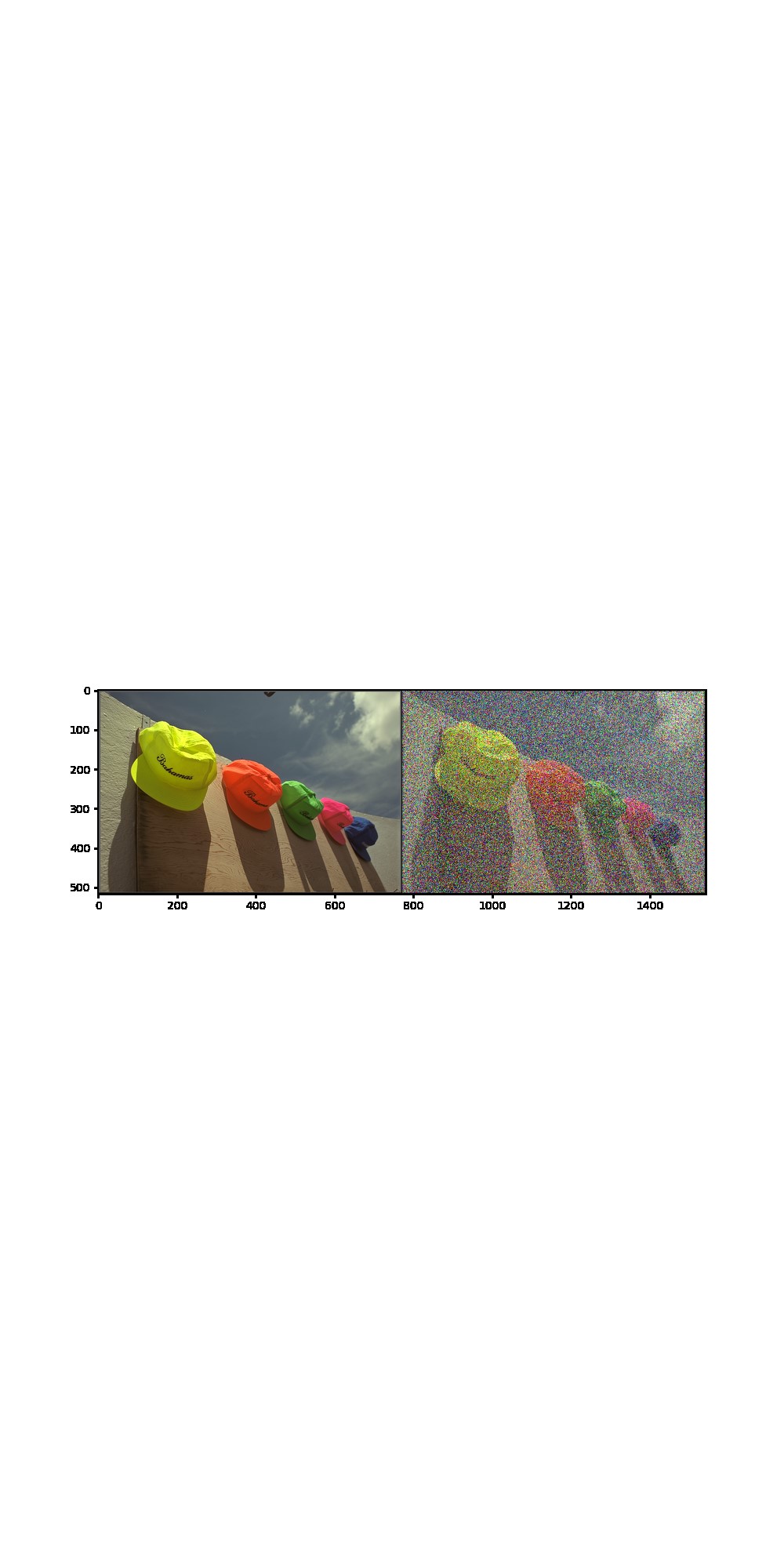}
	\end{subfigure}
	\begin{subfigure}{0.195\linewidth}
			\includegraphics[trim=4.9cm 22.1cm 14.1cm 22.2cm, clip,width=\linewidth]{figs/Ours_kodim03_true.jpg}
	\end{subfigure}
	\begin{subfigure}{0.195\linewidth}
			\includegraphics[trim=5.8cm 19.2cm 5.2cm 19.3cm, clip,width=\linewidth]{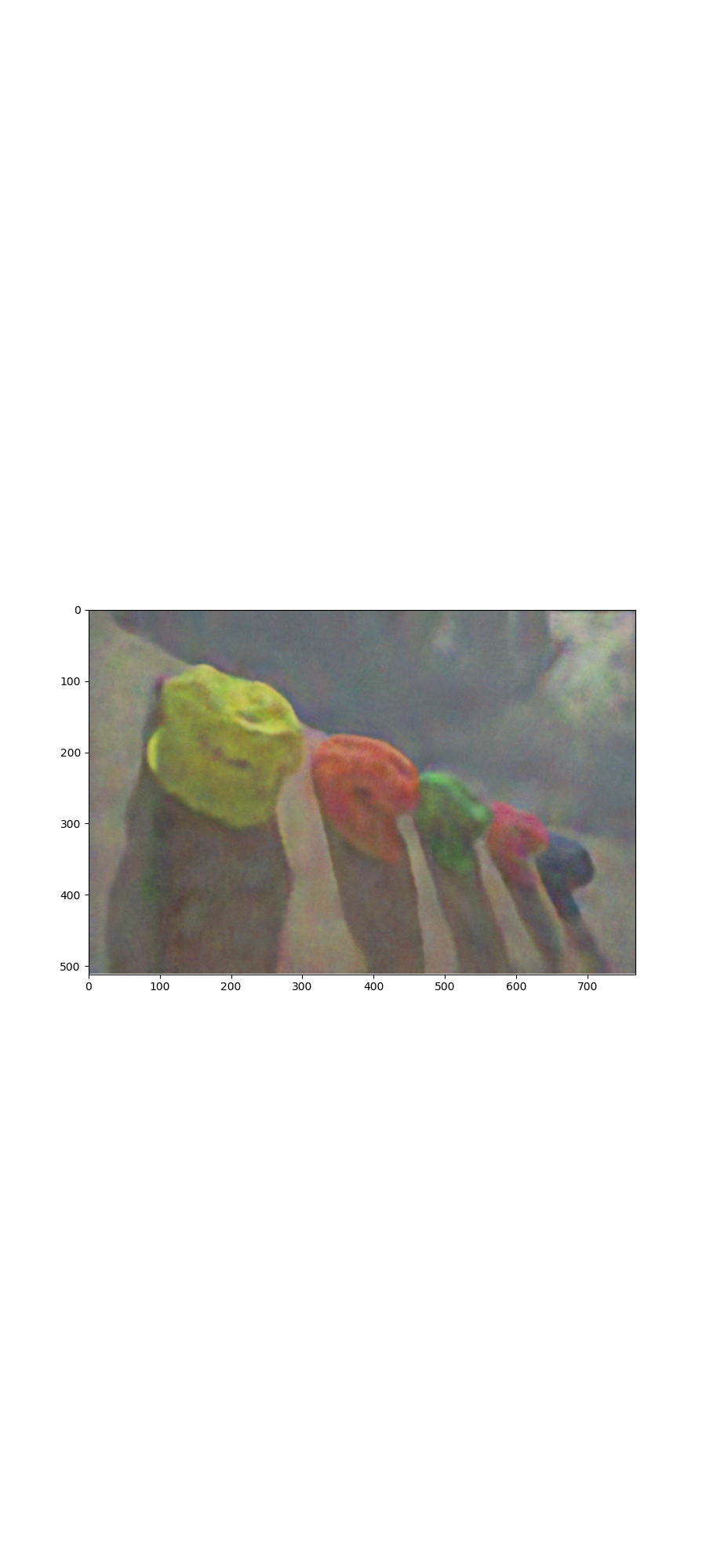}
			\caption{PSNR $=16.57$}
	\end{subfigure}
	\begin{subfigure}{0.195\linewidth}
			\includegraphics[trim=5.8cm 19.2cm 5.2cm 19.3cm, clip,width=\linewidth]{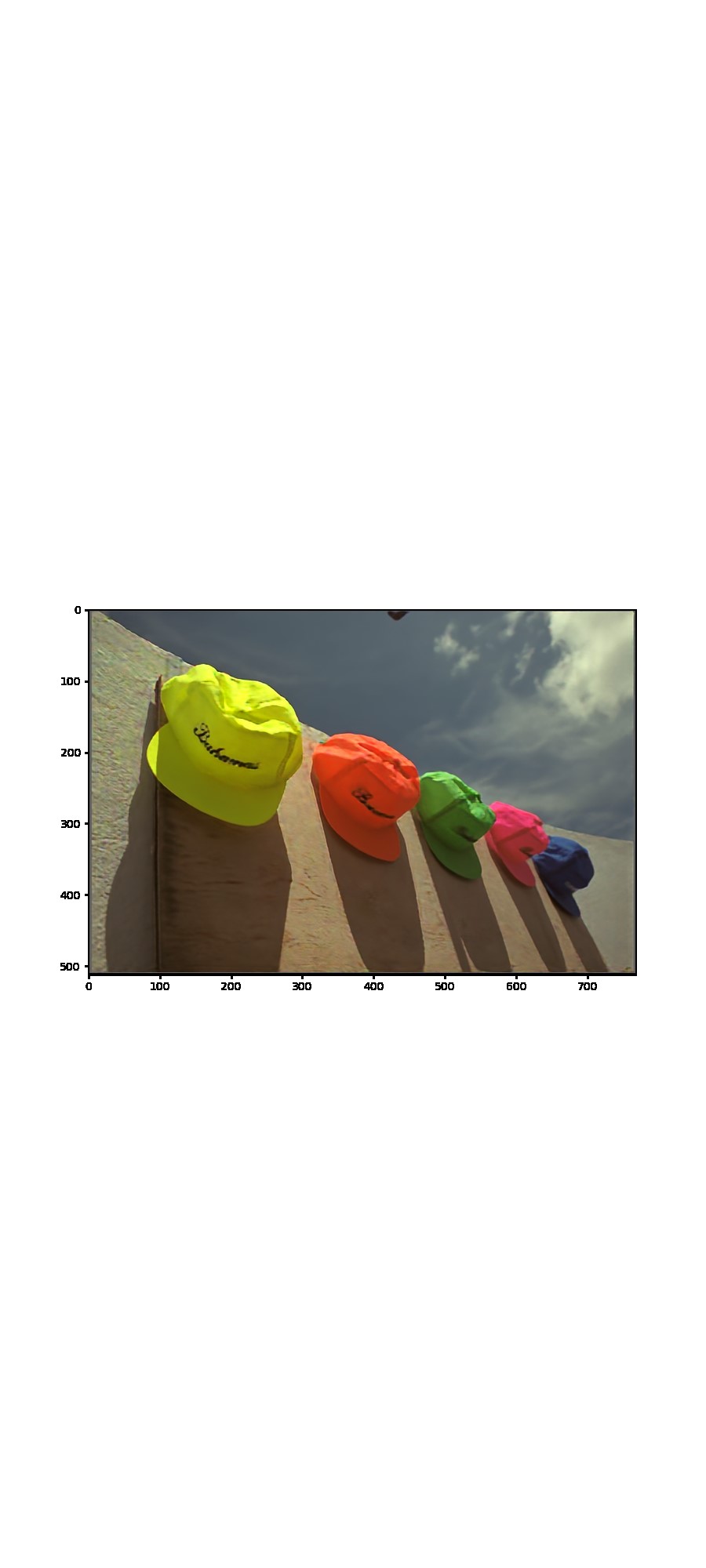}
			\caption{PSNR $=31.87$}
	\end{subfigure}			
	\begin{subfigure}{0.195\linewidth}
			\includegraphics[trim=5.8cm 19.2cm 5.2cm 19.3cm, clip,width=\linewidth]{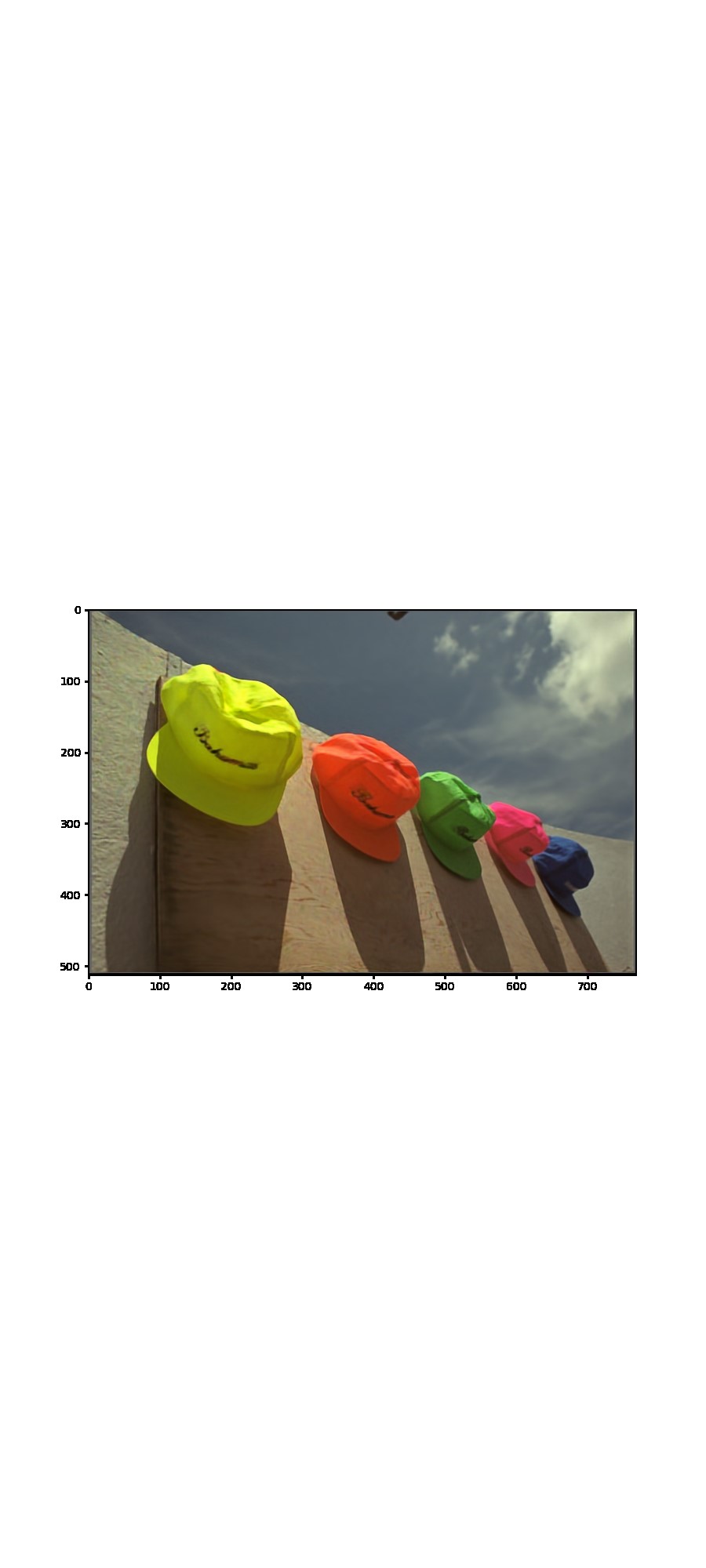}
			\caption{PSNR $=32.47$}
	\end{subfigure}
	\\
	\begin{subfigure}{0.195\linewidth}
			\caption{\small  Input}
	\end{subfigure}	
	\begin{subfigure}{0.195\linewidth}
			\caption{\small  Groundtruth}
	\end{subfigure}	
		\begin{subfigure}{0.195\linewidth}
			\caption{\small  DIP}
	\end{subfigure}	
		\begin{subfigure}{0.195\linewidth}
			\caption{\small  DIP-$\ell_1$}
	\end{subfigure}	
		\begin{subfigure}{0.195\linewidth}
			\caption{\small  Ours}
	\end{subfigure}	
	\caption{\small \textbf{Robust image recovery for salt-and-pepper corruption.} PSNR of the results is overlaid at the bottom of the images. For our method, all cases use the same network width, learning rate, and termination condition. For DIP and DIP-$\ell_1$, case-dependent early stopping is used which is essential for their good performance. Despite that, our method achieves the highest PSNRs and best visual quality.}
	\label{fig:dip}
\vspace{-12pt}
\end{figure}

\vspace{-0.05in}
\subsection{Robust Recovery of Natural Images}
\label{sec:experiment-image}
\vspace{-1mm}
Following \cite{ulyanov2018deep}, we evaluate the performance of our method for robust image recovery using images from a standard dataset\footnote{\scriptsize\url{http://www.cs.tut.fi/~foi/GCF-BM3D/index.html\#ref_results}}. Corruption for the images is synthesized by adding salt-and-pepper noise, where ratio $p$ of randomly chosen pixels are replaced with either $1$ or $0$ (each with $50\%$ probability). 

The network $\phi(\bm \theta)$ for our method in \eqref{eq:dip-double} is the same as the denoising network in \cite{ulyanov2018deep}, which has a U-shaped architecture with skip connections. Each layer of the network contains a convolutional layer, a nonlinear LeakyReLU layer and a batch normalization layer. We also follow \cite{ulyanov2018deep} on the initialization of network parameters $\bm \theta$, which is the Kaiming initialization. Meanwhile, we initialize $\vg$ and $\vh$ by i.i.d. zero-mean Gaussian distribution with a standard deviation of $10^{-5}$. We use learning rate $\tau = 1$ and set $\alpha = 500$ for all our experiments below.

\vspace{-0.05in}

\myparagraph{No need to tune model width or terminate early} We compare our method with a variant of DIP that we call DIP-$\ell_1$, which is based on solving $\min_{\bm \theta} \ell(\phi(\bm \theta) - \vy)$ with $\ell(\cdot)$ being $\norm{\cdot}{1}$. As shown in \Cref{fig:DIP-overview}, DIP-$\ell_1$ requires either choosing appropriate network width or early termination to avoid overfitting.
Note that neither of these can be carried out in practice without the true (clean) image.
On the other hand, our method does not require tuning network width or early termination. Its performance continues to improve as training proceeds until stabilises. 

\vspace{-0.05in}

\myparagraph{Handling different images and varying corruption levels}
The benefit mentioned above enables our method to handle different images and varying corruption levels \emph{without} the need to tune network width, termination condition and any other learning parameters.
In our experiments, we fix the number of channels in each convolutional layer to be $128$, run $150,\!000$ gradient descent iterations and display the output image. 
The results are shown in Figure~\ref{fig:dip} (for four different test images) and Figure~\ref{fig:dip_varying_noise} (for four corruption levels, see Appendix~\ref{sec:extra-exp}). 
For DIP and DIP-$\ell_1$, we report the result with the highest PSNR in the learning process (i.e., we perform the best early termination for evaluation purposes -- note that this cannot be carried out in practice).
Despite that, our method obtains better recovery quality in terms of PSNR for all cases.
We display the learning curves for these experiments in Figure~\ref{fig:different_figure} and Figure~\ref{fig:varying_p} (see Appendix~\ref{sec:extra-exp}), which show that our method does not overfit for all cases while DIP-$\ell_1$ requires a case-dependent early termination.

\vskip -0.15in
\section{Conclusion}\label{sec:conclusion}
\vskip -0.1in
In this work, we have shown both theoretically and empirically that the benefits of implicit bias of gradient descent can be extended to over-parameterization of two low-dimensional structures. The key to the success is the choice of discrepant learning rates that can properly regulate the optimization path so that it converges to the desired optimal solution. Such a framework frees us from the need of prior knowledge in the structures or from the lack of scalability of previous approaches. This has led to state of the art recovery results for both low-rank matrices and natural images. We hope this work may encourage people to investigate in the future if the same framework and idea generalize to a mixture of multiple and broader families of structures.




{
\medskip
\bibliographystyle{ieeetr}
\bibliography{nips,nonconvex}
}

\begin{appendices}

\section{Implicit Bias of Discrepant Learning Rates in Linear Regression}




In this part of the appendix, let us use a classical result for \emph{underdeterimined} linear regression problem to build up some intuitions behind the implicit bias of gradient descent for our problem formulation of robust learning problems. The high level message we aim to deliver through the simple example is that
\begin{itemize}[leftmargin=*]
    \item Gradient descent implicitly biases towards solutions with minimum $\ell_2$-norm.
    \item Discrepant learning rates lead to solutions with minimum \emph{weighted} $\ell_2$-norm. 
\end{itemize}

\paragraph{Underdeterimined linear regression.} Given observation $\mb b \in \bb R^{n_1} $ and wide data matrix $\mb W \in \bb R^{n_1 \times n_2}$ ($n_2>n_1$), we want to find $\bm \theta$ which is a solution to 
\begin{align}\label{eqn:ls-problem}
    \min_{ \bm \theta \in \bb R^{n_2} }  \varphi(\bm \theta) \;=\; \frac{1}{2} \norm{ \mb b -  \mb W \bm \theta }{2}^2.
\end{align}
For $n_2>n_1$ and full row-rank $\mb W$, the underdetermined problem \eqref{eqn:ls-problem} obviously has \emph{infinite} many solutions, which forms a set
\begin{align*}
    \mc S \;:=\; \Brac{ \; \bm \theta_{ln} + \mb n \; \mid\; \bm \theta_{ln} = \mb W^\dagger \mb b,\quad   \mb n \in \mc N(\mb W) \; }, 
\end{align*}
where $ \mb W^\dagger := \mb W^\top \paren{ \mb W \mb W^\top }^{-1}$ denotes the pseudo-inverse of $\mb W$, and
\begin{align*}
    \mc N(\mb W):= \Brac{ \mb n \mid \mb W \mb n = \mb 0 }, \quad \mc R(\mb W):= \Brac{ \mb z \mid \mb z = \mb W^\top \mb v  }
\end{align*}
are the null space and row space of $\mb W$, respectively. Simple derivation shows that $\bm \theta_{ln}$ is a particular \emph{least $\ell_2$-norm} solution to \eqref{eqn:ls-problem}, that minimizes 
\begin{align*}
    \min_{ \bm \theta \in \bb R^{n_2} }\; \frac{1}{2} \norm{ \bm \theta }{2}^2,\quad \text{s.t.} \quad \mb W \bm \theta \;=\; \mb b.
\end{align*}
\paragraph{Gradient descent biases towards $\bm \theta_{ln}$.} Starting from any initialization $\bm \theta_{0}$, gradient descent
\begin{align}\label{eqn:grad-descent-ls}
    \bm \theta_{k+1} \;=\; \bm \theta_{k} - \tau_{ls} \cdot \mb W^\top \paren{ \mb W \bm \theta_{k} - \mb b }
\end{align}
with a sufficiently small learning rate\footnote{This is because $\mb W \bm \theta_{k+1} - \mb b = \paren{ \mb I - \tau_{ls} \mb W \mb W^\top} \paren{ \mb W \bm \theta_{k} - \mb b } $. If we choose $\tau_{ls} < \norm{ \mb W \mb W^\top }{}^{-1} $, then $\norm{\mb W \bm \theta_{k} - \mb b}{}$ converges to $0$ geometrically.} $\tau_{ls}$ always finds one of the global solutions for \eqref{eqn:ls-problem}. Furthermore, it is now well-understood \cite{oymak2018overparameterized} that whenever the 
initialization $\bm \theta_{0}$ has zero component in $\mc N(\mb W)$ (i.e., $\mc P_{ \mc N(\mb W) }\paren{\bm \theta_{0}} = \mb 0$), one interesting phenomenon is that the iterates $\bm \theta_{\infty}$ in \eqref{eqn:grad-descent-ls} implicitly bias towards the minimium $\ell^2$-norm solution $\bm \theta_{ln}$. This happens because once initialized in $\mc R(\mb W)$, gradient descent \eqref{eqn:grad-descent-ls} implicitly biases towards iterates staying within $\mc R(\mb W)$, such that
\begin{align*}
  \mc P_{ \mc N(\mb W) }\paren{ \bm \theta_{\infty} } \;=\; \bm \theta_{ln},\quad  \mc P_{ \mc N(\mb W) }\paren{ \bm \theta_{\infty} } \;=\; \mc P_{ \mc N(\mb W) }\paren{ \bm \theta_{0} } = \mb 0.
\end{align*}
As we can see, a particular algorithm enforces specific regularization on the final solution. 

\myparagraph{Implicit bias of discrepant learning rates}
The gradient update in \eqref{eqn:grad-descent-ls} uses the same learning rate $\tau_{ls}$ for all coordinates of $\bm \theta$. 
If we use different learning rates for each coordinate (i.e., $\mb \Lambda$ is a diagonal matrix with positive diagonals)
\begin{align}\label{eqn:grad-descent-ls-w}
    \bm \theta_{k+1} \;=\; \bm \theta_{k} - \tau_{ls} \cdot \mb \Lambda \cdot \mb W^\top \paren{ \mb W \bm \theta_{k} - \mb b },
\end{align}
then by following a similar argument we conclude that the gradient update in \eqref{eqn:grad-descent-ls-w} converges to a weighted regularized solution for
\begin{align}
    \min_{ \bm \theta \in \bb R^{n_2} }\; \frac{1}{2} \norm{  \mb \Lambda^{-1/2} \bm \theta }{2}^2,\quad \text{s.t.} \quad \mb W \bm \theta \;=\; \mb b.
\label{eq:weighted-ls}\end{align}

\remark Let $\sigma_i$ be the $i$-th diagonal of $\mLambda$ and $\theta_i$ be the $i$-th element of $\bm \theta$. Then in \eqref{eqn:grad-descent-ls-w}, $\sigma_i\tau_{ls}$ is the learning rate for the variable $\theta_i$, which varies for different variables.
In words, the relation between \eqref{eqn:grad-descent-ls-w} and \eqref{eq:weighted-ls} implies that for one particular optimization variable (e.g., $\theta_i$) a large learning rate $\sigma_i\tau_{ls}$ in \eqref{eqn:grad-descent-ls-w} leads to a small implicit regularization effect in \eqref{eq:weighted-ls}. From a high-level perspective, this happens because a larger learning rate allows the optimization variable to move faster away from its initial point, resulting in a weaker regularization effect (which penalizes the distance of the variable to the 
initialization) on its solution path. 

An alternative explanation of this is through a change of variable $ {\bm \theta} = \bm \Lambda^{1/2}\wt{\bm \theta}$. Suppose we minimize 
\begin{align}\label{eqn:weighted-ls-problem}
    \min_{ \wt{\bm \theta} \in \bb R^{n_2} }  \wt\varphi(\wt{\bm \theta}) \;=\; \frac{1}{2} \norm{ \mb b -  \mb W \bm \Lambda^{1/2}\wt{\bm \theta} }{2}^2
\end{align}
via standard gradient descent with a single learning rate
\begin{align}\label{eqn:grad-descent-weighted-ls}
    \wt{\bm \theta}_{k+1} \;=\; \wt{\bm \theta}_{k} - \tau_{ls} \cdot \mb \Lambda^{1/2} \cdot \mb W^\top \paren{ \mb W \bm \Lambda^{1/2} \wt{\bm \theta}_{k} - \mb b }.
\end{align}
Thus, once initialized in $\mc R(\mb W \bm \Lambda^{1/2})$, gradient descent \eqref{eqn:grad-descent-weighted-ls} converges to the least $\ell_2$-norm solution to \eqref{eqn:weighted-ls-problem}, i.e., the solution of the following problem
\begin{align}\label{eqn:weighted-ls-min-norm}
    \min_{\wt{ \bm \theta} \in \bb R^{n_2} }\; \frac{1}{2} \norm{ \wt{\bm \theta} }{2}^2,\quad \text{s.t.} \quad \mb W \bm \Lambda^{1/2}\wt{\bm \theta} \;=\; \mb b.
\end{align}
Finally, plugging  $\wt{\bm \theta} = \bm \Lambda^{-1/2}{\bm \theta}$ into \eqref{eqn:grad-descent-weighted-ls} and \eqref{eqn:weighted-ls-min-norm} gives \eqref{eqn:grad-descent-ls-w} and \eqref{eq:weighted-ls}, respectively, also indicating the gradient update \eqref{eqn:grad-descent-ls-w} induces implicit weighted regularization towards the solution of \eqref{eq:weighted-ls}.

\section{Proof of \Cref{thm:dynamic-rms}}\label{sec:proof}

In this part of the appendix, we prove our main technical result (i.e., \Cref{thm:dynamic-rms}) in \Cref{sec:dynamical}. To make this part self-contained, we restate our result as follows.

\begin{theorem}\label{thm:main-app}
Assume that the measurement matrices $\mA_1,\mA_2,\ldots,\mA_m$ commute, i.e.
\begin{align*}
    \mb A_i \mb A_j \; =\; \mb A_j \mb A_i,\quad \forall \; 1 \leq i \not= j \leq m,
\end{align*}
and the gradient flows of $\mb U_t(\gamma)$, $\mb g_t(\gamma)$, and $\mb h_t(\gamma)$ satisfy 
\begin{align}
    \dot{ \mb U }_t(\gamma) \;&=\; \lim_{ \tau \rightarrow 0 } \frac{ \mb U_{t +\tau }(\gamma) - \mb U_{t }(\gamma) }{\tau } \;=\; - \mc A^* \paren{ \mb r_t(\gamma)  } \mb U_t(\gamma), \label{eqn:U-t-app} \\
     \begin{bmatrix} \dot{\mb g}_t(\gamma) \\ \dot{\mb h}_t(\gamma) \end{bmatrix} \;&=\; \lim_{ \tau \rightarrow 0 } \paren{ \begin{bmatrix} \mb g_{t+\tau}(\gamma) \\ \mb h_{t+\tau}(\gamma) \end{bmatrix} - \begin{bmatrix} \mb g_t(\gamma) \\ \mb h_t(\gamma) \end{bmatrix} } \;/\; {\tau } \;=\;  -\; \alpha \cdot \begin{bmatrix}
     \mb r_t(\gamma) \circ \mb g_t(\gamma) \\
     - \mb r_t(\gamma) \circ \mb h_t(\gamma),
    \end{bmatrix}, \label{eqn:g-h-t-app}
\end{align}
with $\mb r_t(\gamma) \;=\; \mc A( \mb U_t(\gamma) \mb U_t^\top(\gamma) ) + \mb g_t(\gamma) \circ \mb g_t(\gamma) - \mb h_t(\gamma) \circ \mb h_t(\gamma) - \mb y$, and they are initialized by 
\begin{align*}
    \mb U_0(\gamma) \;=\; \gamma \mb I, \quad \mb g_0(\gamma) \;=\; \gamma \mb 1,\quad h_0(\gamma) \;=\; \gamma \mb 1.
\end{align*}
Let $\mb X_t(\gamma) = \mb U_t(\gamma)\mb U_t^\top (\gamma)$, and let $ \mb X_\infty\paren{ \gamma }$, $\mb g_\infty(\gamma)$, and $\mb g_\infty(\gamma)$ be the limit points defined as
\begin{align}\label{eqn:X-h-g-infty-app}
    \mb X_\infty\paren{ \gamma } \;:=\; \lim_{ t \rightarrow +\infty } \mb X_t(\gamma),\quad \mb g_\infty(\gamma) \;:=\; \lim_{ t \rightarrow +\infty } \mb g_t(\gamma),\quad \mb h_\infty(\gamma) \;:=\; \lim_{ t \rightarrow +\infty } \mb h_t(\gamma).
\end{align}
Suppose that our initialization is infinitesimally small such that
\begin{align*}
    \wh{\mb X} \;:=\; \lim_{\gamma \rightarrow 0 } \mb X_{\infty}(\gamma),\quad \wh{\mb g} \;:=\; \lim_{\gamma \rightarrow 0 } \mb g_{\infty}(\gamma),\quad \wh{\mb h} \;:=\; \lim_{\gamma \rightarrow 0 } \mb h_{\infty}(\gamma)
\end{align*}
exist and $( \wh{\mb X}, \wh{\mb g}, \wh{\mb h} )$ is a global optimal solution to
\begin{align}\label{eq:rms-over-app}
    \min_{\mb U \in \bb R^{n \times r'} , \{\mb g, \mb h\} \subseteq \bb R^m }\; f(\mb U, \mb g,\mb h) \;:=\; \frac{1}{4}  \norm{  \mc A\paren{\mb U \mb U^\top} + \left(\mb g \circ \mb g - \mb h \circ \mb h\right) - \mb y }{2}^2,
\end{align}
with $ \mc A(\wh{\mb X}) + \wh{\mb s} \;=\; \mb y$ and $ \wh{\mb s}\;=\; \wh{\mb g} \circ \wh{\mb g} - \wh{\mb h} \circ \wh{\mb h}$.
Then we have $\wh{\mb g} \circ \wh{\mb h} = \mb 0$, and $(\wh{\mb X},\wh{\mb s} )$ is also a global optimal solution to 
\begin{align}\label{eqn:app-rpca-cvx}
    \min_{ \mb X \in \bb R^{n\times n}, \mb s \in \bb R^m }\; \norm{ \mb X }{*} \;+\; \lambda \cdot \norm{ \mb s }{1},\quad \text{s.t.}\quad \mc A(\mb X)+ \mb s \;=\; \mb y,\;\; \mb X\succeq \mb 0.
\end{align}
with $\lambda = \alpha^{-1}$ and $\alpha>0$ being the balancing parameter in \eqref{eqn:g-h-t-app}.
\end{theorem}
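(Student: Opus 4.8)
The plan is to exploit the commuting structure to integrate the flow in closed form, then send the initialization scale $\gamma\to0$ and show that the resulting limit satisfies the KKT system of the convex program \eqref{eqn:app-rpca-cvx} with $\lambda=\alpha^{-1}$. \emph{Step 1 (closed-form trajectories).} First I would use that the symmetric matrices $\{\mb A_i\}_{i=1}^m$ pairwise commute and hence share a common orthonormal eigenbasis $\mb v_1,\dots,\mb v_n$; consequently every $\mc A^*(\mb r_\tau(\gamma))=\sum_i(\mb r_\tau(\gamma))_i\mb A_i$ is diagonal in this basis, and these matrices commute across different $\tau$. Writing $\bm\nu_t(\gamma):=\int_0^t\mb r_\tau(\gamma)\d\tau$, a direct verification from \eqref{eqn:U-t-app}, \eqref{eqn:g-h-t-app}, \eqref{eqn:X-dot} and the initialization gives
\begin{align*}
    \mb X_t(\gamma)=\gamma^2\exp\!\paren{-2\,\mc A^*\!\paren{\bm\nu_t(\gamma)}},\qquad
    \mb g_t(\gamma)=\gamma\,e^{-\alpha\bm\nu_t(\gamma)},\qquad
    \mb h_t(\gamma)=\gamma\,e^{\alpha\bm\nu_t(\gamma)},
\end{align*}
with the last two exponentials taken entrywise. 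Two consequences I would record at once: $\mb g_t(\gamma)\circ\mb h_t(\gamma)\equiv\gamma^2\mb 1$ for all $t$ (a conserved quantity of \eqref{eqn:g-h-t-app}), so $\wh{\mb g}\circ\wh{\mb h}=\lim_{\gamma\to0}\gamma^2\mb 1=\mb 0$; and since $\mb g_\infty(\gamma),\mb h_\infty(\gamma)$ exist and are entrywise positive, the limit $\bm\nu_\infty(\gamma):=\lim_{t\to\infty}\bm\nu_t(\gamma)=\tfrac1{2\alpha}\log\!\big(\mb h_\infty(\gamma)/\mb g_\infty(\gamma)\big)$ is finite, and $\wh{\mb X}=\lim_{\gamma\to0}\gamma^2\exp(-2\mc A^*(\bm\nu_\infty(\gamma)))$, $\wh{\mb s}=\lim_{\gamma\to0}\gamma^2\big(e^{-2\alpha\bm\nu_\infty(\gamma)}-e^{2\alpha\bm\nu_\infty(\gamma)}\big)$.

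\emph{Step 2 (the normalized potential as a dual certificate).} Next I would set $\bm\eta_\gamma:=\bm\nu_\infty(\gamma)/\log(1/\gamma)$. Finiteness of $\wh{\mb s}$ rules out any scalar exponential $\gamma^2 e^{\pm2\alpha(\bm\nu_\infty(\gamma))_k}$ blowing up, which forces $\|\bm\eta_\gamma\|_\infty\le\alpha^{-1}+o(1)$; passing to a subsequence, let $\bm\eta:=\lim_{\gamma\to0}\bm\eta_\gamma$, so $\|\bm\eta\|_\infty\le\alpha^{-1}$. Diagonalizing $\mb X_\infty(\gamma)$ in $\{\mb v_j\}$ and comparing exponential rates as $\gamma\to0$ shows: (a) $\langle\bm\eta,\mc A(\mb v_j\mb v_j^\top)\rangle=-1$ whenever $\mb v_j\in\Range(\wh{\mb X})$ and $\ge-1$ otherwise — equivalently $\mc A^*(-\bm\eta)\preceq\mb I$ with $\mc A^*(-\bm\eta)\wh{\mb X}=\wh{\mb X}$; and (b) from the analogous rate comparison for $\wh{\mb s}$, $(-\bm\eta)_k=\alpha^{-1}\sgn(\wh s_k)$ whenever $\wh s_k\neq0$ and $|(-\bm\eta)_k|\le\alpha^{-1}$ otherwise. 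The decisive point is that the single vector $\bm\nu^\star:=-\bm\eta$ certifies both structures at once, with the weight on the $\ell_1$ part pinned to exactly $\alpha^{-1}$ by the asymmetry between the learning rates of $\mb U$ and $(\mb g,\mb h)$.

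\emph{Step 3 (assembling the KKT system).} Finally I would verify that $\bm\nu^\star$ and a suitable PSD multiplier satisfy the KKT conditions of \eqref{eqn:app-rpca-cvx} with $\lambda=\alpha^{-1}$: a vector $\bm\nu$ and a matrix $\mb Z\succeq\mb 0$ with $\mc A^*(\bm\nu)+\mb Z\in\partial\norm{\wh{\mb X}}{*}$, $\mb Z\wh{\mb X}=\mb 0$, $\bm\nu\in\alpha^{-1}\partial\norm{\wh{\mb s}}{1}$, together with primal feasibility $\mc A(\wh{\mb X})+\wh{\mb s}=\mb y$, $\wh{\mb X}\succeq\mb 0$. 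Taking $\bm\nu=\bm\nu^\star$, the matrix $\mc A^*(\bm\nu^\star)$ is diagonal in $\{\mb v_j\}$ with eigenvalues $\le1$ and equal to $1$ on $\Range(\wh{\mb X})$; choosing $\mb Z$ diagonal in the same basis with entry $\max(0,-1-d_j)$ on each $\mb v_j\notin\Range(\wh{\mb X})$ (with $d_j$ the corresponding eigenvalue of $\mc A^*(\bm\nu^\star)$) makes $\mc A^*(\bm\nu^\star)+\mb Z$ a valid nuclear-norm subgradient at $\wh{\mb X}$ and ensures $\mb Z\wh{\mb X}=\mb 0$; the condition $\bm\nu^\star\in\alpha^{-1}\partial\norm{\wh{\mb s}}{1}$ is exactly Step 2(b); and primal feasibility is part of the hypothesis. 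By convexity, KKT forces $(\wh{\mb X},\wh{\mb s})$ to be a global optimum of \eqref{eqn:app-rpca-cvx} with $\lambda=\alpha^{-1}$, which together with $\wh{\mb g}\circ\wh{\mb h}=\mb 0$ from Step 1 finishes the proof.

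\emph{Main obstacle.} Steps 1 and 3 are largely bookkeeping once commutativity is in hand; the real work is Step 2 — making the $\gamma\to0$ asymptotics rigorous (boundedness of $\bm\eta_\gamma$ and existence of a subsequential limit; a careful trichotomy of which scalar exponentials vanish, stay bounded, or would blow up) and, crucially, checking that the same normalized potential $-\bm\eta$ yields a dual certificate for the low-rank part and for the sparse part with the precise constant $\lambda=1/\alpha$. This is also the only place the commutativity hypothesis is genuinely used, since it is what produces the closed forms of Step 1 and the single simultaneous diagonalization used throughout.
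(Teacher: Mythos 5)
Your proposal is correct and follows essentially the same route as the paper's proof: integrate the gradient flow in closed form using a common eigenbasis of the $\mb A_i$, normalize the accumulated potential by $\log(1/\gamma)$ to obtain the dual certificate, and verify the KKT conditions for \eqref{eqn:app-rpca-cvx} with $\lambda = \alpha^{-1}$ via the same trichotomy on the eigenvalues of $\wh{\mb X}$ and the signs of $\wh{s}_i$. Your observation that $\mb g_t\circ\mb h_t\equiv\gamma^2\mb 1$ is a conserved quantity is a cleaner way to obtain $\wh{\mb g}\circ\wh{\mb h}=\mb 0$ than the paper states it, and your use of subsequential limits for $\bm\eta_\gamma$ is fine since any subsequential dual certificate already suffices for optimality. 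One minor simplification worth noting: in Step 3 you do not need the $\max(0,-1-d_j)$ construction for $\mb Z$; because the problem constrains $\mb X\succeq\mb 0$, one can write $\|\mb X\|_*=\trace(\mb X)$ and take $\mb Z = \mb I - \mc A^*(\bm\nu^\star)$ directly (as the paper does), so that PSD-ness and complementary slackness of $\mb Z$ are exactly what your Step 2(a) already proves, with no need to invoke the general nuclear-norm subdifferential.
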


\begin{proof}
From \eqref{eqn:U-t-app}, we can derive the gradient flow for $\mb X_t(\gamma)$ via chain rule
\begin{align}\label{eqn:X-dot-app}
    \dot{\mb X}_t(\gamma) \;=\; \dot{\mb U}_t(\gamma) \mb U_t^\top(\gamma) \;+\; \mb U_t(\gamma) \dot{\mb U}_t^\top(\gamma) \;=\; - \mc A^* (\mb r_t(\gamma)) \mb X_t(\gamma) - \mb X_t(\gamma) \mc A^*(\mb r_t(\gamma)).
\end{align}
We want to show that when the initialization is infinitesimally small (i.e., $\gamma \rightarrow 0$), the limit points of the gradient flows $\mb X_t(\gamma) = \mb U_t(\gamma) \mb U_t^\top(\gamma) $ and $\mb s_t(\gamma) = \mb g_t(\gamma) \circ \mb g_t(\gamma) - \mb h_t(\gamma) \circ \mb h_t(\gamma) $ are optimal solutions for \eqref{eqn:app-rpca-cvx} as $t \rightarrow +\infty $. Towards this goal, let us first look at the optimality condition for \eqref{eqn:app-rpca-cvx}. From Lemma \ref{lem:optimality}, we know that if $( \wh{\mb X},\wh{\mb s})$ with
\begin{align*}
    \wh{\mb X} \;=\; \lim_{\gamma \rightarrow 0 } \mb X_{\infty}(\gamma), \quad \wh{\mb s} \;=\; \wh{\mb g} \circ \wh{\mb g} - \wh{\mb h} \circ \wh{\mb h} \quad \text{with}\quad \wh{\mb g} \;:=\; \lim_{\gamma \rightarrow 0 } \mb g_{\infty}(\gamma),\quad \wh{\mb h} \;:=\; \lim_{\gamma \rightarrow 0 } \mb h_{\infty}(\gamma)
\end{align*}
is an optimal solution for \eqref{eqn:app-rpca-cvx} then there exists a \emph{dual certificate} $\bm \nu$ such that
\begin{align*}
    \mc A(\wh{\mb X}) + \wh{\mb s} \;=\; \mb y,\quad \paren{ \mb I - \mc A^*(\bm \nu) } \cdot \wh{\mb X} \;=\; \mb 0,\quad \mc A^*(\bm \nu)\;\preceq \; \mb I  ,\quad \bm \nu \in \lambda \cdot \sign(\wh{\mb s} ), \quad \wh{\mb X} \succeq \mb 0,
\end{align*}
where $\sign(\wh{\mb s} )$ is defined in \eqref{eqn:sign-operator}, which is the subdifferential of $\norm{ \cdot }{1}$. 
Thus, it suffices to construct a dual certificate $\bm \nu$ such that $(\wh{\mb X},\wh{\mb s})$ satisfies the equation above. 

Since $( \wh{\mb X}, \wh{\mb g}, \wh{\mb h} )$ is a global optimal solution to \eqref{eq:rms-over-app}, we automatically have $\mc A(\wh{\mb X}) + \wh{\mb s} \;=\; \mb y$ and $\wh{\mb X} \succeq \mb 0 $. On the other hand, given that $\Brac{\mb A_i}_{i=1}^m$ commutes and \eqref{eqn:X-dot-app} and \eqref{eqn:g-h-t-app} hold for $\mb X_t$, $\mb g_t$ and $\mb h_t$, by Lemma \ref{lem:X-g-h-t}, we know that
\begin{align}
    \mb X_t(\gamma) \; &=\; \exp\paren{ \mc A^*(\bm \xi_t(\gamma)) } \cdot \mb X_0(\gamma) \cdot \exp\paren{ \mc A^*(\bm \xi_t(\gamma)) } , \label{eqn:X-t-0} \\
    \mb g_t(\gamma) \;& =\; \mb g_0(\gamma) \circ \exp \paren{ \alpha \bm \xi_t(\gamma) },\quad \mb h_t(\gamma) \; =\; \mb h_0(\gamma) \circ \exp \paren{ - \alpha \bm \xi_t(\gamma) }, \label{eqn:g-h-t-0}
\end{align}
where $\bm \xi_t(\gamma) = - \int_0^t \mb r_\tau(\gamma) d \tau $. Let $\bm \xi_\infty(\gamma) := \lim_{t \rightarrow +\infty } \bm \xi_t(\gamma)$, by Lemma \ref{lem:dual-1} and Lemma \ref{lem:dual-2}, we can construct
\begin{align*}
    \bm \nu(\gamma) \;=\; \frac{\bm \xi_\infty(\gamma)}{ \log \paren{ 1/\gamma }},
\end{align*}
such that
\begin{align*}
    \lim_{ \gamma \rightarrow 0 } \mc A^* \paren{ \bm \nu(\gamma) } \; \preceq \; \mb I, \quad \lim_{ \gamma \rightarrow 0 } \brac{ \mb I - \mc A^* \paren{ \bm \nu(\gamma) } } \cdot \wh{\mb X} \;=\; \mb 0,
\end{align*}
and
\begin{align*}
    \lim_{\gamma\rightarrow 0} \; \bm \nu(\gamma) \;\in\; \alpha^{-1} \cdot \sign( \wh{\mb s} ),  \quad \lim_{\gamma \rightarrow 0 } \mb g(\gamma) \circ \mb h(\gamma) \;=\; \mb 0.
\end{align*}
This shows the exists of the dual certificate $\bm \nu(\gamma)$ such that the optimality condition holds for $(\wh \mX, \wh \vs)$. Hence, $(\wh \mX, \wh \vs)$ is also a global optimal solution to \eqref{eqn:app-rpca-cvx}.
\end{proof}

\begin{lemma}\label{lem:optimality}
If $( \wh{\mb X}, \wh{\mb s} )$ is an optimal solution for \eqref{eqn:app-rpca-cvx}, then there exists a dual certificate $\bm \nu \in \bb R^m$, that
\begin{align}\label{eqn:cvx-kkt}
    \mc A(\wh{\mb X}) + \wh{\mb s} \;=\; \mb y,\quad \paren{ \mb I - \mc A^*(\bm \nu) } \cdot \wh{\mb X} \;=\; \mb 0,\quad \bm \nu \in \lambda \cdot \sign(\wh{\mb s} ), \quad \mb I \succeq \mc A^*(\bm \nu),\; \wh{\mb X} \succeq \mb 0,
\end{align}
where $\sign(\mb s)$ is the subdifferential of $\norm{\mb s}{1}$ with each entry
\begin{align}\label{eqn:sign-operator}
    \sign(s) \;:=\; \begin{cases} s/\abs{s} & s \not = 0, \\ [-1,1] & s =0.  \end{cases}
\end{align}

\end{lemma}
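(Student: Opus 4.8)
The statement is the KKT / dual-certificate characterization of the convex program \eqref{eqn:app-rpca-cvx}, so the natural route is a standard Lagrangian-duality argument. The first observation is that on the feasible set $\mb X\succeq\mb 0$ one has $\norm{\mb X}{*}=\trace(\mb X)=\langle\mb I,\mb X\rangle$, so \eqref{eqn:app-rpca-cvx} is equivalent to minimizing the linear-plus-polyhedral objective $\langle\mb I,\mb X\rangle+\lambda\norm{\mb s}{1}$ over $\mb X\succeq\mb 0$, $\mb s\in\bb R^m$ subject to the affine constraint $\mc A(\mb X)+\mb s=\mb y$ (here one reads $\mb X\succeq\mb 0$ on the symmetric matrices, so the $\mb A_i$ may be taken symmetric without loss of generality). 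Introducing a multiplier $\bm\nu\in\bb R^m$ for the affine constraint while keeping $\mb X\succeq\mb 0$ explicit, the Lagrangian is
\[
L(\mb X,\mb s,\bm\nu)=\langle\mb I-\mc A^*(\bm\nu),\mb X\rangle+\bigl(\lambda\norm{\mb s}{1}-\langle\bm\nu,\mb s\rangle\bigr)+\langle\bm\nu,\mb y\rangle ,
\]
where $\mc A^*$ is the adjoint of $\mc A$. Minimizing over $\mb X\succeq\mb 0$ yields $-\infty$ unless $\mc A^*(\bm\nu)\preceq\mb I$, in which case the infimum is $0$ (attained at $\mb X=\mb 0$); minimizing over $\mb s$ yields $-\infty$ unless $\norm{\bm\nu}{\infty}\le\lambda$, in which case the infimum is again $0$ (attained at $\mb s=\mb 0$). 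Hence the dual is $\max_{\bm\nu}\langle\bm\nu,\mb y\rangle$ subject to $\mc A^*(\bm\nu)\preceq\mb I$ and $\norm{\bm\nu}{\infty}\le\lambda$.

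Next I would invoke strong duality together with attainment of the dual optimum. The primal is strictly feasible in the conic sense: $\mb X=\mb I\succ\mb 0$ together with $\mb s=\mb y-\mc A(\mb I)$ satisfies all constraints, so Slater's condition holds; since the remaining constraints are affine and the objective is proper convex, strong duality holds and the dual supremum is attained at some $\bm\nu$ with $\mc A^*(\bm\nu)\preceq\mb I$ and $\norm{\bm\nu}{\infty}\le\lambda$. Fix such a $\bm\nu$ and the given primal optimum $(\wh{\mb X},\wh{\mb s})$. Using primal feasibility $\mc A(\wh{\mb X})+\wh{\mb s}=\mb y$ to insert a zero term, and then $\mb I-\mc A^*(\bm\nu)\succeq\mb 0$, $\wh{\mb X}\succeq\mb 0$ and $\norm{\bm\nu}{\infty}\le\lambda$, one obtains the weak-duality chain
\[
\langle\mb I,\wh{\mb X}\rangle+\lambda\norm{\wh{\mb s}}{1}
=\langle\mb I-\mc A^*(\bm\nu),\wh{\mb X}\rangle+\bigl(\lambda\norm{\wh{\mb s}}{1}-\langle\bm\nu,\wh{\mb s}\rangle\bigr)+\langle\bm\nu,\mb y\rangle
\;\ge\;\langle\bm\nu,\mb y\rangle ,
\]
and strong duality (the leftmost quantity equals the dual optimum $\langle\bm\nu,\mb y\rangle$) forces both bracketed terms to vanish.

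Finally I would read off the two complementary-slackness conclusions. From $\langle\mb I-\mc A^*(\bm\nu),\wh{\mb X}\rangle=0$ with both matrices PSD, the standard fact that $\trace(\mb P\mb Q)=0$ with $\mb P,\mb Q\succeq\mb 0$ gives $\mb P^{1/2}\mb Q^{1/2}=\mb 0$ and hence $\mb P\mb Q=\mb 0$ yields $(\mb I-\mc A^*(\bm\nu))\wh{\mb X}=\mb 0$. From $\langle\bm\nu,\wh{\mb s}\rangle=\lambda\norm{\wh{\mb s}}{1}$ with $\norm{\bm\nu}{\infty}\le\lambda$, comparing coordinatewise against $\sum_i|\nu_i||\wh s_i|\le\lambda\sum_i|\wh s_i|$ forces $\nu_i=\lambda\cdot\sign(\wh s_i)$ whenever $\wh s_i\neq 0$ while leaving $|\nu_i|\le\lambda$ otherwise, i.e. $\bm\nu\in\lambda\cdot\sign(\wh{\mb s})$ in the sense of \eqref{eqn:sign-operator}. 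Combined with primal feasibility $\mc A(\wh{\mb X})+\wh{\mb s}=\mb y$, $\wh{\mb X}\succeq\mb 0$ and dual feasibility $\mc A^*(\bm\nu)\preceq\mb I$, this is precisely \eqref{eqn:cvx-kkt}. The only steps that are not pure bookkeeping are (i) ensuring Slater's condition delivers \emph{attainment} of the dual optimum rather than merely equality of optimal values — this is exactly where the explicit strictly feasible point $\mb X=\mb I$ enters — and (ii) the PSD fact used to pass from $\langle\mb I-\mc A^*(\bm\nu),\wh{\mb X}\rangle=0$ to the matrix equation; everything else is routine.
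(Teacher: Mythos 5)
Your proof is correct and takes the same Lagrangian/KKT route as the paper, but it is more careful on one genuine point. The paper introduces a separate PSD multiplier $\mb\Gamma\succeq\mb 0$, writes down the KKT stationarity/feasibility/slackness conditions, and then eliminates $\mb\Gamma = \mb I - \mc A^*(\bm\nu)$; it never verifies a constraint qualification, which is exactly what guarantees that a dual certificate \emph{exists} for every primal optimum --- and existence is the content of the lemma as stated (``if optimal, then there exists $\bm\nu$''). Your argument supplies that step: the strictly feasible point $\mb X=\mb I\succ\mb 0$, $\mb s=\mb y-\mc A(\mb I)$ gives Slater's condition, hence strong duality with dual attainment, and the weak-duality chain then forces both slackness terms to vanish. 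Eliminating $\mb\Gamma$ in the paper amounts to the same thing as your step of minimizing the Lagrangian over the PSD cone (both reduce to requiring $\mc A^*(\bm\nu)\preceq\mb I$), so the bookkeeping is equivalent; what you add is the Slater verification and an explicit statement of the PSD trace fact $\langle\mb P,\mb Q\rangle=0\Rightarrow\mb P\mb Q=\mb 0$, both of which the paper leaves implicit.
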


\begin{proof}
The Lagrangian function of the problem can be written as
\begin{align*}
    \mc L\paren{ \mb X, \mb s, \bm \nu, \mb \Gamma } \;=\; \trace\paren{ \mb X } + \lambda \norm{ \mb s }{1} + \bm \nu^\top \paren{ \mb y - \mc A(\mb X) - \mb s } - \innerprod{\mb X}{ \mb \Gamma }, 
\end{align*}
with $\bm \nu \in \bb R^m $ and $\mb \Gamma \in \bb R^{n\times n}$ being the dual variables, where $\mb \Gamma \succeq \mb 0 $. Thus, we can derive the Karush-Kuhn-Tucker (KKT) optimality condition for \eqref{eqn:app-rpca-cvx} as
\begin{align*}
    \mb 0 \in \partial \mc L\;: & \quad \mb I - \mb \Gamma - \mc A^*(\bm \nu) \;=\; \mb 0, \quad \bm \nu \in  \lambda \cdot \partial \norm{\mb s}{1} = \lambda \cdot \sign \paren{ \mb s } , \\
    \text{feasibility\;:} & \quad  \mc A(\mb X) + \mb s \;=\; \mb y,\; \mb X \succeq \mb 0, \; \mb \Gamma \succeq \mb 0, \\
    \text{complementary slackness\;:} & \quad \mb \Gamma \cdot \mb X \;=\; \mb 0,
\end{align*}
where $\partial(\cdot)$ denotes the subdifferential operator and $\sign(\mb s)$ is the subdifferential of $\norm{\mb s}{1}$ with each entry
\begin{align*}
    \sign(s) \;=\; \begin{cases} s/\abs{s} & s \not = 0, \\ [-1,1] & s =0.  \end{cases}
\end{align*}
Thus, we know that $\paren{ \wh{\mb X}, \wh{\mb s} }$ is global solution to \eqref{eqn:app-rpca-cvx} as long as there exists a \emph{dual certificate} $\bm \nu$ such that \eqref{eqn:cvx-kkt} holds, where we eliminated $\mb \Gamma$ by plugging in $\mb \Gamma = \mb I - \mc A^*(\bm \nu)$. 
\end{proof}

\begin{lemma}\label{lem:X-g-h-t}
Suppose that $\Brac{\mb A_i}_{i=1}^m$ commutes. Suppose \eqref{eqn:X-dot} and \eqref{eqn:g-h-t-app} hold for $\mb X_t$, $\mb g_t$ and $\mb h_t$, then
\begin{align}
    \mb X_t \; &=\; \exp\paren{ \mc A^*(\bm \xi_t) } \cdot \mb X_0 \cdot \exp\paren{ \mc A^*(\bm \xi_t) } \label{eqn:X-t}  \\
    \mb g_t \;& =\; \mb g_0 \circ \exp \paren{ \alpha \bm \xi_t },\quad \mb h_t \; =\; \mb h_0 \circ \exp \paren{ - \alpha \bm \xi_t },
\end{align}
where $\bm \xi_t = - \int_0^t \mb r_\tau d \tau $.
\end{lemma}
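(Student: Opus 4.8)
The plan is to handle the three curves separately, and in each case to exploit that the commutativity hypothesis forces every matrix of the form $\mc A^*(\vv) = \sum_i v_i \mb A_i$ to lie in a single commutative family (and, since the $\mb A_i$ may be taken symmetric without loss of generality for measurements of symmetric matrices, a simultaneously diagonalizable one). Write $\mb M_t := \mc A^*(\mb r_t)$ and $\mb N_t := \mc A^*(\bm\xi_t) = -\int_0^t \mb M_\tau\,d\tau$, so that $\mb N_0 = \mb 0$ and $\dot{\mb N}_t = -\mb M_t$. The single fact that drives everything is that $\mb N_t$ commutes with $\dot{\mb N}_t$ (indeed with every $\mb M_s$ and $\mb N_s$), which collapses the derivative of the matrix exponential to its scalar-like form $\tfrac{d}{dt}e^{\mb N_t} = \dot{\mb N}_t\,e^{\mb N_t} = e^{\mb N_t}\dot{\mb N}_t = -\mb M_t e^{\mb N_t}$, and likewise $\tfrac{d}{dt}e^{-\mb N_t} = \mb M_t e^{-\mb N_t} = e^{-\mb N_t}\mb M_t$.

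For $\mb g_t$ and $\mb h_t$ the flow \eqref{eqn:g-h-t-app} is decoupled across coordinates: $\dot g_{t,i} = -\alpha\, r_{t,i}\,g_{t,i}$ and $\dot h_{t,i} = +\alpha\, r_{t,i}\,h_{t,i}$. Integrating these scalar linear ODEs (equivalently, checking by direct differentiation and invoking uniqueness, which also neatly covers the degenerate case where a coordinate of $\mb g_0$ or $\mb h_0$ vanishes and stays zero) gives $g_{t,i} = g_{0,i}\exp\!\big(-\alpha\!\int_0^t r_{\tau,i}\,d\tau\big) = g_{0,i}\exp(\alpha\,\xi_{t,i})$ and $h_{t,i} = h_{0,i}\exp(-\alpha\,\xi_{t,i})$, which is exactly the claimed Hadamard-exponential form $\mb g_t = \mb g_0\circ\exp(\alpha\bm\xi_t)$, $\mb h_t = \mb h_0\circ\exp(-\alpha\bm\xi_t)$.

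For $\mb X_t$ I would introduce the candidate conserved quantity $\mb Z_t := e^{-\mb N_t}\mb X_t\,e^{-\mb N_t}$ and show $\dot{\mb Z}_t \equiv \mb 0$. Differentiating by the product rule, substituting the Lyapunov equation $\dot{\mb X}_t = -\mb M_t\mb X_t - \mb X_t\mb M_t$ from \eqref{eqn:X-dot-app} and the exponential-derivative identity above, the resulting three terms are $\mb M_t\mb Z_t$, $-\mb M_t\mb Z_t - \mb Z_t\mb M_t$, and $\mb Z_t\mb M_t$ — which cancel in pairs precisely because $\mb M_t$ commutes with $e^{-\mb N_t}$. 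Hence $\mb Z_t = \mb Z_0 = \mb X_0$, i.e. $\mb X_t = e^{\mb N_t}\mb X_0 e^{\mb N_t}$. A slightly shorter alternative is to establish the factor-level identity first: $\tfrac{d}{dt}\big(e^{-\mb N_t}\mb U_t\big) = \mb M_t e^{-\mb N_t}\mb U_t - e^{-\mb N_t}\mb M_t\mb U_t = \mb 0$, so $\mb U_t = e^{\mb N_t}\mb U_0$, and then by symmetry of $\mb N_t$ (so $e^{\mb N_t} = (e^{\mb N_t})^\top$) one gets $\mb X_t = \mb U_t\mb U_t^\top = e^{\mb N_t}\mb X_0 e^{\mb N_t}$.

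The one genuinely delicate point, and the place where the commutativity assumption is indispensable, is the exponential-derivative identity: for a general time-dependent matrix $\mb N_t$ one only has $\tfrac{d}{dt}e^{\mb N_t} = \int_0^1 e^{s\mb N_t}\dot{\mb N}_t e^{(1-s)\mb N_t}\,ds$, and this collapses to $\dot{\mb N}_t e^{\mb N_t}$ only when $\mb N_t$ and $\dot{\mb N}_t = -\mb M_t$ commute — exactly what $\mb A_i\mb A_j = \mb A_j\mb A_i$ provides. I would therefore state explicitly up front that $\{\mb A_i\}$, $\{\mb M_t\}_t$, and $\{\mb N_t\}_t$ all lie in one commutative family, use that to license the simplified derivative and the term-by-term cancellations, and note that continuity of $t\mapsto\mb r_t$ along the assumed-existing trajectory is all that is needed to make the integral $\bm\xi_t = -\int_0^t \mb r_\tau\,d\tau$ and the differentiations legitimate. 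No other obstacle is anticipated.
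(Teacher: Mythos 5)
The proposal is correct and takes essentially the same approach as the paper: integrating the decoupled scalar linear ODEs for $\vg_t,\vh_t$ and, for $\mX_t$, showing $e^{-\mc A^*(\vxi_t)}\mX_t\, e^{-\mc A^*(\vxi_t)}$ is conserved, which the paper only gestures at with ``analogous argument.'' You also make explicit the two details the paper leaves implicit — that commutativity of $\{\mA_i\}$ is exactly what collapses $\tfrac{d}{dt}e^{\mc A^*(\vxi_t)}$ to its scalar-like form, and that ODE uniqueness covers any vanishing coordinate of $\vg_0$ or $\vh_0$ where dividing by $\vg_\tau$ would be illegitimate — which is a worthwhile tightening.
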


\begin{proof}
From \eqref{eqn:g-h-t-app}, we know that
\begin{align*}
    \frac{d \mb g_t }{ dt } \;=\; - \alpha \mb r_t \circ \mb g_t,
\end{align*}
where the differentiation $\frac{d \mb g_t  }{dt}$ is entrywise for $\mb g_t$. Thus, we have
\begin{align*}
    \int_0^t \frac{d \mb g_\tau}{ \mb g_\tau } \;=\; - \alpha \int_0^t  \mb r_{\tau} d\tau \quad  \Longrightarrow \quad  \log \mb g_t - \log \mb g_0 \;=\;  \alpha \bm \xi_t \quad \Longrightarrow \quad \mb g_t \;=\; \mb g_0 \circ \exp \paren{ \alpha \bm \xi_t },
\end{align*}
where all the operators are entrywise. Similarly, $\mb h_t \; =\; \mb h_0 \circ \exp \paren{ - \alpha \bm \xi_t }$ holds. 

For \eqref{eqn:X-t}, by using \eqref{eqn:X-dot-app} and the fact that $\Brac{\mb A_i}_{i=1}^m$ commutes, we can derive it with an analogous argument.
\end{proof}

\begin{lemma}\label{lem:dual-1}
Under the settings of \Cref{thm:main-app} and Lemma \ref{lem:X-g-h-t}, for any $\gamma>0$ there exists  
\begin{align}\label{eqn:nu-construct}
    \bm \nu ( \gamma ) \;=\;  \frac{\bm \xi_\infty(\gamma)}{ \log \paren{ 1/\gamma }},
\end{align}
such that
\begin{align*}
    \lim_{ \gamma \rightarrow 0 } \mc A^* \paren{ \bm \nu(\gamma) } \; \preceq \; \mb I, \quad \lim_{ \gamma \rightarrow 0 } \brac{ \mb I - \mc A^* \paren{ \bm \nu(\gamma) } } \cdot \wh{\mb X} \;=\; \mb 0,
\end{align*}
where $\bm \xi_\infty(\gamma) = \lim_{t \rightarrow 0 } \bm \xi_t(\gamma) $ with  $\bm \xi_t(\gamma) = - \int_0^t \mb r_\tau(\gamma) d \tau $.
\end{lemma}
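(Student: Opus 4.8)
The plan is to verify directly that the $\bm\nu(\gamma)$ exhibited in the statement has the two claimed properties, by pushing everything through the common eigenbasis of the commuting measurement matrices so that both matrix claims reduce to scalar statements about growth rates in $\gamma$. First I would invoke Lemma~\ref{lem:X-g-h-t} with the initialization $\mb U_0(\gamma)=\gamma\mb I$, so that $\mb X_0(\gamma)=\gamma^2\mb I$; since $\mb X_0(\gamma)$ is a scalar multiple of the identity, \eqref{eqn:X-t} collapses to $\mb X_t(\gamma)=\gamma^2\exp\!\big(2\,\mc A^*(\bm\xi_t(\gamma))\big)$, and letting $t\to+\infty$ gives $\mb X_\infty(\gamma)=\gamma^2\exp\!\big(2\,\mc A^*(\bm\xi_\infty(\gamma))\big)$. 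Because $\mb A_1,\dots,\mb A_m$ are symmetric and commute, I would fix an orthogonal $\mb Q$ simultaneously diagonalizing all of them, write $\mb A_i=\mb Q\,\diag(\lambda_i^{(1)},\dots,\lambda_i^{(n)})\,\mb Q^\top$, and set $\mu_j(\vv):=\sum_{i=1}^m \vv_i\lambda_i^{(j)}$, so that $\mc A^*(\vv)=\mb Q\,\diag\big(\mu_1(\vv),\dots,\mu_n(\vv)\big)\,\mb Q^\top$ for every $\vv\in\R^m$. Then every $\mb X_t(\gamma)$, hence $\mb X_\infty(\gamma)$ and $\wh{\mb X}=\lim_{\gamma\to0}\mb X_\infty(\gamma)$, is diagonalized by this same $\mb Q$; writing $\wh x_j\ge0$ for the $j$-th eigenvalue of $\wh{\mb X}$, the closed form reads, eigendirection by eigendirection, $\wh x_j=\lim_{\gamma\to0}\gamma^2\exp\!\big(2\mu_j(\bm\xi_\infty(\gamma))\big)$.

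Next I would take logarithms of this scalar identity to read off the growth rate of $\mu_j(\bm\xi_\infty(\gamma))$ as $\gamma\to0$. When $\wh x_j>0$ one gets $2\log\gamma+2\mu_j(\bm\xi_\infty(\gamma))\to\log\wh x_j$, i.e.\ $\mu_j(\bm\xi_\infty(\gamma))=\log(1/\gamma)+\tfrac12\log\wh x_j+o(1)$, so $\mu_j(\bm\xi_\infty(\gamma))/\log(1/\gamma)\to1$; when $\wh x_j=0$, the vanishing $\gamma^2\exp\!\big(2\mu_j(\bm\xi_\infty(\gamma))\big)\to0$ forces $\mu_j(\bm\xi_\infty(\gamma))-\log(1/\gamma)\to-\infty$ and hence $\mu_j(\bm\xi_\infty(\gamma))/\log(1/\gamma)<1$ for all small $\gamma$. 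Since $\mc A^*(\bm\nu(\gamma))=\mb Q\,\diag\big(\mu_j(\bm\xi_\infty(\gamma))/\log(1/\gamma)\big)_j\,\mb Q^\top$, the two cases together give $\limsup_{\gamma\to0}\lambda_{\max}\!\big(\mc A^*(\bm\nu(\gamma))\big)\le1$, which is the first claim $\lim_{\gamma\to0}\mc A^*(\bm\nu(\gamma))\preceq\mb I$. For the second claim I would write $\big(\mb I-\mc A^*(\bm\nu(\gamma))\big)\wh{\mb X}=\mb Q\,\diag\!\Big(\big(1-\mu_j(\bm\xi_\infty(\gamma))/\log(1/\gamma)\big)\wh x_j\Big)_j\,\mb Q^\top$: the $j$-th diagonal entry is identically $0$ when $\wh x_j=0$ and tends to $0$ when $\wh x_j>0$ by the first case above, hence $\big(\mb I-\mc A^*(\bm\nu(\gamma))\big)\wh{\mb X}\to\mb 0$.

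The hard part will be the zero-eigenvalue case: one has to rule out $\mu_j(\bm\xi_\infty(\gamma))$ overshooting $\log(1/\gamma)$ by a divergent amount, and this is exactly what the prefactor $\gamma^2$ in $\mb X_\infty(\gamma)=\gamma^2\exp\!\big(2\mc A^*(\bm\xi_\infty(\gamma))\big)$ buys us once the convergence $\mb X_\infty(\gamma)\to\wh{\mb X}$ is taken as given. Two smaller technical points I would still need to pin down are: (i) ``$\lim_{\gamma\to0}\mc A^*(\bm\nu(\gamma))\preceq\mb I$'' is best read as the assertion that every subsequential limit is $\preceq\mb I$, which is all the dual-certificate construction in \Cref{thm:main-app} actually consumes; and (ii) the simultaneous-diagonalization claim, which holds because each flow iterate is an analytic function of $\mc A^*(\bm\xi_t(\gamma))=\sum_{i}[\bm\xi_t(\gamma)]_i\mb A_i$ and passing to the limits in $t$ and then $\gamma$ preserves the common eigenstructure of the $\mb A_i$.
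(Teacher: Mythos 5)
Your proposal is correct and follows essentially the same route as the paper's proof: pass to the common eigenbasis of the commuting symmetric $\mb A_i$, use $\mb X_\infty(\gamma)=\gamma^2\exp(2\,\mc A^*(\bm\xi_\infty(\gamma)))$, take logs eigendirection by eigendirection, and split into the $\wh x_j>0$ and $\wh x_j=0$ cases to control the diagonal entries of $\mc A^*(\bm\nu(\gamma))$. Your two technical remarks are in fact slight improvements over the paper's phrasing: the paper writes $\lim_{\gamma\to0}\lambda_k(\mc A^*(\bm\nu(\gamma)))<1$ in the zero-eigenvalue case even though the argument only yields a $\limsup$ bound, and your reading of ``$\lim_{\gamma\to0}\mc A^*(\bm\nu(\gamma))\preceq\mb I$'' as a statement about subsequential limits is exactly what the downstream dual-certificate argument needs.
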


\begin{proof}
Given $\mb U_0 = \gamma \mb I$, we have $\mb X_0 = \mb U_0 \mb U_0^\top = \gamma^2 \mb I$. By the expression for $\mb X_t$ in \eqref{eqn:X-t-0}, we have
\begin{align}
    \mb X_\infty(\gamma) \;=\; \gamma^2 \cdot \exp\paren{ 2 \mc A^*\paren{ \bm \xi_\infty(\gamma ) } }  
\end{align}
where $\bm \xi_\infty(\gamma ) = \lim_{ t \rightarrow \infty } \bm \xi_t(\gamma) $. Because $\Brac{\mb A_i}_{i=1}^m$ are symmetric and they commute, we know that they are simultaneously diagonalizable by an orthonormal basis $\mb \Omega = [ \bm \omega_1, \ldots,  \bm \omega_n ]
\in \bb R^{n \times n}$, i.e., 
\begin{align*}
    \mb \Omega \mb A_i \mb \Omega^\top = \mb \Lambda_i, \quad \mb \Lambda_i \text{ diagonal }, \quad \forall\; i = 1,2, \ldots, m,
\end{align*}
and so is $\mc A^*(\mb b)$ for any $\mb b \in \bb R^m$. 
Therefore, we have
\begin{align}\label{eqn:lambda-k-form}
    \lambda_k \paren{ \mb X_\infty(\gamma) } \;=\; \gamma^2 \cdot \exp\paren{ 2 \lambda_k\paren{ \mc A^*\paren{ \bm \xi_\infty(\gamma ) } }  } \;=\; \exp\paren{ 2 \lambda_k\paren{ \mc A^*\paren{ \bm \xi_\infty(\gamma ) } } + 2 \log \gamma },
\end{align}
where $\lambda_k(\cdot)$ denotes the $k$-th eigenvalue with respect to the $k$-th basis $\bm \omega_k$. 
Moreover $\mb X_\infty(\gamma)$ and its limit $\wh{\mb X}$ have the same eigen-basis $\mb \Omega$. Since we have $\mb X_\infty (\gamma)$ converges to $\wh{\mb X}$ when $\gamma \rightarrow 0$, then we have the eigenvalues
\begin{align}\label{eqn:X-gamma-converge}
   \lambda_k \paren{ \mb X_\infty(\gamma) } \; \rightarrow \; \lambda_k( \wh{\mb X} ),\quad \forall\; k\;=\;1,2,\ldots,n,
\end{align}
whenever $\gamma \rightarrow 0$.

\paragraph{Case 1: $\lambda_k(\wh{\mb X})>0$.} For any $k$ such that $\lambda_k(\wh \mX)>0$, from \eqref{eqn:lambda-k-form} and \eqref{eqn:X-gamma-converge}, we have
\begin{align*}
    \exp\paren{ 2 \lambda_k\paren{ \mc A^*\paren{ \bm \xi_\infty(\gamma ) } } + 2 \log \gamma } \; \rightarrow \; \lambda_k( \wh{\mb X} ),
\end{align*}
so that
\begin{align*}
    2 \lambda_k\paren{ \mc A^*\paren{ \bm \xi_\infty(\gamma ) } } + 2 \log \gamma - \log \lambda_k( \wh{\mb X} ) \; \rightarrow \; 0,
\end{align*}
which further implies that
\begin{align*}
    \lambda_k\paren{ \mc A^*\paren{ \frac{ \bm \xi_\infty(\gamma ) }{ \log(1/\gamma)  }  }  } - 1 - \frac{ \log \lambda_k(\wh{\mb X}) }{2 \log (1/\gamma) } \; \rightarrow \; 0.
\end{align*}
Now if we construct $\bm \nu ( \gamma ) $ as \eqref{eqn:nu-construct}, so that we conclude
\begin{align}\label{eqn:lambda-1}
    \lim_{\gamma \rightarrow 0 } \; \lambda_k \paren{ \mc A^*( \bm \nu(\gamma) ) } \;=\; 1, 
\end{align}
for any $k$ such that $\lambda_k( \wh{\mb X} )>0$.

\paragraph{Case 2: $\lambda_k(\wh{\mb X})=0$.} On the other hand, for any $k$ such that $ \lambda_k( \wh{\mb X} )=0 $, similarly from \eqref{eqn:lambda-k-form} and \eqref{eqn:X-gamma-converge}, we have
\begin{align*}
    \exp\paren{ 2 \lambda_k\paren{ \mc A^*\paren{ \bm \xi_\infty(\gamma ) } } + 2 \log \gamma } \;\rightarrow \; 0,
\end{align*}
when $\gamma \rightarrow 0$. Thus, for any small $\epsilon \in (0,1)$, there exists some $\gamma_0 \in (0,1)$ such that for all $\gamma < \gamma_0$,
\begin{align*}
    \exp\paren{ 2 \lambda_k\paren{ \mc A^*\paren{ \bm \xi_\infty(\gamma ) } } + 2 \log \gamma } \;< \; \epsilon,
\end{align*}
which implies that
\begin{align*}
    \lambda_k\paren{ \mc A^*\paren{ \frac{ \bm \xi_\infty(\gamma ) }{ \log(1/\gamma) } } } \;-\;1 \;<\; \frac{\log \epsilon }{ 2 \log \paren{ 1/\gamma } } \;<\; 0.
\end{align*}
Thus, given the construction of $\bm \nu(\gamma)$ in \eqref{eqn:nu-construct}, we have
\begin{align*}
    \lambda_k \paren{ \mc A^*( \bm \nu(\gamma) ) } \;<\;1, \quad \forall \; \gamma \;<\; \gamma_0,
\end{align*}
which further implies that for any $k$ with $\lambda_k(\wh \mX) =0$, we have
\begin{align}\label{eqn:lambda-2}
    \lim_{\gamma \rightarrow 0 } \lambda_k \paren{ \mc A^*( \bm \nu(\gamma) ) } \;<\;1.
\end{align}

\paragraph{Putting things together.} Combining our analysis in \eqref{eqn:lambda-1} and \eqref{eqn:lambda-2}, we obtain
\begin{align*}
    \lim_{ \gamma \rightarrow 0 } \mc A^*( \bm \nu(\gamma) ) \;\preceq \; \mb I.
\end{align*}
On the other hand, per our analysis, we know that there exists an orthogonal matrix $\mb \Omega \in \bb R^{n \times n }$, such that $\mc A^*( \bm \nu(\gamma) )$ and  $\wh{\mb X}$ can be simultaneously diagonalized. Thus, we have
\begin{align*}
   \brac{ \mb I - \mc A^*( \bm \nu(\gamma) ) } \cdot \wh{\mb X} \;=\; \mb \Omega \cdot \paren{ \mb I - \mb \Lambda_{ \mc A^*( \bm \nu(\gamma) ) } }  \cdot  \mb \Lambda_{ \wh{\mb X} }  \cdot \mb \Omega^\top,
\end{align*}
where $\mb \Lambda_{ \mc A^*( \bm \nu(\gamma) ) }$ and $\mb \Lambda_{ \wh{\mb X} }$ are diagonal matrices, with entries being the eigenvalues of $\mc A^*( \bm \nu(\gamma) )$ and $\mb \Lambda_{ \wh{\mb X} }$, respectively. From our analysis for Case 1 and Case 2, we know that $ \lim_{ \gamma \rightarrow 0 }  \paren{ \mb I - \mb \Lambda_{ \mc A^*( \bm \nu(\gamma) ) } }  \cdot  \mb \Lambda_{ \wh{\mb X} } = \mb 0 $. Therefore, we also have
\begin{align*}
   \lim_{ \gamma \rightarrow 0 }\; \brac{ \mb I - \mc A^*( \bm \nu(\gamma) ) } \cdot \wh{\mb X} \;=\; \mb 0,
\end{align*}
as desired.
\end{proof}

\begin{lemma}\label{lem:dual-2}
Under the settings of \Cref{thm:main-app} and Lemma \ref{lem:X-g-h-t}, for any $\gamma>0$ there exists  
\begin{align}
    \bm \nu ( \gamma ) \;=\;  \frac{\bm \xi_\infty(\gamma)}{ \log \paren{ 1/\gamma }},
\end{align}
such that
\begin{align}\label{eqn:nu-sign}
    \lim_{\gamma\rightarrow 0} \; \bm \nu(\gamma) \;\in\; \alpha^{-1} \cdot \sign( \wh{\mb s} ),  \quad \lim_{\gamma \rightarrow 0 } \mb g(\gamma) \circ \mb h(\gamma) \;=\; \mb 0,
\end{align}
where $\wh{\mb s} = \wh{\mb g} \circ \wh{\mb g} - \wh{\mb h} \circ \wh{\mb h} $, and $\bm \xi_\infty(\gamma) = \lim_{t \rightarrow 0 } \bm \xi_t(\gamma) $ with  $\bm \xi_t(\gamma) = - \int_0^t \mb r_\tau(\gamma) d \tau $.
\end{lemma}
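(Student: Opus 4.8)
The plan is to read off the trajectories of $\mb g_t$ and $\mb h_t$ from Lemma~\ref{lem:X-g-h-t} and then carry out an entrywise asymptotic analysis as $\gamma\to 0$. With the initialization $\mb g_0(\gamma)=\mb h_0(\gamma)=\gamma\mb 1$, Lemma~\ref{lem:X-g-h-t} gives $\mb g_\infty(\gamma)=\gamma\exp\bigl(\alpha\bm\xi_\infty(\gamma)\bigr)$ and $\mb h_\infty(\gamma)=\gamma\exp\bigl(-\alpha\bm\xi_\infty(\gamma)\bigr)$, with all operations entrywise and $\bm\xi_\infty(\gamma)=\lim_{t\to\infty}\bm\xi_t(\gamma)$. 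Multiplying these shows $\mb g_\infty(\gamma)\circ\mb h_\infty(\gamma)=\gamma^2\mb 1$ for every $\gamma>0$ (in fact $\mb g_t\circ\mb h_t\equiv\gamma^2\mb 1$ along the whole flow), so the second assertion $\lim_{\gamma\to0}\mb g_\infty(\gamma)\circ\mb h_\infty(\gamma)=\mb 0$ is immediate, and as a by-product $\wh{\mb g}\circ\wh{\mb h}=\mb 0$. Since $\mb g_\infty(\gamma)\circ\mb h_\infty(\gamma)=\gamma^2\mb 1>\mb 0$, both $\mb g_\infty(\gamma)$ and $\mb h_\infty(\gamma)$ are entrywise strictly positive, so $\bm\xi_\infty(\gamma)=\tfrac1\alpha\bigl(\log\mb g_\infty(\gamma)-\log(\gamma\mb 1)\bigr)$ is a finite vector and $\bm\nu(\gamma)=\bm\xi_\infty(\gamma)/\log(1/\gamma)$ is well defined.

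Next I would analyze $\bm\nu(\gamma)$ coordinatewise. Writing $\xi_i(\gamma)$ for the $i$-th entry of $\bm\xi_\infty(\gamma)$, the recovered sparse vector has entries
\begin{align*}
 s_{\infty,i}(\gamma)\;=\;\gamma^2 e^{2\alpha\xi_i(\gamma)}-\gamma^2 e^{-2\alpha\xi_i(\gamma)}\;=\;2\gamma^2\sinh\bigl(2\alpha\xi_i(\gamma)\bigr),
\end{align*}
and by hypothesis $s_{\infty,i}(\gamma)\to\wh s_i$, the $i$-th entry of $\wh{\mb s}=\wh{\mb g}\circ\wh{\mb g}-\wh{\mb h}\circ\wh{\mb h}$, as $\gamma\to0$. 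If $\wh s_i>0$, then since $\gamma^2\to0$ we must have $\xi_i(\gamma)\to+\infty$; writing $2\gamma^2\sinh(2\alpha\xi_i(\gamma))=\gamma^2 e^{2\alpha\xi_i(\gamma)}\bigl(1-e^{-4\alpha\xi_i(\gamma)}\bigr)$ and taking logarithms gives $2\log\gamma+2\alpha\xi_i(\gamma)\to\log\wh s_i$, so dividing by $\log(1/\gamma)\to+\infty$ yields $\alpha\,\xi_i(\gamma)/\log(1/\gamma)\to1$, hence $\nu_i(\gamma)\to\alpha^{-1}=\alpha^{-1}\sign(\wh s_i)$. The case $\wh s_i<0$ is symmetric (using that $\sinh$ is odd), giving $\nu_i(\gamma)\to-\alpha^{-1}=\alpha^{-1}\sign(\wh s_i)$. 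If $\wh s_i=0$, I would instead use the identity $\gamma^2 e^{2\alpha|\xi_i(\gamma)|}=|s_{\infty,i}(\gamma)|+\gamma^2 e^{-2\alpha|\xi_i(\gamma)|}\to0$, which gives, for any $\epsilon\in(0,1)$ and all small $\gamma$, that $2\alpha|\xi_i(\gamma)|+2\log\gamma<\log\epsilon$, hence $\alpha|\xi_i(\gamma)|/\log(1/\gamma)<1+\log\epsilon/(2\log(1/\gamma))$; letting $\gamma\to0$ gives $\limsup_{\gamma\to0}|\nu_i(\gamma)|\le\alpha^{-1}$, so every limit point of $\nu_i(\gamma)$ lies in $[-\alpha^{-1},\alpha^{-1}]=\alpha^{-1}\sign(0)$. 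Combining the three cases over all coordinates gives $\lim_{\gamma\to0}\bm\nu(\gamma)\in\alpha^{-1}\sign(\wh{\mb s})$, the remaining piece of the dual certificate used in Theorem~\ref{thm:main-app}.

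The main obstacle is the coordinates with $\wh s_i=0$: there the estimate only bounds $|\xi_i(\gamma)|$ from above and does not by itself guarantee that $\nu_i(\gamma)$ converges, so strictly speaking one should either pass to a subsequence $\gamma_j\to0$ along which the bounded vector $\bm\nu(\gamma_j)$ converges and use that limit as the dual certificate, or supply an extra monotonicity/continuity argument for convergence; in either reading the $\limsup$ bound is exactly the entrywise $\le\alpha^{-1}$ inequality required by the subdifferential characterization of $\sign(\wh{\mb s})$. A secondary point of care is that the asymptotic simplification $\sinh(x)\sim\tfrac12 e^{x}$ is invoked only in the regime $|x|\to\infty$, which is legitimate precisely because $\gamma^2\to0$ forces $|\xi_i(\gamma)|\to\infty$ whenever $\wh s_i\ne0$; the off-support coordinates are treated without any such approximation via the exact identity above.
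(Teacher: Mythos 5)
Your proof is correct and follows essentially the same route as the paper's: extract $\mb g_\infty(\gamma)=\gamma\exp(\alpha\bm\xi_\infty(\gamma))$, $\mb h_\infty(\gamma)=\gamma\exp(-\alpha\bm\xi_\infty(\gamma))$ from Lemma~\ref{lem:X-g-h-t}, observe $\mb g_\infty\circ\mb h_\infty=\gamma^2\mb 1\to\mb 0$, and then split into the three cases $\wh s_i>0$, $\wh s_i<0$, $\wh s_i=0$, taking logs and dividing by $\log(1/\gamma)$ to identify $\lim\nu_i(\gamma)$. The subtlety you flag for $\wh s_i=0$ is real: the argument (in the paper as well) only delivers $\limsup_{\gamma\to0}|\nu_i(\gamma)|\le\alpha^{-1}$, not existence of the limit, so one should either pass to a subsequence along which the bounded $\bm\nu(\gamma)$ converges (which suffices to produce the dual certificate needed in Theorem~\ref{thm:main-app}) or read the lemma's conclusion as the inclusion of every limit point in $\alpha^{-1}\sign(\wh{\mb s})$; your remark makes this precise where the paper leaves it implicit.
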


\begin{proof}
Let $g_\infty^i(\gamma) $ and $h_\infty^i(\gamma) $ be the $i$th coordinate of $ \mb g_\infty(\gamma) $ and $ \mb h_\infty(\gamma) $ defined in \eqref{eqn:X-h-g-infty-app}, respectively. It follows from \eqref{eqn:g-h-t-0} that
\begin{align*}
    g_\infty^i(\gamma) \;=\; \gamma \cdot \exp\paren{ \alpha \cdot \xi_\infty^i(\gamma) } ,\quad h_\infty^i(\gamma) \;=\; \gamma \cdot \exp\paren{- \alpha \cdot \xi_\infty^i(\gamma) },\quad \forall \; i \;=\; 1,2, \ldots, m.
\end{align*}
When $\gamma \rightarrow 0$, we have
\begin{align*}
    g_\infty^i(\gamma) \cdot h_\infty^i(\gamma) \;=\; \gamma^2 \; \rightarrow \; 0,\quad \forall \; i \;=\; 1,2,\ldots,m,
\end{align*}
so that $\lim_{\gamma \rightarrow 0 } \mb g(\gamma) \circ \mb h(\gamma) \;=\; \mb 0$. This also implies that either $g_\infty^i(\gamma)$ or $h_\infty^i(\gamma)$ for any $i=1,2,\ldots,m$.

On the other hand, let us define 
\begin{align*}
    \mb s_\infty(\gamma)\;=\; \mb h_\infty(\gamma) \circ \mb h_\infty(\gamma) \; - \; \mb g_\infty(\gamma) \circ \mb g_\infty(\gamma),
\end{align*}
and let $ s_\infty^i(\gamma)$ be the $i$th coordinate of $\mb s_\infty(\gamma)$ with
\begin{align}\label{eqn:s-infty-i}
    s_\infty^i(\gamma) \;=\; \gamma^2 \cdot \exp\paren{ 2 \alpha \cdot  \xi_\infty^i(\gamma) } \; - \; \gamma^2 \cdot \exp\paren{ - 2 \alpha \cdot  \xi_\infty^i(\gamma) } .
\end{align}
Correspondingly, we know that $\wh{\mb s} = \lim_{\gamma\rightarrow 0 }  \mb s_\infty(\gamma) $ and let $s^i$ be the $i$th coordinate of $\wh{\mb s} = \wh{\mb g} \circ \wh{\mb g} - \wh{\mb h } \circ \wh{\mb h}$. In the following, we leverage on these to show that our construction of $\bm \nu(\gamma)$ satisfies \eqref{eqn:nu-sign}. We classify the entries $\wh{s}_i$ of $\wh{\mb s}$ $(i=1,2,\ldots,m)$ into three cases and analyze as follows.
\begin{itemize}[leftmargin=*]
    \item \textbf{Case 1: $\wh{s}_i>0$.} Since $\lim_{\gamma \rightarrow 0} s_\infty^i(\gamma) = \wh{s}_i>0$, from \eqref{eqn:s-infty-i} we must have $\xi_\infty^i(\gamma) \rightarrow +\infty $ when $\gamma \rightarrow 0$, so that $\exp\paren{ 2 \alpha \cdot  \xi_\infty^i(\gamma) } \rightarrow +\infty$ and $\exp\paren{ - 2 \alpha \cdot  \xi_\infty^i(\gamma) } \rightarrow 0$. Therefore, when $\gamma \rightarrow 0$, we have
    \begin{align*}
        \gamma^2 \exp\paren{2\alpha \cdot \xi_{\infty}^i(\gamma)} \;\rightarrow\; \wh s_i \quad &\Longrightarrow \quad  2\alpha \cdot \xi_{\infty}^i(\gamma)  - 2 \log \paren{1/\gamma} - \log \wh{s}_i  \;\rightarrow\; 0,  \\
        &\Longrightarrow \quad  \nu_i(\gamma) \;=\; \frac{\xi_{\infty}^i(\gamma) }{ \log \paren{1/\gamma} } \;\rightarrow \; \frac{1}{ \alpha  }. \;\; (\text{given}\;\; \log\paren{1/\gamma} \rightarrow +\infty)
    \end{align*}
    \item \textbf{Case 2: $\wh{s}_i<0$.} Since $\lim_{\gamma \rightarrow 0} s_\infty^i(\gamma) = \wh{s}_i<0$, from \eqref{eqn:s-infty-i} we must have $\xi_\infty^i(\gamma) \rightarrow -\infty $ when $\gamma \rightarrow 0$, so that $\exp\paren{ 2 \alpha \cdot  \xi_\infty^i(\gamma) } \rightarrow 0$ and $\exp\paren{ - 2 \alpha \cdot  \xi_\infty^i(\gamma) } \rightarrow +\infty$. Therefore, when $\gamma \rightarrow 0$, we have
    \begin{align*}
        -\gamma^2 \exp\paren{-2\alpha \cdot \xi_{\infty}^i(\gamma)} \;\rightarrow\; \wh s_i \quad &\Longrightarrow \quad  -2\alpha \cdot \xi_{\infty}^i(\gamma)  + 2 \log \paren{1/\gamma} - \log \wh{s}_i  \;\rightarrow\; 0,  \\
        &\Longrightarrow \quad \nu_i(\gamma) \;=\; \frac{\xi_{\infty}^i(\gamma) }{ \log \paren{1/\gamma} } \;\rightarrow \; -\frac{1}{ \alpha  }. 
    \end{align*}
    \item \textbf{Case 3: $\wh{s}_i=0$.} Since $\lim_{\gamma \rightarrow 0} s_\infty^i(\gamma) = \wh{s}_i=0$, from \eqref{eqn:s-infty-i} we must have $\gamma^2\cdot \exp\paren{2 \alpha \cdot \xi_\infty^i(\gamma) } \rightarrow 0 $ and $\gamma^2\cdot \exp\paren{-2 \alpha \cdot \xi_\infty^i(\gamma) } \rightarrow 0 $, when $\gamma \rightarrow 0$. Therefore, for any small $\epsilon \in (0,1)$, there exists some $\gamma_0>0$, such that for all $\gamma \in (0,\gamma_0)$, we have
    \begin{align*}
       & \gamma^2\cdot \max\Brac{ \exp\paren{2 \alpha \cdot \xi_\infty^i(\gamma) },\; \exp\paren{-2 \alpha \cdot \xi_\infty^i(\gamma) } }\; \leq \; \epsilon \\
       \Longrightarrow \quad & 2 \alpha \cdot \max \Brac{  \frac{\xi_{\infty}^i(\gamma) }{ \log \paren{1/\gamma} }, -\frac{\xi_{\infty}^i(\gamma) }{ \log \paren{1/\gamma} } } -2 \;<\; \frac{ \log \epsilon }{ \log \paren{1/\gamma} } \;<\; 0,
    \end{align*}
    which further implies that
    \begin{align*}
        \abs{  \nu_i(\gamma) } \;=\; \max \Brac{  \frac{\xi_{\infty}^i(\gamma) }{ \log \paren{1/\gamma} }, -\frac{\xi_{\infty}^i(\gamma) }{ \log \paren{1/\gamma} } } \;<\; \frac{1}{\alpha}.
    \end{align*}
\end{itemize}
Therefore, combining the results in the three cases above we obtain that
\begin{align*}
    \lim_{\gamma \rightarrow 0} \nu_i(\gamma) \;=\; \frac{1}{\alpha} \sign( \wh{s}_i )  \;=\; \begin{cases} 
    \frac{\wh{s}_i}{ \alpha \abs{\wh{s}_i} } & \wh{s}_i \not= 0, \\
    [-1/\alpha,1/\alpha] & \wh{s}_i =0,
    \end{cases}
\end{align*}
so that we have \eqref{eqn:nu-sign} holds.
\end{proof}

\section{Extra Experiments}\label{sec:extra-exp}

Due to limited space in the main body, we here provide extra results for our experiments on robust image recovery presented in Section~\ref{sec:experiment-image}.

\myparagraph{Varying corruption levels}
In Figure~\ref{fig:dip_varying_noise}, we display results of our method for robust image recovery with varying levels of salt-and-pepper corruption. 

\myparagraph{Learning curves for different test images and varying corruption levels}
In Figure~\ref{fig:different_figure} and Figure~\ref{fig:varying_p}, we provide learning curves for the results in Figure~\ref{fig:dip} and Figure~\ref{fig:dip_varying_noise}, respectively. 
While DIP-$\ell_1$ requires a case-dependent early termination to obtain the best performance, our method does not overfit and does not require any early termination. Overall our method achieves both better PSNR and visual quality, especially when noise level is high.

\begin{figure}
\captionsetup[subfigure]{labelformat=empty,font=footnotesize,skip=-0.9em}
	\begin{subfigure}{0.24\linewidth}
			\includegraphics[trim=13cm 20.2cm 2.5cm 20.5cm, clip,width=\linewidth]{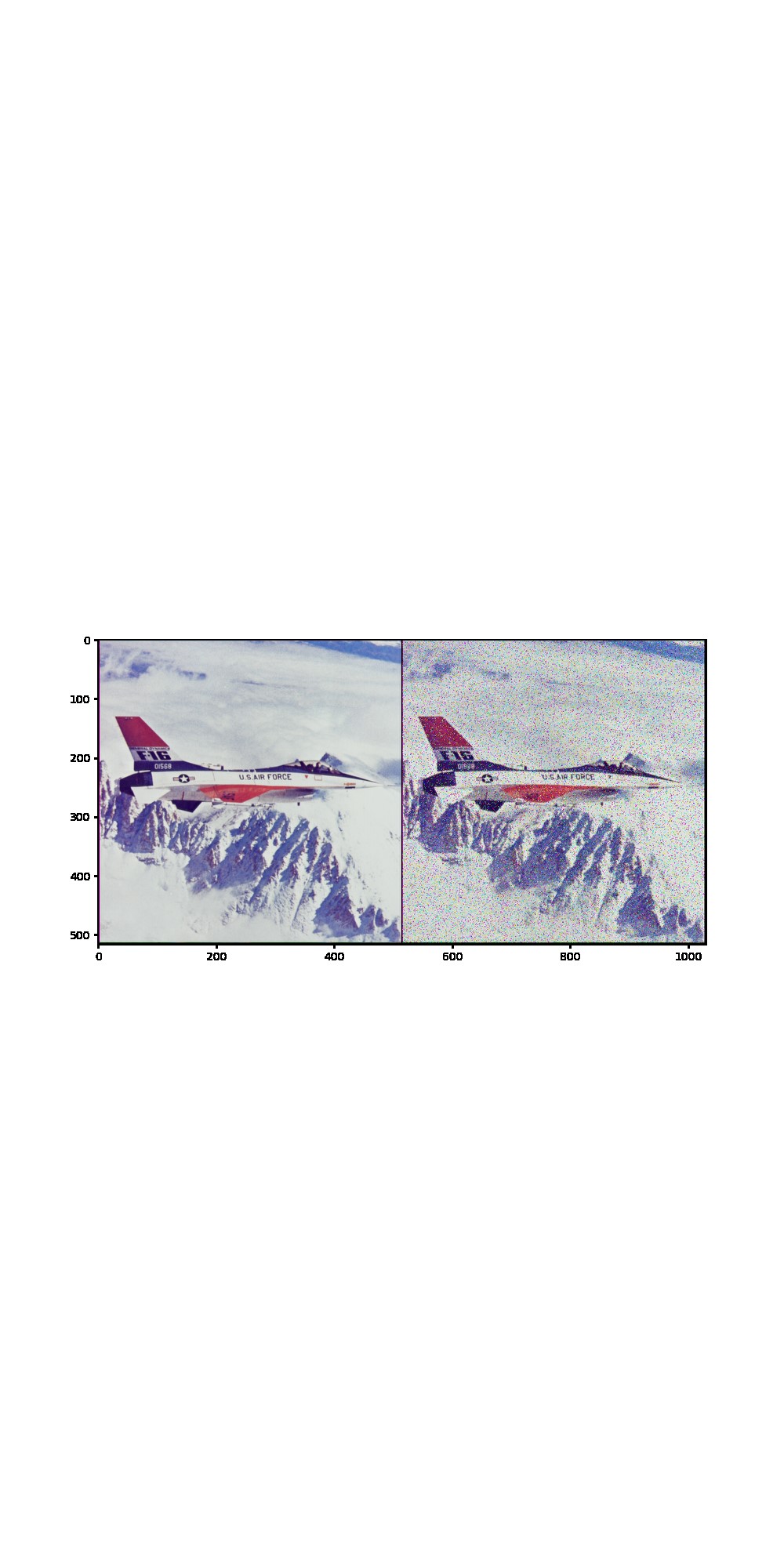}
	\end{subfigure}
	\begin{subfigure}{0.24\linewidth}
	        \includegraphics[trim=2.8cm 16.2cm 2.2cm 16.3cm, clip,width=\linewidth]{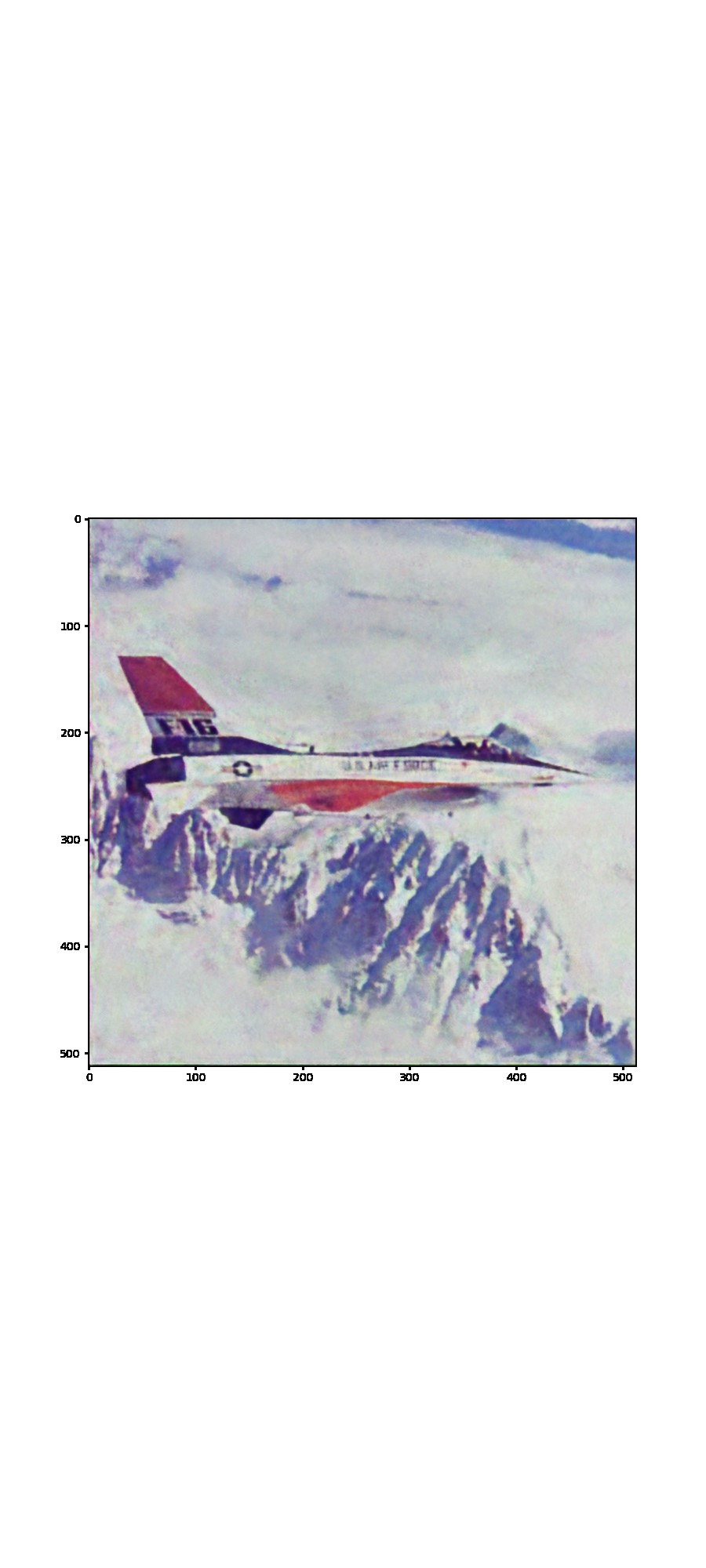}
	        \caption{PSNR $=26.19$}
	\end{subfigure}	
	\begin{subfigure}{0.24\linewidth}
	        \includegraphics[trim=2.8cm 16.2cm 2.2cm 16.3cm, clip,width=\linewidth]{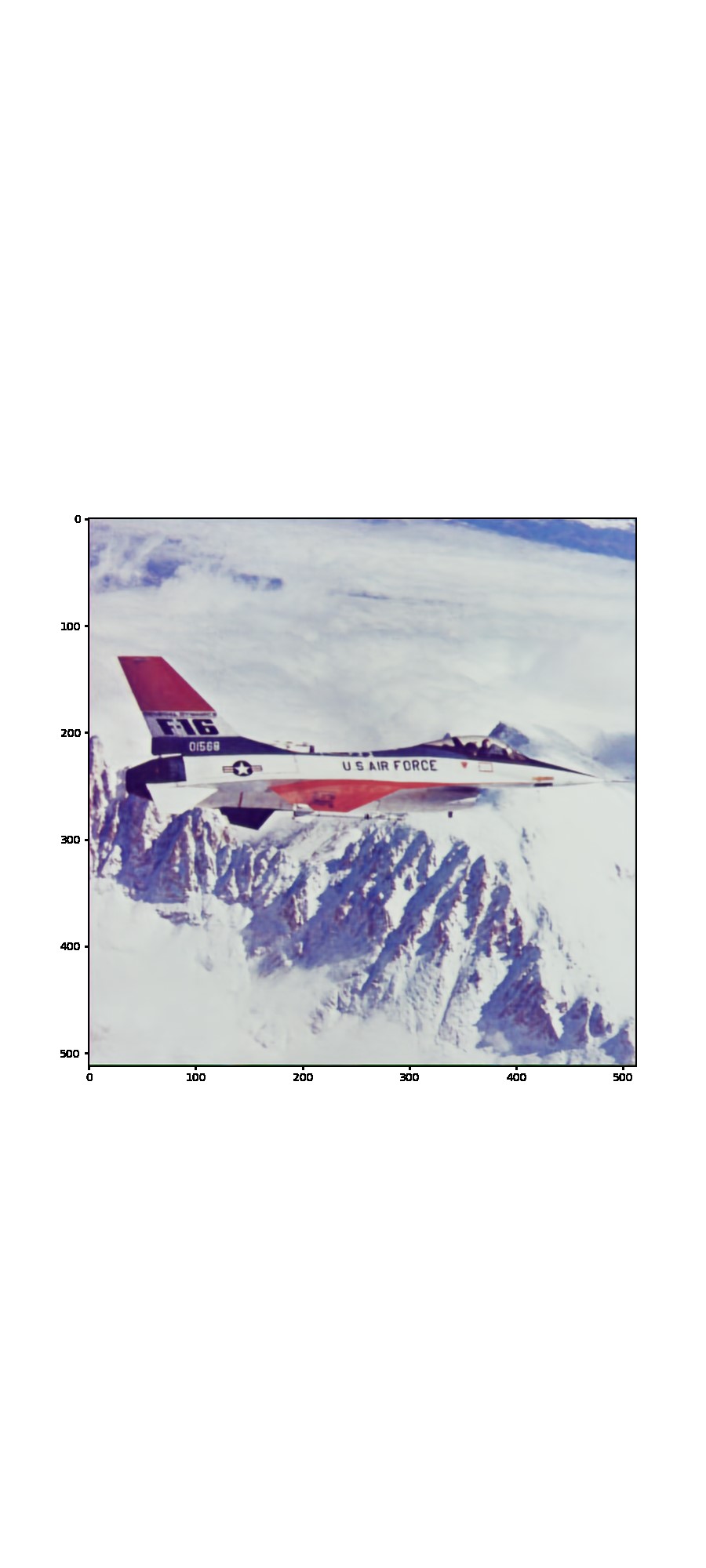}
	        \caption{PSNR $=36.15$}
	\end{subfigure}
	\begin{subfigure}{0.24\linewidth}
	        \includegraphics[trim=2.8cm 16.2cm 2.2cm 16.3cm, clip,width=\linewidth]{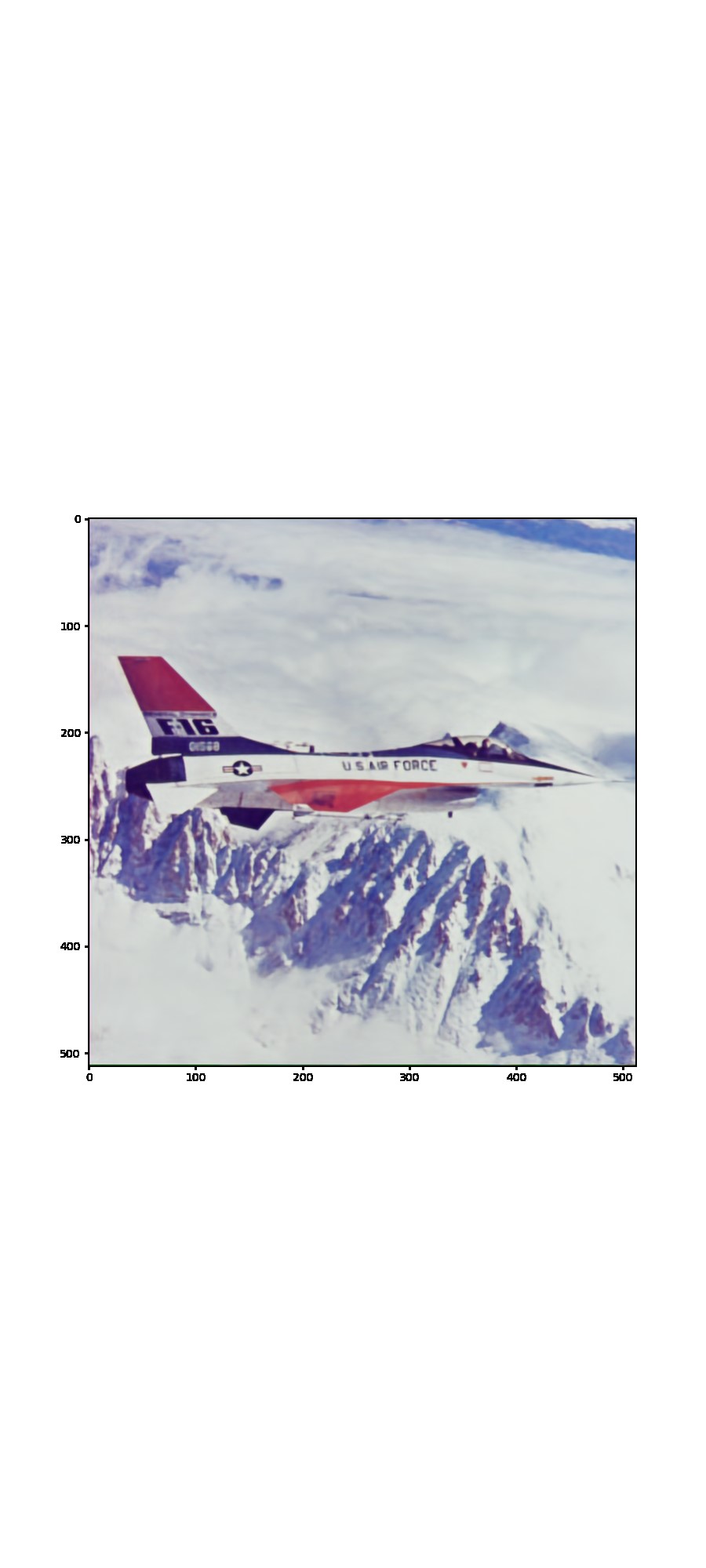}
	        \caption{PSNR $=34.91$}
	\end{subfigure}	
	\\[-0.2em]
	\begin{subfigure}{0.24\linewidth}
			\includegraphics[trim=13cm 20.2cm 2.5cm 20.5cm, clip,width=\linewidth]{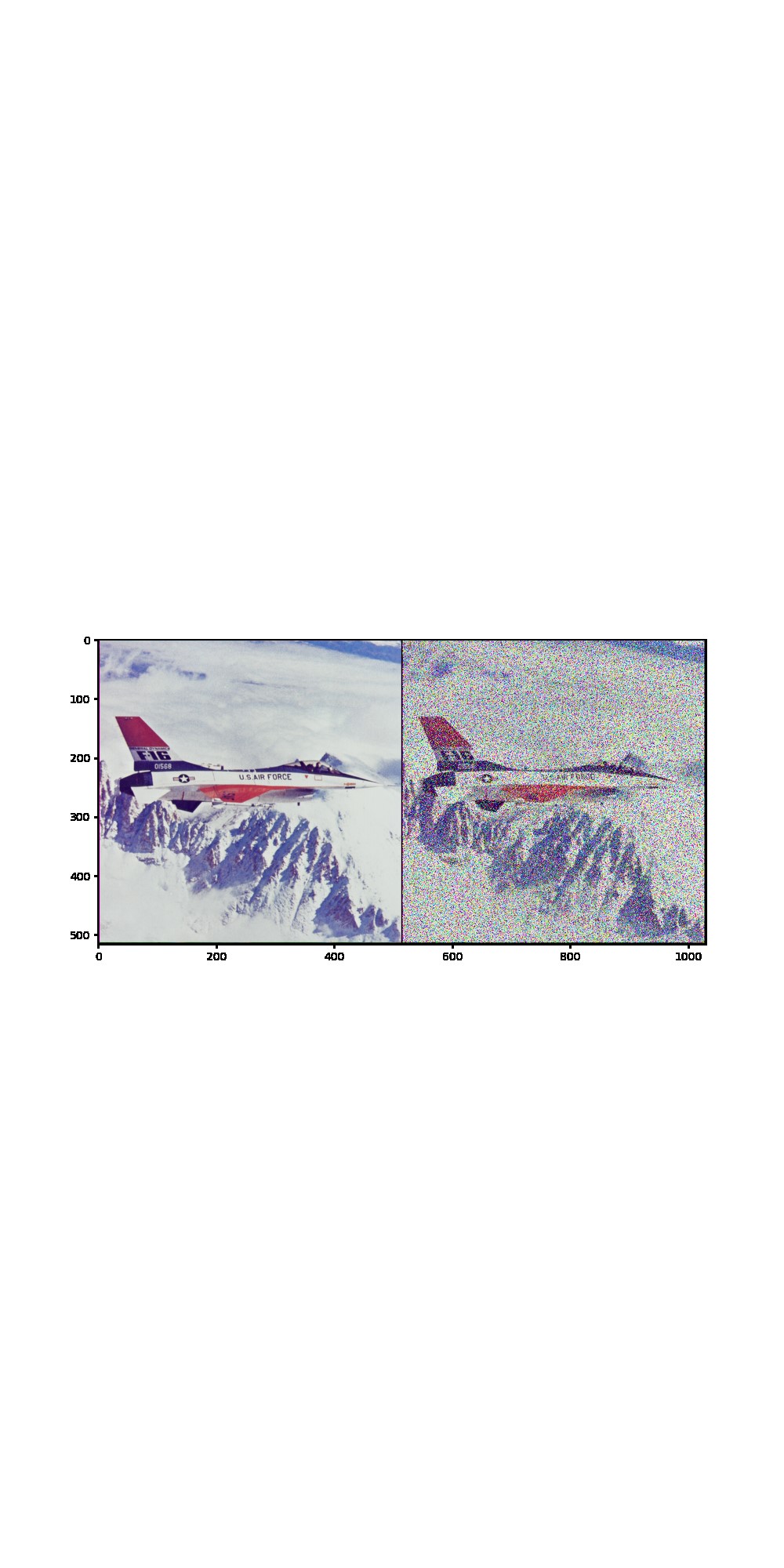}
	\end{subfigure}
	\begin{subfigure}{0.24\linewidth}
			\includegraphics[trim=2.8cm 16.2cm 2.2cm 16.3cm, clip,width=\linewidth]{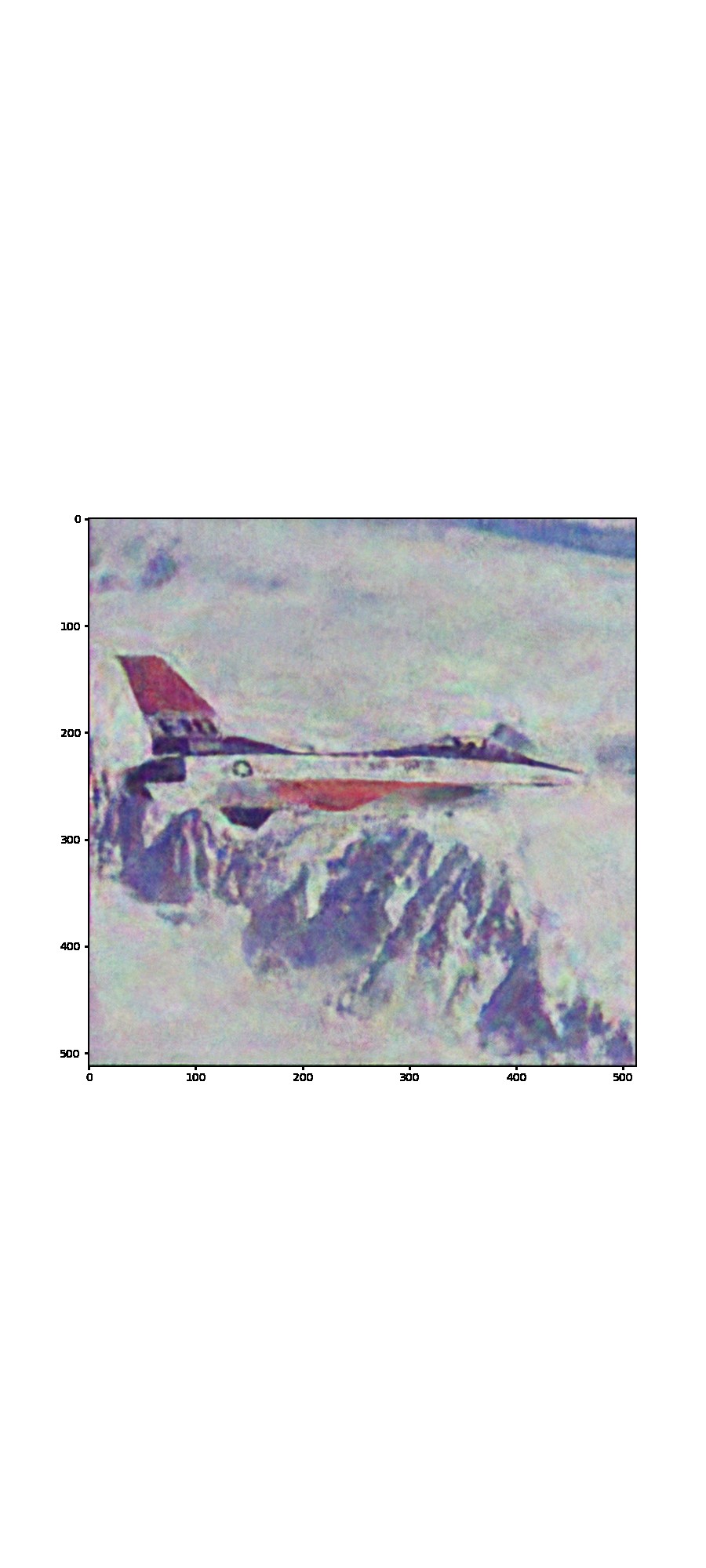}
		    \caption{PSNR $=20.15$}
	\end{subfigure}	
		\begin{subfigure}{0.24\linewidth}
			\includegraphics[trim=2.8cm 16.2cm 2.2cm 16.3cm, clip,width=\linewidth]{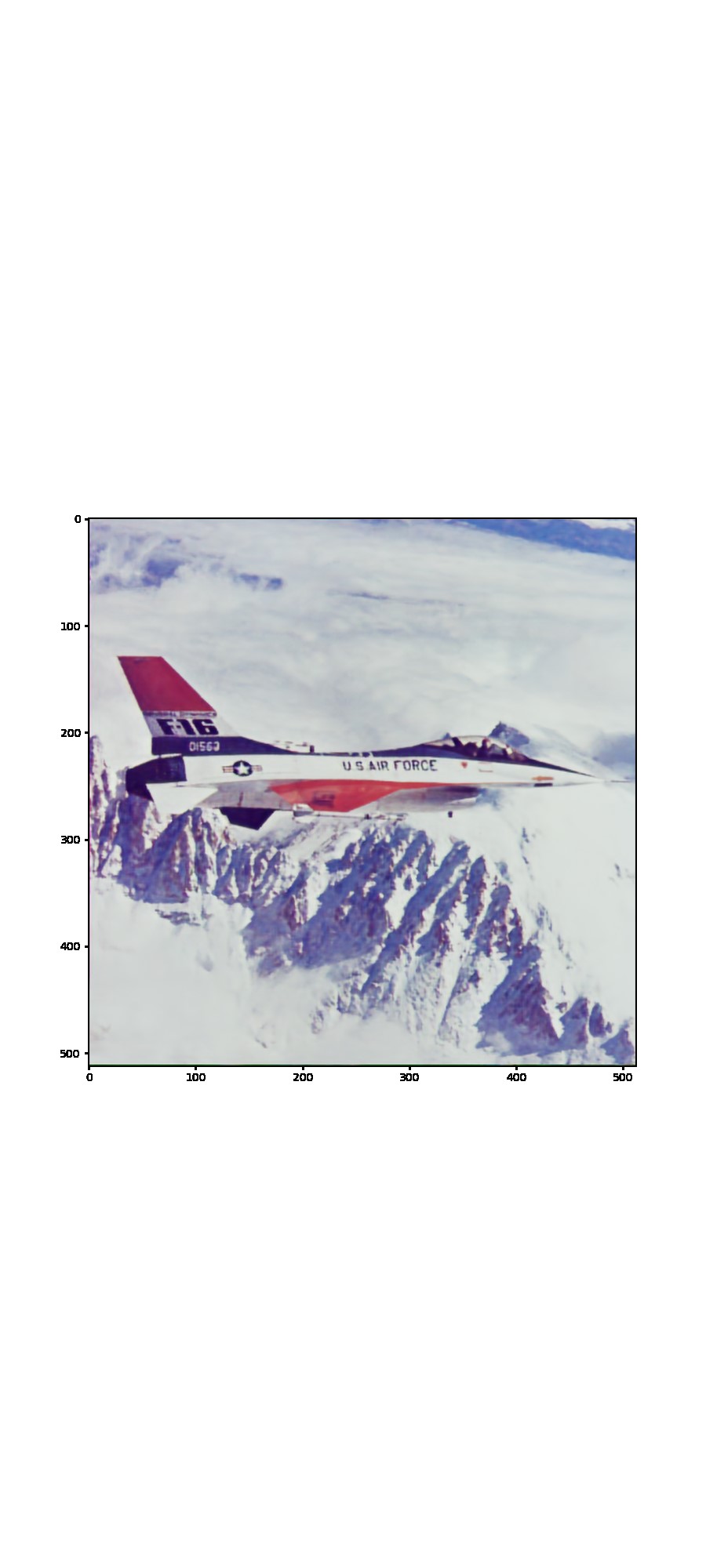}
			\caption{PSNR $=33.64$}
	\end{subfigure}
	\begin{subfigure}{0.24\linewidth}
			\includegraphics[trim=2.8cm 16.2cm 2.2cm 16.3cm, clip,width=\linewidth]{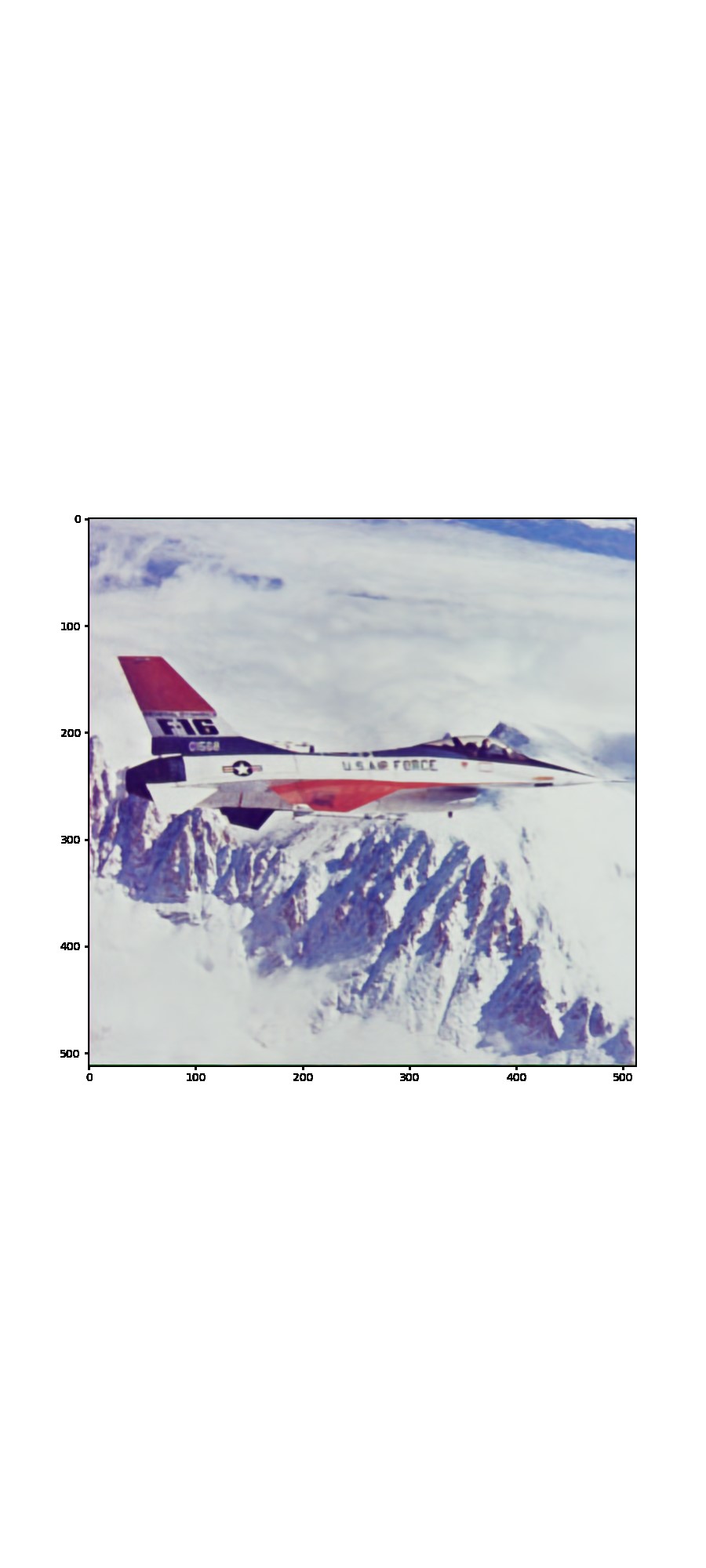}
			\caption{PSNR $=34.39$}
	\end{subfigure}	
	\\[-0.2em]
	\begin{subfigure}{0.24\linewidth}
			\includegraphics[trim=13cm 20.2cm 2.5cm 20.5cm, clip,width=\linewidth]{figs/Ours_n05_true.jpg}
	\end{subfigure}
	\begin{subfigure}{0.24\linewidth}
			\includegraphics[trim=2.8cm 16.2cm 2.2cm 16.3cm, clip,width=\linewidth]{figs/DIP_n05_best.jpg}
			\caption{PSNR $=16.46$}
	\end{subfigure}	
		\begin{subfigure}{0.24\linewidth}
			\includegraphics[trim=2.8cm 16.2cm 2.2cm 16.3cm, clip,width=\linewidth]{figs/DIPL1_n05_best.jpg}
			\caption{PSNR $=29.06$}
	\end{subfigure}
	\begin{subfigure}{0.24\linewidth}
			\includegraphics[trim=2.8cm 16.2cm 2.2cm 16.3cm, clip,width=\linewidth]{figs/Ours_n05_best.jpg}
			\caption{PSNR $=31.60$}
	\end{subfigure}	
	\\[-0.2em]
	\begin{subfigure}{0.24\linewidth}
			\includegraphics[trim=13cm 20.2cm 2.5cm 20.5cm, clip,width=\linewidth]{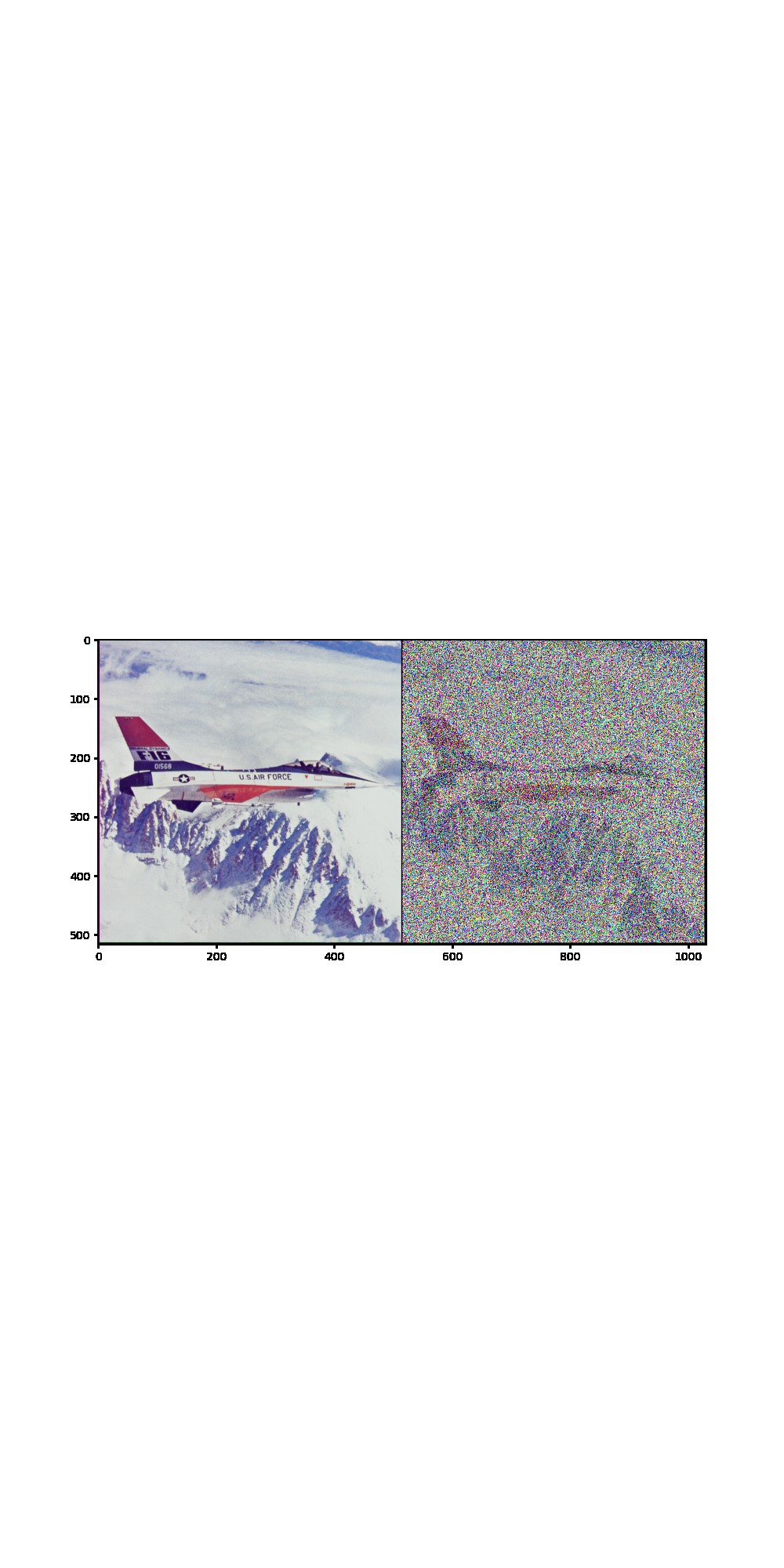}
	\end{subfigure}
	\begin{subfigure}{0.24\linewidth}
			\includegraphics[trim=2.8cm 16.2cm 2.2cm 16.3cm, clip,width=\linewidth]{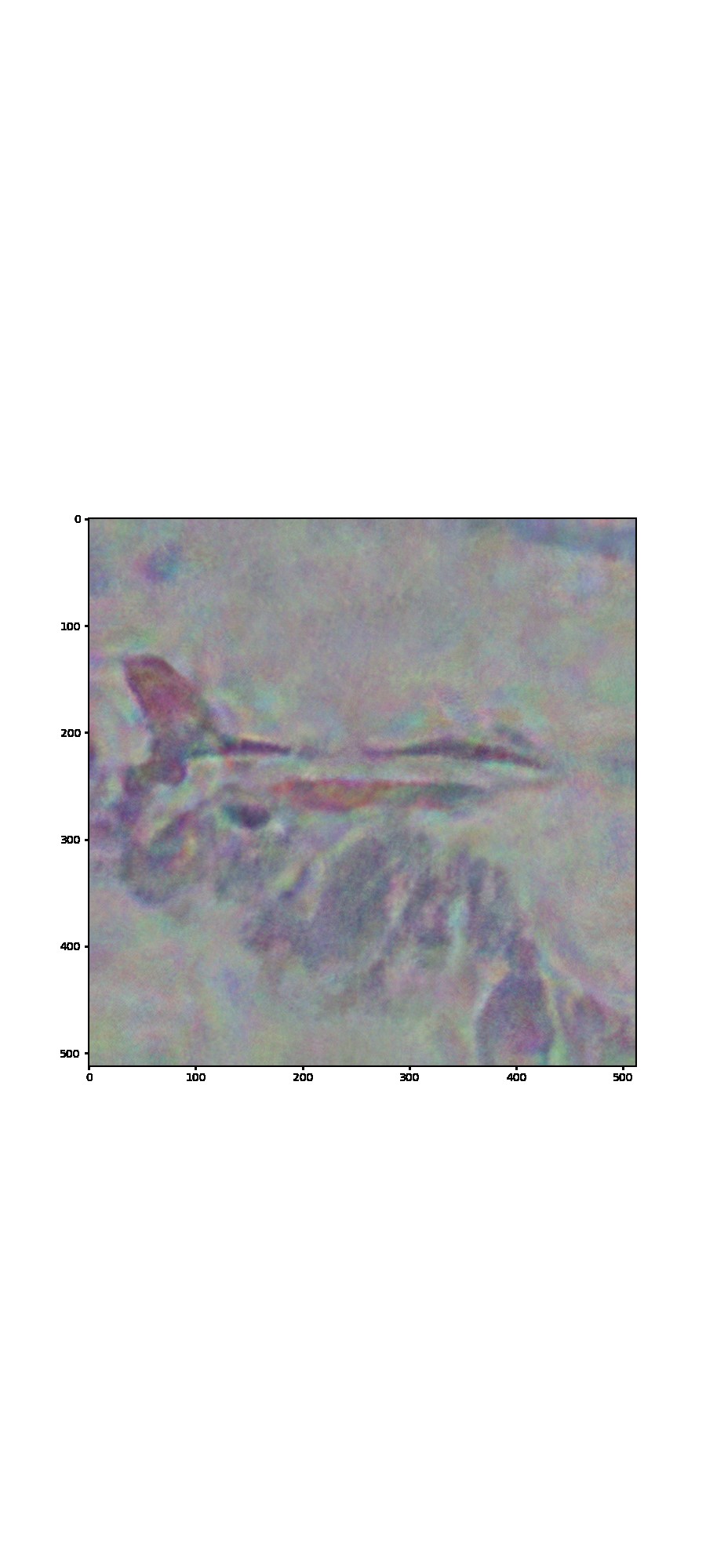}
			\caption{PSNR $=13.91$}
	\end{subfigure}	
	\begin{subfigure}{0.24\linewidth}
			\includegraphics[trim=2.8cm 16.2cm 2.2cm 16.3cm, clip,width=\linewidth]{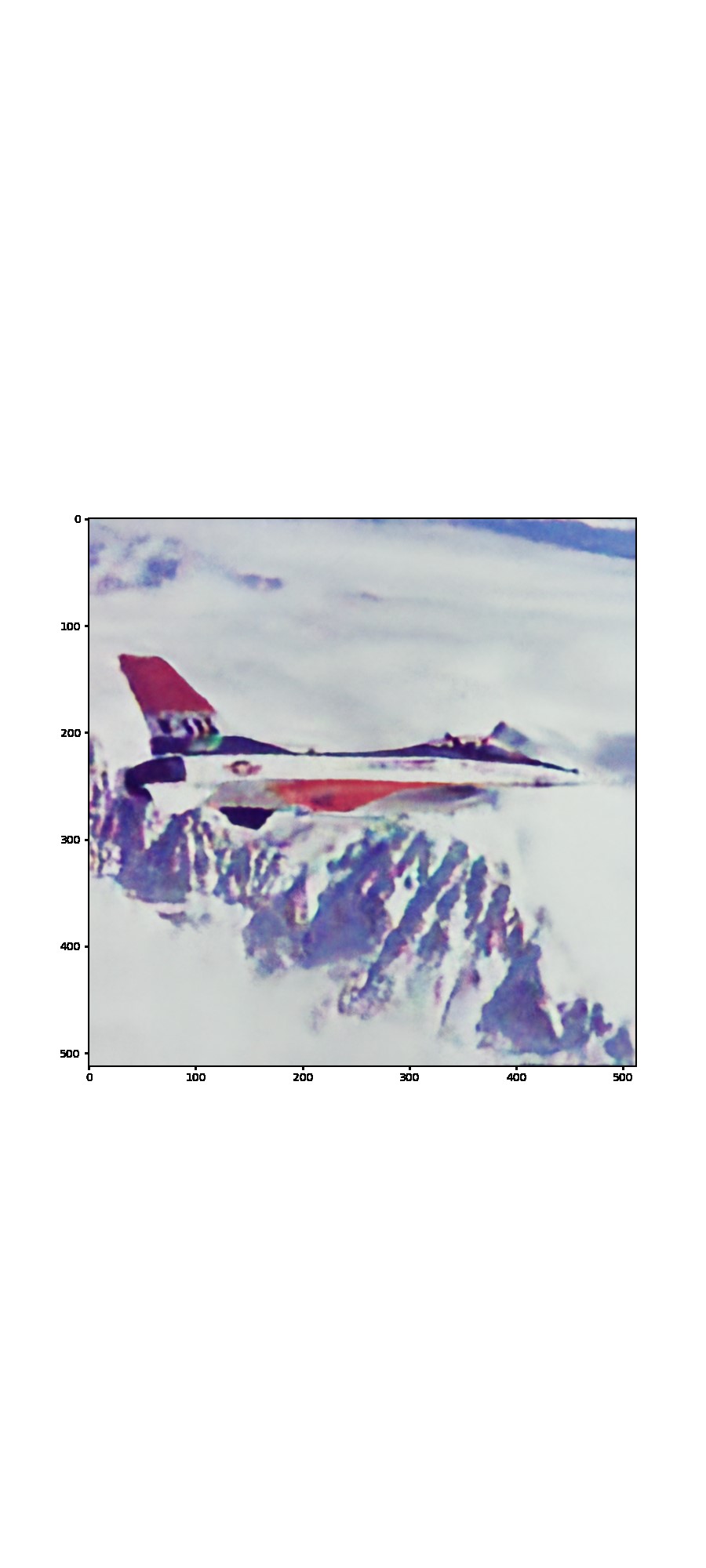}
			\caption{PSNR $=24.80$}
	\end{subfigure}
	\begin{subfigure}{0.24\linewidth}
			\includegraphics[trim=2.8cm 16.2cm 2.2cm 16.3cm, clip,width=\linewidth]{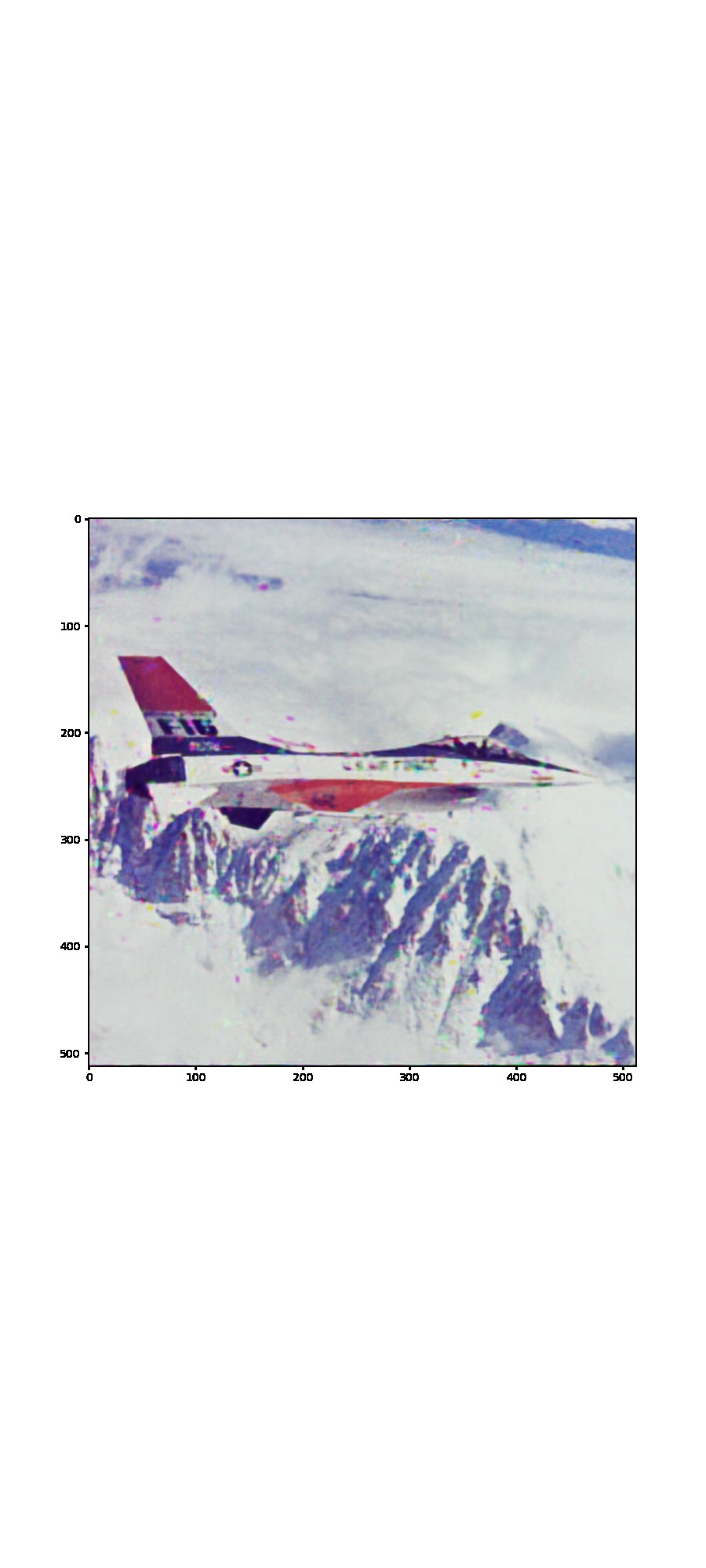}
			\caption{PSNR $=26.99$}
	\end{subfigure}
	\\
	\begin{subfigure}{0.24\linewidth}
			\caption{\small  Input}
	\end{subfigure}	
	\begin{subfigure}{0.24\linewidth}
			\caption{\small  DIP}
	\end{subfigure}	
		\begin{subfigure}{0.24\linewidth}
			\caption{\small  DIP-L1}
	\end{subfigure}	
		\begin{subfigure}{0.24\linewidth}
			\caption{\small  Ours}
	\end{subfigure}	
	\caption{\small \textbf{Robust image recovery with $10\%, 30\%, 50\%$, and $70\%$ salt-and-pepper noise. } PSNR of the results is overlaid at the bottom of the images. For our method, all cases use the same network width, learning rate, and termination condition. For DIP and DIP-$\ell_1$, case-dependent early stopping is used which is essential for their good performance. Despite that, our method achieves the highest PSNRs and best visual quality.}
	\label{fig:dip_varying_noise}
\vspace{-8pt}
\end{figure}

\begin{figure}
	\begin{subfigure}{0.45\linewidth}
		\centerline{
			\includegraphics[width=\linewidth]{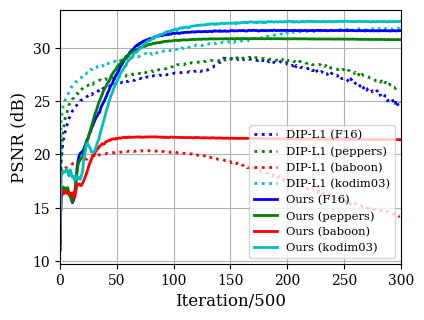}}
		\caption{Different test images}\label{fig:different_figure}
\end{subfigure}
	~~~
	\begin{subfigure}{0.45\linewidth}
		\centerline{\includegraphics[width=\linewidth]{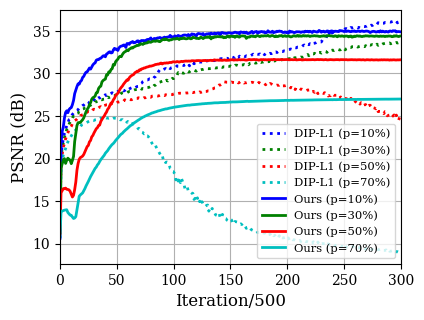}}
\caption{Varying corruption levels}\label{fig:varying_p}
	\end{subfigure}\vspace{-5pt}
	\caption{\footnotesize \textbf{Learning curves for robust image recovery with different test images (a) and varying corruption levels (b).} DIP-$\ell_1$ requires case-dependent early stopping while our method does not. 
	}\label{fig:result-learning-curves}
\vspace{-10pt}
\end{figure}

\end{appendices}

\end{document}